\renewcommand*{\backref}[1]{}
\renewcommand*{\backrefalt}[4]{({%
    \ifcase #1 Not cited.%
          \or Cited on page~#2.%
          \else Cited on pages #2.%
    \fi%
    })}
\definecolor{pearThree}{HTML}{E74C3C}
\definecolor{pearcomp}{HTML}{B97E29}
\definecolor{pearDark}{HTML}{2980B9}
\definecolor{pearDarker}{HTML}{1D2DEC}
\definecolor{HighlightColor}{gray}{0.97}
\definecolor{aliceblue}{rgb}{0.94, 0.97, 1.0}
\definecolor{palecornflowerblue}{rgb}{0.67, 0.8, 0.94}
\definecolor{paleaqua}{rgb}{0.74, 0.83, 0.9}
\definecolor{linen}{rgb}{0.98, 0.94, 0.9}
\definecolor{magnolia}{rgb}{0.97, 0.96, 1.0}
\definecolor{mistyrose}{rgb}{1.0, 0.89, 0.88}
\definecolor{piggypink}{rgb}{0.99, 0.87, 0.9}
\colorlet{colorast}{red!80!black}
\let\oldnl\nl
\newcommand{\nlnonumber}{\renewcommand{\nl}{\let\nl\oldnl}}
\theoremstyle{plain}
\newtheorem{theorem}{Theorem}[section]
\newtheorem{proposition}[theorem]{Proposition}
\newtheorem{lemma}[theorem]{Lemma}
\newtheorem{corollary}[theorem]{Corollary}
\theoremstyle{definition}
\newtheorem{definition}[theorem]{Definition}
\newtheorem{assumption}[theorem]{Assumption}
\newtheorem{example}[theorem]{Example}
\theoremstyle{remark}
\newtheorem{remark}[theorem]{Remark}
\newenvironment{hyp}[1]{%
  \begin{enumerate}[label={\sf(#1\arabic*)},resume=hyp#1]}{\end{enumerate}}%
  \crefname{hyp}{}{ass}
  \Crefname{hyp}{}{Ass}
\newcommand{\redline}[1]{{#1}}
\newcommand{\ruiline}[1]{{#1}}
\newcommand\E{\mathbb{E}}
\newcommand\R{\mathbb{R}}
\newcommand\cO{\mathcal{O}}
\newcommand\X{\mathcal{X}}
\newcommand\Y{\mathcal{Y}}
\newcommand\A{\mathcal{A}}
\newcommand\M{\mathcal{M}}
\renewcommand\S{\mathcal{S}}
\newcommand\F{\mathcal{F}}
\newcommand\D{\mathcal{D}}
\newcommand\q{q}
\newcommand\1{\mathbf{1}}
\newcommand{\cmark}{\ding{51}}
\newcommand{\dotprod}[1]{\left< #1\right>} %
\newcommand{\norm}[1]{\left\lVert #1 \right\rVert} %
\newcommand{\lnorm}[1]{\lVert #1 \rVert}
\newcommand{\Ex}[1]{\E\left[ #1 \right]} %
\newcommand{\argmin}{\mathop{\mathrm{argmin}}}  
\newcommand{\argmax}{\mathop{\mathrm{argmax}}}
\newcommand{\arcsinh}{\mathop{\mathrm{arcsinh}}}
\newcommand{\arctanh}{\mathop{\mathrm{arctanh}}}
\newcommand\KL{\mathtt{KL}}
\newcommand\poly{\mathtt{poly}}
\newcommand{\ubar}[1]{\underaccent{\bar}{#1}}
\title{A Novel Framework for Policy Mirror Descent with General Parameterization and Linear Convergence}
\author{%
  Carlo Alfano\\
  Department of Statistics\\
  University of Oxford\\
  {\scriptsize \texttt{carlo.alfano@stats.ox.ac.uk}} \\
  \And
  Rui Yuan\\
  Stellantis, France$^*$\\
 {\scriptsize \texttt{rui.yuan@stellantis.com}} \\
  \And
  Patrick Rebeschini\\
  Department of Statistics\\
  University of Oxford\\
  {\scriptsize \texttt{patrick.rebeschini@stats.ox.ac.uk}}\\
}
\newcommand\affiliations[1]{%
  \begingroup
  \renewcommand\thefootnote{}\footnote{#1}%
  \addtocounter{footnote}{-1}%
  \endgroup
}
\begin{document}

\doparttoc %
\faketableofcontents %

\maketitle

\affiliations{%
$^*$The work was done when the author was affiliated with T\'el\'ecom Paris.
\vspace*{-0.25em}
}

\begin{abstract}
	\looseness=-1
    Modern policy optimization methods in reinforcement learning, such as TRPO and PPO, owe their success to the use of parameterized policies. However, while theoretical guarantees have been established for this class of algorithms, especially in the tabular setting, the use of general parameterization schemes remains mostly unjustified. In this work, we introduce a framework for policy optimization based on mirror descent that naturally accommodates general parameterizations. The policy class induced by our scheme recovers known classes, e.g., softmax, and generates new ones depending on the choice of mirror map. Using our framework, we obtain the first result that guarantees linear convergence for a policy-gradient-based method involving general parameterization. 
    To demonstrate the ability of our framework to accommodate general parameterization schemes, we provide its sample complexity when using shallow neural networks, show that it represents an improvement upon the previous best results, and empirically validate the effectiveness of our theoretical claims on classic control tasks.
\end{abstract}

\section{Introduction}
\looseness=-1
Policy optimization is one of the most widely-used classes of algorithms for reinforcement learning (RL). Among policy optimization techniques, policy gradient (PG) methods \cite[e.g.,][]{williams1991function,RN158,konda2000actor,baxter2001infinite} are gradient-based algorithms that optimize the policy over a parameterized policy class
and have emerged as a popular class of algorithms for RL \cite[e.g.,][]{RN159,RN179,RN180,volodymyr2016asynchronous,trpo,RN215,RN270}. %

\looseness=-1
The design of gradient-based policy updates has been key to achieving empirical success in many settings, such as games \cite{berner2019dota} and autonomous driving \cite{RN142}. In particular, a class of PG algorithms that has proven successful in practice consists of building updates that include a hard constraint (e.g.,\ a trust region constraint) or a penalty term ensuring that the updated policy does not move too far from the previous one%
. Two examples of algorithms belonging to this category are trust region policy optimization (TRPO) \cite{trpo}, which imposes a Kullback-Leibler (KL) divergence \cite{kullback1951information} 
constraint on its updates, and policy mirror descent (PMD) \cite[e.g.][]{tomar2022mirror,RN270,xiao2022convergence,kuba2022mirror,vaswani2022general}, which applies mirror descent (MD) \cite{nemirovski1983problem} to RL.
\citet{shani2020adaptive} propose a variant of TRPO that is actually a special case of PMD, thus linking TRPO and PMD. \looseness = -1

\looseness=-1
From a theoretical perspective, motivated by the \redline{empirical success of %
PMD}, there is now a concerted effort to develop convergence theories for PMD methods. For instance, it has been established that PMD converges linearly to the global optimum in the tabular setting by using a geometrically increasing step-size \cite{RN270,xiao2022convergence}, by adding entropy regularization \cite{RN150}, and more generally by adding convex regularization \cite{zhan2021policy}. Linear convergence of PMD has also been established for the negative entropy mirror map in the linear function approximation regime, i.e.,\ for log-linear policies%
, either by adding entropy regularization \cite{RN280}, or by using a geometrically increasing step-size \cite{ChenMaguluri22aistats,alfano2022linear,yuan2023linear}. 
The proofs of these results are based on specific policy parameterizations, i.e.,\ tabular and log-linear, while PMD remains mostly unjustified for general policy parameterizations and mirror maps, leaving out important practical cases such as neural networks. In particular, it remains to be seen whether the theoretical results obtained for tabular policy classes transfer to this more general setting.

\looseness=-1
In this work, we introduce Approximate Mirror Policy Optimization (AMPO), a novel framework designed to incorporate general parameterization into PMD in a theoretically sound manner.  %
In summary, AMPO is an MD-based method that recovers PMD in different settings, such as tabular MDPs, is capable of generating new algorithms by varying the mirror map, and is amenable to theoretical analysis for any parameterization class. Since the MD update can be viewed as a two-step procedure, i.e., a gradient update step on the dual space and a mapping step onto the probability simplex, our starting point is to define the policy class based on this second MD step (\Cref{def:pol}). This policy class recovers the softmax policy class as a special case (\Cref{ex:entr}) and accommodates any parameterization class, such as tabular, linear, or neural network parameterizations. We then develop an update procedure for this policy class based on MD and PG.

\looseness=-1
We provide an analysis of AMPO and establish theoretical guarantees that hold for any parameterization class and any mirror map. More specifically, we show that our algorithm enjoys quasi-monotonic improvements (\Cref{prop1}), sublinear convergence when the step-size is non-decreasing, and linear convergence when the step-size is geometrically increasing (\Cref{thm1}).
To the best of our knowledge, AMPO is the first policy-gradient-based method with linear convergence that can accommodate any parameterization class. Furthermore, the convergence rates hold for any choice of mirror map.
The generality of our convergence results allows us not only to unify several current best-known results with specific policy parameterizations, i.e., tabular and log-linear, but also to achieve new state-of-the-art convergence rates with neural policies.
Tables \ref{tab:sub} and \ref{tab:linear} in Appendix \ref{sec:rel} provide an overview of our results. We also refer to Appendix \ref{sec:rel} for a thorough literature review.

The key point of our analysis is \Cref{lemma:3pdl}, which allows us to keep track of the errors incurred by the algorithm (Proposition \ref{prop1}). 
It is an application of the three-point descent lemma by \cite[Lemma 3.2]{chen1993convergence} (see also Lemma \ref{lem:3pd}),
which is possible thanks to our formulations of the policy class (Definition \ref{def:pol}) and the policy update (Line \ref{ln:proj} of Algorithm \ref{alg}).
The convergence rates of AMPO are obtained by building on \Cref{lemma:3pdl} and leveraging modern PMD proof techniques \cite{xiao2022convergence}.\looseness = -1

\looseness=-1
In addition, we show that for a large class of mirror maps, i.e., the $\omega$-potential mirror maps in \Cref{def:omega}, AMPO can be implemented in $\widetilde{\mathcal{O}}(|\A|)$ computations. We give two examples of mirror maps belonging to this class, Examples \ref{ex:l2} and \ref{ex:entr2}, that illustrate the versatility of our framework. 
Lastly, we examine the important case of shallow neural network parameterization \ruiline{both theoretically and empirically}. In this setting, we provide the sample complexity of AMPO, i.e., $\widetilde{\mathcal{O}}(\varepsilon^{-4})$ (Corollary \ref{thm:sample_nn}), and show how it improves upon previous results. \looseness = -1

\section{Preliminaries}
\label{sec:setting}

\looseness=-1
Let $\M=(\S,\A,P,r,\gamma,\mu)$ be a discounted Markov Decision Process (MDP), where $\S$ is a possibly infinite state space, $\A$ is a finite action space, $P(s' \mid s,a)$ is the transition probability from state $s$ to $s'$ under action $a$, $r(s,a) \in [0,1]$ is a reward function, $\gamma$ is a discount factor, and $\mu$ is a target state distribution.
The behavior of an agent on an MDP is then modeled by a \textit{policy} $\pi \in (\Delta(\A))^{\S}$, where $a \sim \pi(\cdot \mid s)$ is the density of the distribution over actions at state $s \in \S$, and $\Delta(\A)$ is the probability simplex over $\A$.

Given a policy $\pi$, let $V^\pi:\S\rightarrow\R$ denote the associated \textit{value function}. Letting $s_t$ and $a_t$ be the current state and action at time $t$, the value function $V^\pi$ is defined as the expected discounted cumulative reward with the initial state $s_0=s$, namely,
\[V^\pi(s) := \mathbb{E}_{a_t \sim \pi(\cdot \mid s_t), s_{t+1}\sim P(\cdot \mid s_t,a_t)}
\Big[\sum\nolimits_{t=0}^{\infty}\gamma^t r(s_t,a_t) \mid \pi, s_0 = s\Big].\]
\looseness=-1
Now letting $V^\pi(\mu) := \E_{s\sim\mu}[V^\pi(s)]$, our objective is for the agent to find an optimal policy 
\begin{equation}
    \label{eq:opt}
    \pi^\star\in \argmax\nolimits_{\pi\in(\Delta(\A))^\S} V^\pi(\mu).
\end{equation}
As with the value function, for each pair $(s,a)\in\S\times\A$, the state-action value function, or \textit{Q-function}, associated with a policy $\pi$ is defined as
\[
Q^\pi(s, a) := \E_{a_t \sim \pi(\cdot \mid s_t), s_{t+1}\sim P(\cdot \mid s_t,a_t)}
\Big[\sum\nolimits_{t=0}^\infty \gamma^t r(s_t, a_t) \mid \pi,  s_0=s,a_0=a\Big].
\]
We also define the discounted state visitation distribution by
\begin{align} \label{eq:d_mu}
    d_{\mu}^\pi(s) := (1-\gamma)\E_{s_0 \sim \mu}\Big[\sum\nolimits_{t=0}^{\infty}\gamma^t P(s_t = s \mid \pi, s_0)\Big],
\end{align}
\looseness=-1
where $P(s_t = s \mid \pi, s_0)$ represents the probability of the agent being in state $s$ at time $t$ when following policy $\pi$ and starting from $s_0$. The probability $d_{\mu}^\pi(s)$ represents the time spent on state $s$ when following policy $\pi$.

The gradient of the value function $V^\pi(\mu)$ with respect to the policy is given by the policy gradient theorem (PGT)~\cite{RN158}:
\hspace{-.1in}
\begin{equation}
    \label{eq:grad}
    \nabla_s V^\pi(\mu):=\frac{\partial V^\pi(\mu)}{\partial \pi(\cdot \mid s)} = \frac{1}{1-\gamma}d_\mu^\pi(s)Q^\pi(s,\cdot).
\end{equation}

\subsection{Mirror descent}
\label{sec:mirr}

The first tools we recall from the mirror descent (MD) framework are mirror maps and Bregman divergences \cite[Chapter 4]{RN186}. 
Let $\Y\subseteq\R^{|\A|}$ be a convex set.
A \textit{mirror map} $h: \Y \rightarrow \R$ is a strictly convex, continuously differentiable and essentially smooth function\footnote{$h$ is essentially smooth if $\lim_{x\rightarrow\partial\Y}\norm{\nabla h(x)}_2 = +\infty$, where $\partial\Y$ denotes the boundary of $\Y$.} such that $\nabla h(\Y)=\R^{|\A|}$.
The convex conjugate of $h$, denoted by $h^\ast$, is given by
\[h^\ast(x^\ast) := \sup\nolimits_{x\in \Y}\langle x^\ast,x\rangle-h(x), \quad x^\ast \in \R^{|\A|}.\]
The gradient of the mirror map $\nabla h : \Y \rightarrow \R^{|\A|}$ allows to map objects from the primal space $\Y$ to its dual space $\R^{|\A|}$, $x \mapsto \nabla h(x)$, and viceversa for $\nabla h^\ast$, i.e.,\ $x^\ast \mapsto \nabla h^\ast(x^\ast)$. 
In particular, from $\nabla h(\Y)=\R^{|\A|}$, we have: for all $(x,x^\ast)\in\Y\times\R^{|\A|}$,
\begin{align} \label{eq:id}
x = \nabla h^\ast(\nabla h(x)) \quad \mbox{and} \quad x^\ast = \nabla h(\nabla h^\ast(x^\ast)). 
\end{align}
Furthermore, the mirror map $h$ induces a \emph{Bregman divergence} \cite{bregman1967relaxation,censor1997parallel}
, defined as
\[\D_h(x, y) := h(x) - h(y) - \langle\nabla h(y), x - y\rangle,\]
where $\D_h(x, y)\geq 0$ for all $x,y\in \Y$. 
We can now present the standard MD algorithm \cite{nemirovski1983problem,RN186}. Let $\X\subseteq\Y$ be a convex set and $V:\X\rightarrow\R$ be a differentiable function. %
The MD algorithm can be formalized\footnote{See a different formulation of MD in \eqref{eq:singleup} and in Appendix \ref{app:equiv} (Lemma \ref{lem:equiv}).}
as the following iterative procedure in order to solve the minimization problem $\min_{x\in\X}V(x)$: for all $t\geq 0$,
\hspace{-.in}
\begin{align}
    y^{t+1} &= \nabla h(x^t)-\eta_t \nabla V(x)|_{x = x^t}, \label{eq:mirrupd1}\\
    x^{t+1} &= \text{Proj}_\X^h(\nabla h^\ast(y^{t+1})), \label{eq:mirrupd2}
\end{align}
where $\eta_t$ is set according to a step-size schedule $(\eta_t)_{t\geq 0}$ and $\text{Proj}_{\X}^h(\cdot)$ is the \emph{Bregman projection}
\begin{align}
    \label{eq:BregmanProj}   
    \text{Proj}_{\X}^h(y) := \argmin\nolimits_{x\in \X}\D_h(x,y). 
\end{align}
Precisely, at time $t$, $x^t\in\X$ is mapped to the dual space through $\nabla h(\cdot)$, where a gradient step is performed as in \eqref{eq:mirrupd1} to obtain $y^{t+1}$. The next step is to map $y^{t+1}$ back in the primal space using $\nabla h^*(\cdot)$. In case $\nabla h^*(y^{t+1})$ does not belong to $\X$, it is projected as in \eqref{eq:mirrupd2}.

\section{Approximate Mirror Policy Optimization}
\label{sec:algo}
The starting point of our novel framework is the introduction of a novel parameterized policy class based on the Bregman projection expression recalled in \eqref{eq:BregmanProj}.  
\begin{definition}%
    \label{def:pol}
    \looseness=-1
    Given a parameterized function class $\F^\Theta = \{f^\theta: \S\times\A\rightarrow\R, \theta\in\Theta\}$, a mirror map $h:\Y\rightarrow\R$, where $\Y\subseteq\R^{|\A|}$ is a convex set with $\Delta(\A)\subseteq\Y$, and $\eta>0$, the \emph{Bregman projected policy} class associated with $\F^\Theta$ and $h$ consists of all the policies of the form:
    \begin{align*}
        \big\{\pi^\theta : \pi^\theta_s =\text{Proj}_{\Delta(\A)}^h(\nabla h^\ast (\eta f^\theta_s)), ~s \in \S;~ \theta \in \Theta \big\},    
    \end{align*}
\looseness=-1
where for all $s \in \S$, $\pi^\theta_s, f^\theta_s \in \R^{|\A|}$ denote vectors $[\pi^\theta(a \mid s)]_{a \in \A}$ and $[f^\theta(s,a)]_{a \in \A}$, respectively.
\end{definition}

\looseness=-1
In this definition, the policy is induced by a mirror map $h$ and a parameterized function $f^\theta$, and is obtained by mapping $f^\theta$ to $\Y$ with the operator $\nabla h^*(\cdot)$, which may not result in a well-defined probability distribution, and is thus projected on the probability simplex $\Delta(\A)$. %
Note that the choice of $h$ is key in deriving convenient expressions for $\pi^\theta$. 
The Bregman projected policy class contains large families of policy classes. 
Below is an example of $h$ that recovers a widely used policy class~\cite[Example 9.10]{beck2017first}. %

\begin{example}[Negative entropy]
    \label{ex:entr}
    \looseness=-1
    If $\Y = \R^{|\A|}_+$ and $h$ is the negative entropy mirror map, i.e.,\ $h(\pi(\cdot | s))\hspace{-0.05cm}= \hspace{-0.07cm}\sum_{a\in\A} \pi(a | s)\log(\pi(a | s))$, then the associated Bregman projected policy class becomes
    \hspace{-.6in}
    \begin{align} \label{eq:common_policy}
        \Big\{\pi^\theta : \pi^\theta_s = \frac{\exp( \eta f^\theta_s)}{\norm{\exp(\eta f^\theta_s)}_1}, ~ s \in \S;~ \theta \in \Theta \Big\},
    \end{align}
    where the exponential and the fraction are element-wise and $\norm{\cdot}_1$ is the $\ell_1$ norm. 
    In particular, when $f^\theta(s,a) = \theta_{s,a}$, the policy class \eqref{eq:common_policy} becomes the tabular softmax policy class; when $f^\theta$ is a linear function, \eqref{eq:common_policy} becomes the log-linear policy class; and when $f^\theta$ is a neural network, \eqref{eq:common_policy} becomes the neural policy class defined by \citet[]{agarwal2021theory}. We refer to Appendix \ref{app:entr} for its derivation.
    \looseness=-1
\end{example}
\looseness=-1
We now construct an MD-based algorithm to optimize $V^{\pi^\theta}$ over the Bregman projected policy class associated with a mirror map $h$ and a parameterization class $\F^\Theta$ by adapting \Cref{sec:mirr} to our setting. 
We define the following shorthand:
at each time $t$, let $\pi^t := \pi^{\theta^t}$, $f^t := f^{\theta^t}$, $V^t:=V^{\pi^t}$, $Q^t := Q^{\pi^t}$, and $d^t_\mu := d^{\pi^t}_\mu$. Further, for any function  $y:\S\times\A\rightarrow \R$ and distribution $v$ over $\S\times\A$, let $y_s := y(s,\cdot)\in \R^{|\A|}$ and $\norm{y}_{L_2(v)}^2=\E_{(s,a) \sim v}[(y(s,a))^2]$. Ideally, we would like to execute the exact MD algorithm: for all $t\geq 0$ and for all $s\in\S$,
\begin{align}
    f_s^{t+1}& = \nabla h(\pi^t_s) + \eta_t (1-\gamma)\nabla_s V^t(\mu)/d_\mu^t(s) 
    \overset{\eqref{eq:grad}}{=} \nabla h(\pi^t_s) + \eta_t Q_s^t, \footnotemark{} \label{eq:mirr1} \\
    \pi^{t+1}_s &= \text{Proj}_{\Delta(\A)}^h(\nabla h^\ast(\eta_t f^{t+1}_s)). \label{eq:mirr2}
\end{align}
\footnotetext{The update is \eqref{eq:mirrupd1} up to a scaling $(1 - \gamma)/d_\mu^t(s)$ of $\eta_t$.}
\looseness=-1
\hspace{-.13in} Here, \eqref{eq:mirr2} reflects our Bregman projected policy class \ref{def:pol}. %
However, we usually cannot perform the update \eqref{eq:mirr1} exactly. In general, %
if $f^\theta$ belongs to a parameterized class $\F^\Theta$, there may not be any $\theta^{t+1} \in \Theta$ such that \eqref{eq:mirr1} is satisfied for all $s\in\S$.

\looseness=-1
To remedy this issue, we propose Approximate Mirror Policy Optimization (AMPO), described in Algorithm \ref{alg}. At each iteration, AMPO consists in minimizing a surrogate loss and projecting the result onto the simplex to obtain the updated policy. In particular, the surrogate loss in Line~\ref{ln:actor} of Algorithm \ref{alg} is a standard regression problem where we try to approximate $Q^t+\eta_t^{-1}\nabla h(\pi^t)$ with $f^{t+1}$, and has been studied extensively when $f^\theta$ is a neural network \cite{allen2019learning,goodfellow2016deep}.  We can then readily use \eqref{eq:mirr2} to update $\pi^{t+1}$ within the Bregman projected policy class defined in \ref{def:pol}, which gives Line~\ref{ln:proj} of Algorithm \ref{alg}.

\begin{algorithm}[t]
    \caption{Approximate Mirror Policy Optimization}
    \label{alg}
    \nlnonumber \textbf{Input:} Initial policy $\pi^0$, mirror map $h$, parameterization class $\F^\Theta$, iteration number $T$, step-size schedule $(\eta_t)_{t\geq 0}$, state-action distribution sequence $(v^t)_{t\geq 0}$. %

    \nlnonumber\For{$t=0$ {\bfseries to} $T-1$}{
        
        Obtain $\theta^{t+1}\in\Theta$ such that
        $\theta^{t+1}\in\argmin_{\theta \in \Theta}\big\|f^{\theta} - Q^t - \eta_t^{-1} \nabla h(\pi^t)\big\|_{L_2(v^t)}^2$.\footnotemark{}
        \label{ln:actor}
        
        Update %
        $ \pi^{t+1}_s \in \argmin\limits_{p \in \Delta(\A)}\D_h(p, \nabla h^\ast(\eta_t f^{\theta^{t+1}}_s)) = \text{Proj}_{\Delta(\A)}^h(\nabla h^\ast(\eta_t f^{t+1}_s)), \ \forall s\in\S.$
        \label{ln:proj}
    }
\end{algorithm}
\footnotetext{With a slight abuse of notation, denote $\nabla h(\pi^t)(s,a)$ as ${[\nabla h(\pi^t_s)]}_a$.}

To better illustrate the novelty of our framework, we provide below two remarks on how the two steps of AMPO relate and improve upon previous works.
\begin{remark}
    \label{rem:compatible_fun_approx}
    Line~\ref{ln:actor} associates AMPO with the \emph{compatible function approximation} framework developed by \cite{RN158,RN159,agarwal2021theory}, as both frameworks define the updated parameters $\theta^{t+1}$ as the solution to a regression problem aimed at approximating the current $Q$-function $Q^t$. A crucial difference is that,
    \citet{agarwal2021theory} approximate $Q^t$ linearly with respect to $\nabla_\theta \log \pi^t$ (see \eqref{eq:comp_fun}), 
    while in Line~\ref{ln:actor} we approximate $Q^t$ and the gradient of the mirror map of the previous policy with any function $f^\theta$. 
    \ruiline{This generality allows our algorithm to achieve approximation guarantees for a wider range of assumptions on the structure of $Q^t$.} %
    Furthermore, the regression problem proposed by \citet{agarwal2021theory} depends on the distribution $d_\mu^t$, while ours has no such constraint and allows off-policy updates involving an arbitrary distribution $v^t$. 
    See Appendix \ref{app:comp_fun} for more details.
\looseness = -1
\end{remark}

\begin{remark}
    \label{rem:bregman_projection}
    Line \ref{ln:proj} associates AMPO to previous approximations of PMD \cite{vaswani2022general, tomar2022mirror}. For instance, \citet{vaswani2022general} aim to maximize an expression equivalent to
    \begin{equation}
        \label{eq:singleup}
        \pi^{t+1} \in \argmax\nolimits_{\pi^\theta\in \Pi(\Theta)} \E_{s\sim d^t_\mu} [ \eta_t\langle Q^t_s, \pi^\theta_s\rangle - \D_h(\pi^\theta_s, \pi_s^t)],
    \end{equation}
    where $\Pi(\Theta)$ is a given parameterized policy class, while the Bregman projection step of AMPO can be rewritten as 
    \begin{align} \label{eq:bregman2}
    \pi^{t+1}_s \in \argmax\nolimits_{p \in \Delta(\mathcal{A})} \langle \eta_t f_s^{t+1} - \nabla h(\pi^t_s), p \rangle - \mathcal{D}_h(p, \pi^t_s), \quad \forall s \in \mathcal{S}.
    \end{align}
    We formulate this result as Lemma \ref{lem:bregman} in Appendix~\ref{app:bregman_projection} with a proof. When the policy class $\Pi(\Theta)$ is the entire policy space $\Delta(\A)^\S$, \eqref{eq:singleup} is equivalent to the two-step procedure \eqref{eq:mirr1}-\eqref{eq:mirr2} thanks to the PGT \eqref{eq:grad}. 
    A derivation of this observation is given in Appendix~\ref{app:equiv} for completeness. The issue with the update in~\eqref{eq:singleup}, which is overcome by \eqref{eq:bregman2}, is that $\Pi(\Theta)$ in \eqref{eq:singleup} is often a non-convex set, thus the three-point-descent lemma \cite{chen1993convergence} cannot be applied. The policy update in ~\eqref{eq:bregman2} circumvents this problem by defining the policy implicitly through the Bregman projection, which is a convex problem and thus allows the application of the three-point-descent lemma \cite{chen1993convergence}. We refer to Appendix \ref{app:bregman_projection} for the conditions of the three-point-descent lemma in details.
\end{remark}

\subsection{\texorpdfstring{$\omega$}{w}-potential mirror maps}
\label{sec:omega}
In this section, we provide a class of mirror maps that allows to compute the Bregman projection in Line~\ref{ln:proj} with $\widetilde{\mathcal{O}}(|\A|)$ operations and simplifies the minimization problem in Line~\ref{ln:actor}. \looseness = -1

\begin{definition}[$\omega$-potential mirror map \cite{krichene2015efficient}]
    \label{def:omega}
    For $u\in(-\infty,+\infty]$, $\omega\leq0$, let an \emph{$\omega$-potential} be an increasing $C^1$-diffeomorphism $\phi:(-\infty,u)\rightarrow(\omega,+\infty)$ such that 
    \looseness = -1
    \hspace{-.2in}
    \[\lim_{x\rightarrow-\infty}\phi(x)=\omega,\qquad \lim_{x\rightarrow u}\phi(x)=+\infty, \qquad\int_0^{1}\phi^{-1}(x)dx\leq\infty.\]
    \looseness=-1
    For any $\omega$-potential $\phi$, the associated mirror map $h_\phi$, called \emph{$\omega$-potential mirror map}, is defined as 
    \hspace{-.3in}
    \[h_\phi(\pi_s) = \sum\nolimits_{a\in\A}\int_1^{\pi(a \mid s)}\phi^{-1}(x)dx.\]
\end{definition}
\looseness=-1
Thanks to \citet[Proposition 2]{krichene2015efficient} (see Proposition \ref{prop:proj} as well), the policy $\pi^{t+1}$ in Line~\ref{ln:proj} induced by the $\omega$-potential mirror map can be obtained with $\widetilde{\mathcal{O}}(|\A|)$ computations and can be written as
\begin{equation} \label{eq:pi_t}
    \pi^{t+1}(a \mid s)= \sigma(\phi(\eta_{t} f^{t+1}(s,a)+\lambda_s^{t+1})), \quad \forall s\in\S, a\in\A,
\end{equation}
where $\lambda_s^{t+1}\in\R$ is a normalization factor to ensure $\sum_{a \in \A}\pi^{t+1}(a \mid s)=1$ for all $s\in\S$, and $\sigma(z) = \max(z,0)$ for $z\in\R$. %
We call this policy class the \emph{$\omega$-potential policy} class. 
By using \eqref{eq:pi_t} and the definition of the $\omega$-potential mirror map $h_\phi$, the minimization problem in Line~\ref{ln:actor} is simplified to be
\begin{equation}
    \label{eq:pot_update}
    \theta^{t+1}\in\argmin\nolimits_{\theta \in \Theta}\norm{f^{\theta} - Q^t-\eta_t^{-1}\max(\eta_{t-1} f^t, \phi^{-1}(0)-\lambda_s^{t})}_{L_2(v^t)}^2,
\end{equation}
where $\max(\cdot, \cdot)$ is applied element-wisely.
The $\omega$-potential policy class allows AMPO to generate a wide range of applications by simply choosing an $\omega$-potential $\phi$. 
In fact, it recovers existing approaches to policy optimization, as we show in the next two examples.
\looseness = -1

\begin{example}[Squared $\ell_2$-norm]
    \label{ex:l2}
    \looseness = -1
    If $\Y = \R^{|\A|}$ and $\phi$ is the identity function, then $h_\phi$ is the squared $\ell_2$-norm, that is $h_\phi(\pi_s) = \norm{\pi_s}^2_2/2$, and $\nabla h_\phi$ is the identity function. So, Line~\ref{ln:actor} in Algorithm \ref{alg} becomes
    \begin{equation}
        \label{eq:pQd}
        \theta^{t+1}\in\argmin\nolimits_{\theta \in \Theta}\norm{f^\theta - Q^t -\eta_t^{-1}\pi^t}_{L_2(v^t)}^2.
    \end{equation}
    The $\nabla h_\phi^\ast$ also becomes the identity function, and the policy update is given for all $s\in\S$ by
    \begin{align}
        \label{eq:pQd2}
        \pi^{t+1}_s &=\text{Proj}_{\Delta(\A)}^{h_\phi}(\eta_tf^{t+1}_s) = \argmin_{\pi_s \in \Delta(\A)}\norm{\pi_s - \eta_tf^{t+1}_s}_2^2,
    \end{align}
    which is the Euclidean projection on the probability simplex.
    In the tabular setting, where $\S$ and $\A$ are finite and $f^\theta(s,a) = \theta_{s,a}$, \eqref{eq:pQd} can be solved exactly with the minimum equal to zero, and \cref{eq:pQd,eq:pQd2} recover the projected Q-descent algorithm \cite{xiao2022convergence}.
    \redline{As a by-product, we generalize the projected Q-descent algorithm from the tabular setting to a general parameterization class $\F^\Theta$, which is a novel algorithm in the RL literature.}
\end{example}
\begin{example}[Negative entropy]
    \label{ex:entr2}
    If $\Y = \R^{|\A|}_+$ and $\phi(x) = \exp(x-1)$, then $h_\phi$ is the negative entropy mirror map from Example \ref{ex:entr} and Line~\ref{ln:actor} in Algorithm \ref{alg} becomes
    \hspace{-.1in}
    \begin{equation}
        \label{eq:trpo}
        \theta^{t+1}\in\argmin\nolimits_{\theta \in \Theta}\Big\|f^{t+1}- Q^t-\frac{\eta_{t-1}}{\eta_t}f^t\Big\|_{L_2(v^t)}^2.
    \end{equation}
    \looseness=-1
    Consequently, based on \Cref{ex:entr}, we have $\pi^{t+1}_s \propto \exp(\eta_t f^{t+1}_s)$ for all $s\in\S$. In this example, AMPO recovers tabular NPG \cite{shani2020adaptive} when $f^\theta(s,a) = \theta_{s,a}$, and Q-NPG with log-linear policies \cite{yuan2023linear} when $f^\theta$ and $Q^t$ are linear functions for all $t\geq 0$. 
        
\end{example}

We refer to Appendix \ref{app:pot} for detailed derivations of the examples in this section and an efficient implementation of the Bregman
projection step.
\looseness = -1
In addition to the $\ell_2$-norm and the negative entropy, several other mirror maps that have been studied in the optimization literature fall into the class of $\omega$-potential mirror maps, such as the Tsallis entropy \cite{orabona2020modern, li2023policy} and the hyperbolic entropy \cite{ghai2020exponentiated}, as well as a generalization of the negative entropy \cite{krichene2015efficient}. %
These examples illustrate how the class of $\omega$-potential mirror maps recovers known methods and can be used to  explore new algorithms in policy optimization. We leave the study of the application of these mirror maps in RL as future work, both from an empirical and theoretical point of view, and provide additional discussion and details in Appendix \ref{app:pot}.

\section{Theoretical analysis}

\looseness = -1
In our upcoming theoretical analysis of AMPO, we rely on the following key lemma.
\begin{lemma}
    \label{lemma:3pdl}
    For any policies $\pi$ and $\bar{\pi}$, for any function $f^\theta\in\F^\Theta$ and for $\eta>0$, we have 
    \[
    \langle \eta f^\theta_s - \nabla h(\bar{\pi}_s),\pi_s - \tilde{\pi}_s \rangle
    \leq 
    \D_h(\pi_s, \bar{\pi}_s) - \D_h(\tilde{\pi}_s, \bar{\pi}_s) - \D_h(\pi_s, \tilde{\pi}_s)
    , \quad \forall s\in\S,
    \]
    where $\tilde{\pi}$ is the Bregman projected policy induced by $f^\theta$ and $h$ according to \Cref{def:pol}, that is $\tilde{\pi}_s = \argmin_{p \in \Delta(\A)} \D_h(p, \nabla h^*(\eta f^\theta_s))$ for all $s \in \S$.
\end{lemma}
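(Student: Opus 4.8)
The plan is to invoke the three-point descent lemma of \cite{chen1993convergence} (recalled as Lemma~\ref{lem:3pd} in the paper) applied to the Bregman projection that defines $\tilde{\pi}_s$. Recall that $\tilde{\pi}_s = \argmin_{p\in\Delta(\A)} \D_h(p, \nabla h^\ast(\eta f^\theta_s))$, so $\tilde{\pi}_s$ is the minimizer over the convex set $\Delta(\A)$ of the function $p \mapsto \D_h(p, \nabla h^\ast(\eta f^\theta_s))$. The three-point descent lemma states that for a minimizer $x^+ = \argmin_{x\in C}\{ \langle g, x\rangle + \D_h(x, y)\}$ over a convex set $C$, one has $\langle g, x^+ - x\rangle \le \D_h(x,y) - \D_h(x^+, y) - \D_h(x^+, x)$ for all $x\in C$ (under the usual assumptions on $h$ that hold here for an $\omega$-potential / essentially smooth mirror map, with $\Delta(\A)\subseteq\Y$). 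So the first step is to rewrite $\D_h(p, \nabla h^\ast(\eta f^\theta_s))$ in a form that exposes a linear term plus $\D_h(p, \bar\pi_s)$.

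The key algebraic identity is the following decomposition of the Bregman divergence with respect to a shifted second argument. Using the definition $\D_h(x,y) = h(x) - h(y) - \langle \nabla h(y), x-y\rangle$ together with the conjugacy identity \eqref{eq:id}, namely $\nabla h(\nabla h^\ast(z)) = z$, we get
\[
\D_h\big(p, \nabla h^\ast(\eta f^\theta_s)\big) = h(p) - h(\nabla h^\ast(\eta f^\theta_s)) - \langle \eta f^\theta_s, p - \nabla h^\ast(\eta f^\theta_s)\rangle.
\]
On the other hand, $\D_h(p,\bar\pi_s) = h(p) - h(\bar\pi_s) - \langle \nabla h(\bar\pi_s), p - \bar\pi_s\rangle$. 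Subtracting, the $h(p)$ terms cancel and one finds
\[
\D_h\big(p, \nabla h^\ast(\eta f^\theta_s)\big) = \D_h(p, \bar\pi_s) - \langle \eta f^\theta_s - \nabla h(\bar\pi_s), p\rangle + (\text{terms independent of } p).
\]
Hence minimizing $p\mapsto \D_h(p, \nabla h^\ast(\eta f^\theta_s))$ over $\Delta(\A)$ is the same as minimizing $p \mapsto \langle -(\eta f^\theta_s - \nabla h(\bar\pi_s)), p\rangle + \D_h(p, \bar\pi_s)$, i.e. the three-point-descent lemma applies with $g = -(\eta f^\theta_s - \nabla h(\bar\pi_s))$, $y = \bar\pi_s$, $C = \Delta(\A)$, and minimizer $x^+ = \tilde\pi_s$.

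Plugging into the three-point descent inequality with the test point $x = \pi_s \in \Delta(\A)$ gives exactly
\[
\langle -(\eta f^\theta_s - \nabla h(\bar\pi_s)), \tilde\pi_s - \pi_s\rangle \le \D_h(\pi_s,\bar\pi_s) - \D_h(\tilde\pi_s,\bar\pi_s) - \D_h(\tilde\pi_s,\pi_s),
\]
and rewriting the left side as $\langle \eta f^\theta_s - \nabla h(\bar\pi_s), \pi_s - \tilde\pi_s\rangle$ yields the claim, provided we also note $\D_h(\tilde\pi_s,\pi_s) = \D_h(\pi_s,\tilde\pi_s)$ is \emph{not} used — the statement has $\D_h(\pi_s,\tilde\pi_s)$, so I should double-check the orientation: the three-point lemma produces $\D_h(x^+, x)$, i.e. $\D_h(\tilde\pi_s,\pi_s)$, so either the paper's version of the lemma is stated with the symmetric-looking convention or there is a harmless asymmetry; I would state it to match whichever form Lemma~\ref{lem:3pd} gives. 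I expect the main (and only real) obstacle to be bookkeeping: carefully verifying that the hypotheses of the three-point descent lemma are met here — in particular that $\bar\pi_s$ lies in the domain where $\nabla h$ is defined (true since $\bar\pi_s\in\Delta(\A)\subseteq\Y$ and $h$ is essentially smooth so $\nabla h$ is finite on the relative interior), that $\nabla h^\ast(\eta f^\theta_s)$ is a valid point of $\Y$ (which holds because $\nabla h(\Y) = \R^{|\A|}$), and that $\Delta(\A)$ is a convex set on which the minimization is well-posed — after which the result is a one-line substitution.
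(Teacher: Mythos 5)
Your proposal is correct, and it is in fact the route the paper itself records as an \emph{alternative} proof of \Cref{lemma:3pdl} (Remark \ref{rem:bregman_projection} and Appendix \ref{app:bregman_projection}): your algebraic decomposition of $\D_h(p,\nabla h^\ast(\eta f^\theta_s))$ into $\langle -(\eta f^\theta_s-\nabla h(\bar\pi_s)),p\rangle+\D_h(p,\bar\pi_s)$ plus $p$-independent terms is exactly Lemma \ref{lem:bregman}, after which the three-point descent lemma (Lemma \ref{lem:3pd}) with $x^+=\tilde\pi_s$, $x=\bar\pi_s$, $u=\pi_s$ gives the claim in one line. The paper's primary proof in Appendix \ref{app:3pdl} takes a different, more self-contained path: it applies the three-point \emph{identity} (Lemma \ref{lem:3pti}) twice, at $c=\pi_s$ and $c=\tilde\pi_s$ with $b=\nabla h^\ast(\eta f^\theta_s)$, and then invokes the generalized Pythagorean theorem (Lemma \ref{lem:pythagore}), i.e.\ the non-expansivity of the Bregman projection, to discard the terms involving $\nabla h^\ast(\eta f^\theta_s)$. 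What your route buys is brevity, at the cost of importing the Chen--Teboulle lemma as a black box (with its properness/closedness/convexity hypotheses to check); what the paper's primary route buys is that every ingredient is elementary and the role of the projection's optimality condition is made explicit. On the orientation issue you flag: the worry dissolves once you use the form of Lemma \ref{lem:3pd} as stated in the paper, which yields $-\D_h(u,x^+)=-\D_h(\pi_s,\tilde\pi_s)$, matching the target inequality exactly; the version you quoted with $-\D_h(x^+,x)$ has the last Bregman term reversed, and since the Bregman divergence is not symmetric you should simply correct that recollection rather than appeal to any symmetry.
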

\looseness=-1
The proof of \Cref{lemma:3pdl} is given in Appendix \ref{app:3pdl}. \Cref{lemma:3pdl} describes a relation between any two policies and a policy belonging to the Bregman projected policy class associated with $\F^\Theta$ and $h$. 
As mentioned in Remark \ref{rem:bregman_projection}, \Cref{lemma:3pdl} is the direct consequence of \eqref{eq:bregman2} and can be interpreted as an application of the three-point descent lemma \cite{chen1993convergence},
while it cannot be applied to algorithms based on the update in \eqref{eq:singleup} \cite{tomar2022mirror,vaswani2022general} due to the non-convexity of the optimization problem (see also Appendix \ref{app:bregman_projection}). 
Notice that \Cref{lemma:3pdl} accommodates naturally with general parameterization also thanks to \eqref{eq:bregman2}.
In contrast, similar results have been obtained and exploited for specific policy and mirror map classes \cite{xiao2022convergence,liu2019neural, hu2022actorcritic, yuan2023linear},
while our result %
allows any parameterization class $\F^\Theta$ and any choice of mirror map, thus greatly expanding the scope of applications of the lemma. A similar result for general parameterization has been obtained by \citet[][Proposition 3.5]{lan2022policy} in the setting of strongly convex mirror maps.

\looseness=-1
\Cref{lemma:3pdl} becomes useful when we set $\bar{\pi} = \pi^t$, $f^\theta=f^{t+1}$, $\eta =\eta_t$ and $\pi = \pi^t$ or $\pi= \pi^\star$. In particular, when $\eta_t f^{t+1}_s - \nabla h(\pi^t_s) \approx \eta_t Q^\pi_s$, %
\Cref{lemma:3pdl} allows us to obtain telescopic sums and recursive relations, and to handle error terms efficiently, as we show in Appendix \ref{app:2}.

\subsection{Convergence for general policy parameterization}
\label{sec:theory}

In this section, we consider the parameterization class $\F^\Theta$ and the fixed but arbitrary mirror map $h$. We show that AMPO enjoys quasi-monotonic improvement and sublinear or linear convergence, depending on the step-size schedule. %
The first step is to control the approximation error of AMPO. %
\begin{hyp}{A}
    \item \label{hyp:act} (\emph{Approximation error}). There exists $\varepsilon_\mathrm{approx}\geq 0$ such that, for all times $t\geq 0$, %
    \[\E\big[\norm{f^{t+1}- Q^t-\eta_t^{-1}\nabla h(\pi^t)}_{L_2(v^t)}^2\big] \leq \varepsilon_\mathrm{approx},\]
    where $(v^t)_{t\geq 0}$ is a sequence of distributions over states and actions and the expectation is taken over the randomness of the algorithm that obtains $f^{t+1}$.
\end{hyp}

\looseness=-1
\ruiline{Assumption \ref{hyp:act} is common in the conventional compatible function approximation approach\footnote{An extended discussion of this approach is provided in Appendix \ref{app:rel_hyp}.} \cite{agarwal2021theory}.}
It characterizes the loss incurred by Algorithm \ref{alg} in solving the regression problem in Line~\ref{ln:actor}. When the step-size $\eta_t$ is sufficiently large, Assumption \ref{hyp:act} measures how well $f^{t+1}$ approximates the current Q-function $Q^t$. Hence, $\varepsilon_\mathrm{approx}$ depends on both the accuracy of the policy evaluation method used to obtain an estimate of $Q^t$ \cite{sutton2018reinforcement, schulman2016highdimensional, espeholt2018impala} and the error incurred by the function $f^\theta\in\F^\Theta$ that best approximates $Q^t$, that is the representation power of $\F^\Theta$. 
Later in Section \ref{sec:nn_complexity}, we show how to solve the minimization problem in Line~\ref{ln:actor} when $\F^\Theta$ is a class of shallow neural networks so that Assumption \ref{hyp:act} holds. We highlight that Assumption \ref{hyp:act} is weaker than the conventional assumptions \cite{agarwal2021theory,yuan2023linear}, since we do not constrain the minimization problem to be linear in the parameters (see \eqref{eq:comp_fun}). We refer to Appendix \ref{sec:rel} for a discussion on its technical novelty and Appendix \ref{app:rel_hyp} for a relaxed version of the assumption.

As mentioned in Remark \ref{rem:compatible_fun_approx}, the distribution $v^t$ does not depend on the current policy $\pi^t$ for all times $t\geq 0$. Thus, Assumption \ref{hyp:act} allows for off-policy settings and the use of replay buffers \cite{mnih2015human}. We refer to Appendix \ref{sec:fut} for details.
To quantify how the choice of these distributions affects the error terms in the convergence rates, we introduce the following coefficient.
\looseness = -1
\begin{hyp}{A}
    \item \label{hyp:coeff} (\emph{Concentrability coefficient}). There exists $C_v\geq 0$ such that, for all times $t$, %
    \[\E_{(s,a)\sim v^t}\bigg[\bigg(\frac{d_\mu^\pi(s)\pi(a \mid s)}{v^t(s,a)}\bigg)^2\bigg] \leq C_v,\]
    whenever $(d_\mu^\pi,\pi)$ is either $(d_\mu^\star,\pi^\star)$, $(d_\mu^{t+1},\pi^{t+1})$, $(d_\mu^\star,\pi^t)$, or $(d_\mu^{t+1},\pi^t)$.
\end{hyp}
\looseness = -1
The concentrability coefficient $C_v$ quantifies how much the distribution $v^t$ overlaps with the distributions $(d_\mu^\star,\pi^\star)$, $(d_\mu^{t+1},\pi^{t+1})$, $(d_\mu^\star,\pi^t)$ and $(d_\mu^{t+1},\pi^t)$. It highlights that the distribution $v^t$ should have full support over the environment, in order to avoid large values of $C_v$. %
Assumption \ref{hyp:coeff} is weaker than the previous best-known concentrability coefficient \cite[Assumption 9]{yuan2023linear}, in the sense that we have the full control over $v^t$. We refer to Appendix \ref{sec:rho} for a more detailed discussion.
We can now present our first result on the performance of Algorithm \ref{alg}.
\begin{proposition}[Quasi-monotonic updates]
    \label{prop1}
    Let \ref{hyp:act}, \ref{hyp:coeff} be true. We have, for all $t\geq 0$, \looseness = -1
    \begin{align*}
        \E\left[V^{t+1}(\mu)-V^t(\mu)\right]\geq
        \E\left[\E_{s\sim d_\mu^{t+1}}\left[\frac{\D_h(\pi^{t+1}_s,\pi^t_s) + \D_h(\pi^t_s,\pi^{t+1}_s)}{\eta_t(1-\gamma)}\right]\right]
        - \frac{2\sqrt{C_v \varepsilon_\mathrm{approx}}}{1-\gamma},
    \end{align*}
    where the expectation is taken over the randomness of AMPO.
\end{proposition}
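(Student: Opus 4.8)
The plan is to reduce the statement to a pointwise inequality through the performance difference lemma, convert that inequality into the desired Bregman-divergence lower bound via \Cref{lemma:3pdl}, and then absorb the regression residual from Line~\ref{ln:actor} of Algorithm~\ref{alg} into the concentrability error term. First I would apply the performance difference lemma together with the identity $V^t(s)=\langle Q^t_s,\pi^t_s\rangle$ to write, for every $t\ge 0$,
\[
V^{t+1}(\mu)-V^t(\mu) \;=\; \frac{1}{1-\gamma}\,\E_{s\sim d_\mu^{t+1}}\big[\langle Q^t_s,\pi^{t+1}_s-\pi^t_s\rangle\big],
\]
so that it suffices to lower bound $\langle Q^t_s,\pi^{t+1}_s-\pi^t_s\rangle$ for each $s\in\S$.

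For this, I would invoke \Cref{lemma:3pdl} with $\bar\pi=\pi^t$, $f^\theta=f^{t+1}$, $\eta=\eta_t$ and $\pi=\pi^t$; by Line~\ref{ln:proj} of Algorithm~\ref{alg} the associated projected policy is $\tilde\pi=\pi^{t+1}$, and since $\D_h(\pi^t_s,\pi^t_s)=0$ the lemma reads
\[
\langle \eta_t f^{t+1}_s-\nabla h(\pi^t_s),\,\pi^{t+1}_s-\pi^t_s\rangle \;\ge\; \D_h(\pi^{t+1}_s,\pi^t_s)+\D_h(\pi^t_s,\pi^{t+1}_s).
\]
Writing the residual $\Delta^t_s:=f^{t+1}_s-Q^t_s-\eta_t^{-1}\nabla h(\pi^t_s)$, one has $\eta_t f^{t+1}_s-\nabla h(\pi^t_s)=\eta_t Q^t_s+\eta_t\Delta^t_s$; dividing by $\eta_t$ and rearranging turns the previous display into
\[
\langle Q^t_s,\pi^{t+1}_s-\pi^t_s\rangle \;\ge\; \frac{\D_h(\pi^{t+1}_s,\pi^t_s)+\D_h(\pi^t_s,\pi^{t+1}_s)}{\eta_t} \;-\; \langle \Delta^t_s,\pi^{t+1}_s-\pi^t_s\rangle .
\]
Plugging this back into the performance difference identity and taking expectation over the randomness of AMPO recovers the main term of the proposition plus the error term $-\tfrac{1}{1-\gamma}\E\big[\E_{s\sim d_\mu^{t+1}}[\langle\Delta^t_s,\pi^{t+1}_s-\pi^t_s\rangle]\big]$.

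It then remains to show this error term is at least $-\tfrac{2\sqrt{C_v\varepsilon_\mathrm{approx}}}{1-\gamma}$. I would split $\langle\Delta^t_s,\pi^{t+1}_s-\pi^t_s\rangle$ into its $\pi^{t+1}$ and $\pi^t$ contributions and, for $\pi\in\{\pi^{t+1},\pi^t\}$, change measure from $d_\mu^{t+1}(s)\pi(a\mid s)$ to $v^t(s,a)$ and apply Cauchy--Schwarz,
\[
\E_{s\sim d_\mu^{t+1}}\langle\Delta^t_s,\pi_s\rangle \;=\; \E_{(s,a)\sim v^t}\!\Big[\tfrac{d_\mu^{t+1}(s)\pi(a\mid s)}{v^t(s,a)}\,\Delta^t(s,a)\Big] \;\le\; \sqrt{\E_{(s,a)\sim v^t}\!\Big[\big(\tfrac{d_\mu^{t+1}(s)\pi(a\mid s)}{v^t(s,a)}\big)^2\Big]}\;\norm{\Delta^t}_{L_2(v^t)},
\]
the inequality holding pointwise in the randomness of AMPO. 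Taking expectation over that randomness, applying Cauchy--Schwarz once more there, and using Assumption~\ref{hyp:coeff} for the pairs $(d_\mu^{t+1},\pi^{t+1})$ and $(d_\mu^{t+1},\pi^t)$ together with Assumption~\ref{hyp:act} in the form $\E[\norm{\Delta^t}_{L_2(v^t)}]\le\big(\E[\norm{\Delta^t}_{L_2(v^t)}^2]\big)^{1/2}\le\sqrt{\varepsilon_\mathrm{approx}}$ bounds each contribution by $\sqrt{C_v\varepsilon_\mathrm{approx}}$ in absolute value, hence the total error by $2\sqrt{C_v\varepsilon_\mathrm{approx}}$. Dividing by $1-\gamma$ and combining with the previous step yields the claim.

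I expect the error-control step to be the main obstacle, and it also explains the precise form of the hypotheses: the performance difference lemma forces the state weighting to be $d_\mu^{t+1}$ (rather than $d_\mu^t$ or $d_\mu^\star$), which is exactly why Assumption~\ref{hyp:coeff} must include the pairs $(d_\mu^{t+1},\pi^{t+1})$ and $(d_\mu^{t+1},\pi^t)$; moreover, since $f^{t+1}$ and $\pi^{t+1}$ --- hence $d_\mu^{t+1}$ and $\Delta^t$ --- are random, the two Cauchy--Schwarz applications must be ordered correctly, first with respect to $v^t$ for a fixed realization and only then with respect to the randomness of AMPO, so that Assumptions~\ref{hyp:act} and \ref{hyp:coeff} are invoked on the right quantities.
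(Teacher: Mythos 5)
Your proposal is correct and follows essentially the same route as the paper's proof: the performance difference lemma combined with Lemma \ref{lemma:3pdl} applied with $\bar{\pi}=\pi^t$, $f^\theta=f^{t+1}$, $\tilde{\pi}=\pi^{t+1}$, $\pi=\pi^t$, and the residual $Q^t-q^t$ (your $\Delta^t$) controlled by the change-of-measure Cauchy--Schwarz argument with Assumption \ref{hyp:coeff} for the pairs $(d_\mu^{t+1},\pi^{t+1})$, $(d_\mu^{t+1},\pi^t)$ and Assumption \ref{hyp:act}, exactly as in Appendices \ref{app:bounding_errors} and \ref{app:monotonic}. Your second application of Cauchy--Schwarz over the algorithm's randomness is the paper's use of concavity of the square root, so the two arguments coincide.
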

\looseness=-1
We refer to Appendix \ref{app:monotonic} for the proof.
\Cref{prop1} ensures that an update of Algorithm \ref{alg} cannot lead to a performance degradation, up to an error term.
The next assumption concerns the coverage of the state space for the agent at each time $t$.
\begin{hyp}{A}
    \item \label{hyp:mism} (\emph{Distribution mismatch coefficient}). Let $d^\star_\mu := d^{\pi^\star}_\mu$. There exists $\nu_\mu\geq 0$ such that 
    \[\sup_{s\in\S}\frac{d_\mu^\star(s)}{d_\mu^t(s)}\leq \nu_\mu, \quad \mbox{for all times $t\geq 0$}.\]
\end{hyp}
\looseness=-1
Since $d_\mu^t(s)\geq(1-\gamma)\mu(s)$ for all $s\in\S$, obtained from the definition of $d_\mu$ in \eqref{eq:d_mu}, we have that
\[\sup_{s\in\S}\frac{d_\mu^\star(s)}{d_\mu^t(s)}\leq\frac{1}{1-\gamma}\sup_{s\in\S}\frac{d^\star_\mu(s)}{\mu(s)},\]
where assuming boundedness for the term on the right-hand side is standard in 
the literature on both the PG~\cite[e.g.,][]{zhang2020variational,wang2020neural} and NPG convergence analysis~\cite[e.g.,][]{agarwal2021theory,RN280,xiao2022convergence}. We refer to Appendix \ref{sec:mism} for details.  
\ruiline{It is worth mentioning that the quasi-monotonic improvement in Proposition \ref{prop1} holds without \ref{hyp:mism}.}

We define the weighted Bregman divergence between the optimal policy $\pi^\star$ and the initial policy $\pi^0$ as $\D^\star_0= \E_{s\sim d_\mu^\star}[\D_h(\pi^\star_s,\pi^0_s)]$. 
We then have our main results below. 
\looseness = -1

\begin{theorem}
    \label{thm1}
    Let \ref{hyp:act}, \ref{hyp:coeff} and \ref{hyp:mism} be true. If the step-size schedule is non-decreasing, i.e.,\ $\eta_t \leq \eta_{t+1}$ for all $t\geq 0$, the iterates of Algorithm \ref{alg} satisfy: for every $T\geq 0$,
    \looseness = -1
    \begin{align*}
        V^\star(\mu)-\frac{1}{T}\sum_{t<T}\E\left[V^t(\mu)\right]
        &\leq \frac{1}{T}\bigg(\frac{\D^\star_0}{(1-\gamma)\eta_0} + \frac{\nu_\mu}{1-\gamma}\bigg) +\frac{2(1+\nu_\mu)\sqrt{C_v \varepsilon_\mathrm{approx}}}{1-\gamma}.
    \end{align*}
    Furthermore, if the step-size schedule is geometrically increasing, i.e.,\ satisfies
    \begin{equation}
        \label{eq:step}
        \eta_{t+1}\geq \frac{\nu_\mu}{\nu_\mu-1}\eta_t \qquad \forall t\geq 0,
    \end{equation}
    we have: for every $T\geq 0$,
    \begin{align*}
        V^\star(\mu)-\E\left[V^T(\mu)\right]
        &\leq\frac{1}{1-\gamma}\bigg(1-\frac{1}{\nu_\mu}\bigg)^T\bigg(1+\frac{\D^\star_0}{\eta_0( \nu_\mu-1)}\bigg)+\frac{2(1+\nu_\mu)\sqrt{C_v \varepsilon_\mathrm{approx}}}{1-\gamma}.
    \end{align*}
\end{theorem}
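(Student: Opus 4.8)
The plan is to prove \Cref{thm1} by combining the quasi-monotonic improvement of \Cref{prop1} with a performance-difference argument driven by \Cref{lemma:3pdl}, following the modern PMD proof technique of \cite{xiao2022convergence}. First I would apply the performance difference lemma to write $V^\star(\mu) - V^t(\mu) = \frac{1}{1-\gamma}\E_{s\sim d_\mu^\star}\E_{a\sim\pi^\star(\cdot\mid s)}[Q^t(s,a) - V^t(s)]$, and then insert $\pm \eta_t f^{t+1}_s$ and $\pm \nabla h(\pi^t_s)$ so that the summand splits into a ``main'' inner product $\langle \eta_t f^{t+1}_s - \nabla h(\pi^t_s), \pi^\star_s - \pi^{t+1}_s\rangle$ (to which \Cref{lemma:3pdl} applies with $\bar\pi = \pi^t$, $\pi = \pi^\star$, $\tilde\pi = \pi^{t+1}$) plus an error term of the form $\langle Q^t_s - \eta_t^{-1}(\ldots), \pi^\star_s - \pi^{t+1}_s + \pi^{t+1}_s - \pi^t_s\rangle$ that is controlled, in $L_2(v^t)$ and via Cauchy--Schwarz together with \ref{hyp:act} and the concentrability coefficient \ref{hyp:coeff}, by $\sqrt{C_v\varepsilon_\mathrm{approx}}$. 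The three-point-descent bound then yields $\eta_t(1-\gamma)(V^\star(\mu) - V^t(\mu)) \le \E_{s\sim d_\mu^\star}[\D_h(\pi^\star_s,\pi^t_s) - \D_h(\pi^\star_s,\pi^{t+1}_s)] - (\text{nonnegative divergence terms}) + \eta_t(\text{error})$, after also relating $\E_{d_\mu^{t+1}}$ and $\E_{d_\mu^\star}$ terms.

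Next I would handle the two step-size regimes separately. For the non-decreasing case, I divide by $\eta_t(1-\gamma)$ and sum over $t < T$; because $\eta_t$ is non-decreasing, the Bregman divergence differences telescope favorably (each $-\D_h(\pi^\star_s,\pi^{t+1}_s)/\eta_t$ at step $t$ dominates $+\D_h(\pi^\star_s,\pi^{t+1}_s)/\eta_{t+1}$ at step $t+1$), leaving only $\D_0^\star/\eta_0$ and a residual term bounded using \ref{hyp:mism} and $\D_h \ge 0$; dividing by $T$ gives the stated $O(1/T)$ rate plus the $\sqrt{C_v\varepsilon_\mathrm{approx}}$ floor, with the $\nu_\mu$ factors arising from converting $d_\mu^{t+1}$-expectations into $d_\mu^\star$-expectations via \ref{hyp:mism} and from bounding leftover value-function gaps by $1/(1-\gamma)$. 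For the geometric case, instead of telescoping I set up a one-step contraction: using $V^\star(\mu) - V^{t+1}(\mu) \ge 0$ from \Cref{prop1} (up to error) to absorb the $\D_h(\pi^\star_s,\pi^{t+1}_s)$ term back into the left-hand side, I derive a recursion of the shape $a_{t+1} \le (1 - 1/\nu_\mu) a_t + (\text{error})$ where $a_t$ packages $V^\star(\mu) - \E[V^t(\mu)]$ together with $\D_h(\pi^\star,\pi^t)/(\eta_t(\nu_\mu - 1))$; the condition \eqref{eq:step} is exactly what makes the coefficient on the divergence term shrink geometrically at rate $(1-1/\nu_\mu)$. Unrolling the recursion gives the claimed linear rate plus the error floor.

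The main obstacle I anticipate is the careful bookkeeping of the error term and the distribution changes: one must pass from the $L_2(v^t)$ bound of \ref{hyp:act} to a bound on $\E_{s\sim d_\mu^\star}\E_{a\sim\pi}[\,\cdot\,]$ for the \emph{four} different choices of $(d_\mu^\pi,\pi)$ in \ref{hyp:coeff}, which is precisely why all four appear in that assumption — the error inner product involves $\pi^\star_s - \pi^{t+1}_s + \pi^{t+1}_s - \pi^t_s$ evaluated under both $d_\mu^\star$ and $d_\mu^{t+1}$. Getting the constant $2(1+\nu_\mu)$ exactly right requires tracking these contributions and the $\nu_\mu$ blow-up from \ref{hyp:mism} simultaneously. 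A secondary technical point is ensuring the $\D_h(\pi^{t+1}_s,\pi^t_s) + \D_h(\pi^t_s,\pi^{t+1}_s)$ terms produced by \Cref{lemma:3pdl} and by \Cref{prop1} are handled consistently (dropped when nonnegative, or used to cancel cross terms), and that the reduction from $d_\mu^{t+1}$ to $d_\mu^\star$ weighting in the divergence terms does not lose the telescoping structure. Once the per-step inequality is established in the right form, both rate statements follow from standard summation/recursion arguments.
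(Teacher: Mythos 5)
Your plan follows essentially the same route as the paper's proof: the same decomposition obtained from \Cref{lemma:3pdl} applied with $\pi=\pi^\star$ and with $\pi=\pi^t$, the performance difference lemma, Cauchy--Schwarz with \ref{hyp:act}--\ref{hyp:coeff} to produce the $2(1+\nu_\mu)\sqrt{C_v\varepsilon_\mathrm{approx}}/(1-\gamma)$ floor, a telescoping sum for the sublinear rate, and the packaged one-step contraction $a_{t+1}\le(1-1/\nu_\mu)a_t+\text{error}$ under the step-size condition \eqref{eq:step} for the linear rate. The only minor imprecision is the role you assign to \Cref{prop1}: in the paper it is used to guarantee $\Delta_{t+1}-\Delta_t$ is at most the error term, so that the iterate-dependent mismatch ratio $\sup_s d_\mu^\star(s)/d_\mu^{t+1}(s)$ can be replaced by $\nu_\mu$ without flipping the inequality, rather than to assert $V^\star(\mu)-V^{t+1}(\mu)\ge 0$; this does not affect the correctness of your overall argument.
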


\Cref{thm1} is, to the best of our knowledge, the first result that establishes linear convergence for a PG-based method involving general policy parameterization. For the same setting, 
it also matches 
the previous best known
$O(1/T)$ convergence \cite{lan2022policy}, without requiring regularization.
Lastly, %
\Cref{thm1} provides a convergence rate for a PMD-based algorithm that allows for arbitrary mirror maps and policy parameterization 
without requiring the assumption on the approximation error to hold in $\ell_\infty$-norm, in contrast to \citet{lan2022policy}.
We give here a brief discussion of~\Cref{thm1} w.r.t.\ previous results and refer to Tables~\ref{tab:sub} and~\ref{tab:linear} in Appendix~\ref{sec:rel} for a detailed comparison.
\looseness = -1

\looseness=-1
In terms of iteration complexity, \Cref{thm1} recovers the best-known convergence rates in the tabular setting~\cite{xiao2022convergence}, for both non-decreasing and exponentially increasing step-size schedules. While considering a more general setting, \Cref{thm1} matches or improves upon  the convergence rate of previous work on policy gradient methods for non-tabular policy parameterizations that consider constant step-size schedules~\cite{liu2019neural, shani2020adaptive, RN148, wang2020neural, agarwal2021theory, vaswani2022general, cayci2022finite, lan2022policy}, and matches the convergence speed of previous work that employ NPG, log-linear policies, and geometrically increasing step-size schedules~\cite{alfano2022linear,yuan2023linear}.

\looseness=-1
In terms of generality, the results in \Cref{thm1} hold without the need to implement regularization \cite{RN150,zhan2021policy,RN280,cayci2022finite,RN270}, to impose bounded updates or smoothness of the policy \cite{agarwal2021theory,RN148}, to restrict the analysis to the case where the mirror map $h$ is the negative entropy \cite{liu2019neural,hu2022actorcritic}, or to make $\ell_\infty$-norm assumptions on the approximation error \cite{lan2022policy}. We improve upon the latest results for PMD with general policy parameterization by \citet{vaswani2022general}, which only allow bounded step-sizes, where the bound can be particularly small, e.g., $(1-\gamma)^3/(2\gamma|\A|)$, and can slow down the learning process. 
\looseness = -1

\looseness = -1 
When $\S$ is a finite state space, a sufficient condition for $\nu_\mu$ in \ref{hyp:mism} to be bounded is requiring $\mu$ to have full support on $\S$. If $\mu$ does not have full support, one can still obtain linear convergence for $V^\star(\mu') - V^T(\mu')$, for an arbitrary state distribution $\mu'$ with full support, and relate this quantity to $V^\star(\mu) - V^T(\mu)$. We refer to Appendix \ref{sec:mism} for a detailed discussion on the distribution mismatch coefficient.

\looseness = -1
{\bf Intuition.} An interpretation of our theory can be provided by connecting AMPO to the Policy Iteration algorithm (PI), which also enjoys linear convergence. To see this, first recall \eqref{eq:bregman2}
\begin{align*}
    \pi^{t+1}_s \in \argmin\nolimits_{p \in \Delta(\mathcal{A})} \langle -f_s^{t+1} + \eta_t^{-1}\nabla h(\pi^t_s), p \rangle + \eta_t^{-1}\mathcal{D}_h(p, \pi^t_s), \quad \forall s \in \mathcal{S}.
\end{align*}
Secondly, solving Line \ref{ln:actor} of Algorithm \ref{alg} leads to $ f^{t+1}_s - \eta_t^{-1}\nabla h(\pi^t_s) \approx Q^t_s$.
When the step-size $\eta_t \rightarrow \infty$, that is $\eta_t^{-1} \rightarrow 0$, the above viewpoint of the AMPO policy update becomes
\begin{align*}
    \pi^{t+1}_s \; \in \; \argmin\nolimits_{p \in \Delta(\mathcal{A})} \langle-Q^t_s, p\rangle
    \; \Longleftrightarrow \; 
    \pi^{t+1}_s \; \in \; \argmax\nolimits_{p \in \Delta(\mathcal{A})} \langle Q^t_s, p\rangle, \quad \forall s \in \mathcal{S},
\end{align*}
which is the PI algorithm. Here we ignore the Bregman divergence term $\mathcal{D}_h(\pi, \pi^t_s)$, as it is multiplied by $1/\eta_t$, which goes to 0. So AMPO behaves more and more like PI with a large enough step-size and thus is able to converge linearly like PI.

{\bf Proof idea.} We provide a sketch of the proof here; the full proof is given in Appendix \ref{app:2}. In a nutshell, the convergence rates of AMPO are obtained by building on Lemma \ref{lemma:3pdl} and leveraging modern PMD proof techniques \cite{xiao2022convergence}. Following the conventional compatible function approximation approach \cite{agarwal2021theory}, the idea is to write the global optimum convergence results in an additive form, that is
\[\mbox{sub-optimality gap} \leq \mbox{optimization error} + \mbox{approximation error}.\] 
The separation between the two errors is allowed by Lemma \ref{lemma:3pdl}, while the optimization error is bounded through the PMD proof techniques from~\citet{xiao2022convergence} and the approximation error is characterized by Assumption \ref{hyp:act}. Overall, the proof consists of three main steps.

{\it Step 1.} Using Lemma \ref{lemma:3pdl} with $\bar{\pi} = \pi^t$, $f^\theta=f^{t+1}$, $\eta =\eta_t$, $\tilde{\pi} = \pi^{t+1}$ , and $\pi_s = \pi^t_s$, we obtain
\[\langle \eta_t f^{t+1}_s - \nabla h(\pi^t_s),\pi^{t+1}_s - \pi^t_s\rangle \geq 0,\]
which characterizes the improvement of the updated policy.

\looseness = -1
{\it Step 2.} Assumption \ref{hyp:act}, Step 1, the performance difference lemma (Lemma \ref{lem:pdl}), and Lemma \ref{lemma:3pdl} with $\bar{\pi} = \pi^t$, $f^\theta=f^{t+1}$, $\eta =\eta_t$, $\tilde{\pi} = \pi^{t+1}$, and $\pi_s = \pi^\star_s$ permit us to obtain the following.

\begin{proposition} \label{prop:passage2}
Let $\Delta_t := V^\star(\mu) - V^t(\mu)$. For all $t\geq0$, we have
\[
\mathbb{E}\left[\nu_\mu\left(\Delta_{t+1}-\Delta_t\right)+\Delta_t\right]
\leq
\mathbb{E}\Big[
\tfrac{\mathbb{E}_{s \sim d_\mu^\star}\left[\mathcal{D}_h(\pi^\star_s,\pi^t_s)\right]}{(1-\gamma)\eta_t} 
- \tfrac{\mathbb{E}_{s \sim d_\mu^\star}\left[\mathcal{D}_h(\pi^\star_s,\pi^{t+1}_s)\right]}{(1-\gamma)\eta_t}\Big]
+ (1+\nu_\mu)\tfrac{2\sqrt{C_v \varepsilon_\mathrm{approx}}}{1-\gamma}.
\]
\end{proposition}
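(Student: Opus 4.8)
The plan is to follow the modern PMD proof template of \citet{xiao2022convergence}, using Lemma~\ref{lemma:3pdl} and Step~1 to keep the approximation error separate from the optimization terms. Throughout, write $\iota^{t+1}_s := f^{t+1}_s - Q^t_s - \eta_t^{-1}\nabla h(\pi^t_s)$ for the regression residual at state $s$, so that $\eta_t f^{t+1}_s - \nabla h(\pi^t_s) = \eta_t\,(Q^t_s + \iota^{t+1}_s)$ and $\|\iota^{t+1}\|_{L_2(v^t)}$ is exactly the quantity controlled by Assumption~\ref{hyp:act}.

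First I would invoke the performance difference lemma (Lemma~\ref{lem:pdl}) twice: with $(\pi^\star,\pi^t)$ and state weighting $d_\mu^\star$ to get $(1-\gamma)\Delta_t = \E_{s\sim d_\mu^\star}\langle Q^t_s,\pi^\star_s-\pi^t_s\rangle$, and with $(\pi^{t+1},\pi^t)$ and weighting $d_\mu^{t+1}$ to get $(1-\gamma)(\Delta_t-\Delta_{t+1}) = (1-\gamma)(V^{t+1}(\mu)-V^t(\mu)) = \E_{s\sim d_\mu^{t+1}}\langle Q^t_s,\pi^{t+1}_s-\pi^t_s\rangle$. Combining the two and inserting the decomposition $\pi^\star_s-\pi^t_s = (\pi^\star_s-\pi^{t+1}_s)+(\pi^{t+1}_s-\pi^t_s)$ gives
\[
(1-\gamma)\big(\nu_\mu(\Delta_{t+1}-\Delta_t)+\Delta_t\big)
= \E_{s\sim d_\mu^\star}\langle Q^t_s,\pi^\star_s-\pi^{t+1}_s\rangle
+ \sum_{s}\big(d_\mu^\star(s)-\nu_\mu d_\mu^{t+1}(s)\big)\langle Q^t_s,\pi^{t+1}_s-\pi^t_s\rangle,
\]
with the sum replaced by an integral when $\S$ is infinite.

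Next I would bound the two terms on the right. For the first, Lemma~\ref{lemma:3pdl} applied with $\bar\pi=\pi^t$, $f^\theta=f^{t+1}$, $\eta=\eta_t$, $\tilde\pi=\pi^{t+1}$ and $\pi=\pi^\star$, after substituting $\eta_t f^{t+1}_s-\nabla h(\pi^t_s)=\eta_t(Q^t_s+\iota^{t+1}_s)$ and discarding the nonpositive term $-\D_h(\pi^{t+1}_s,\pi^t_s)$, yields $\langle Q^t_s,\pi^\star_s-\pi^{t+1}_s\rangle \le \eta_t^{-1}\big(\D_h(\pi^\star_s,\pi^t_s)-\D_h(\pi^\star_s,\pi^{t+1}_s)\big) - \langle\iota^{t+1}_s,\pi^\star_s-\pi^{t+1}_s\rangle$, which produces the telescoping Bregman divergences of the claim. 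For the second term, Step~1 (after the same substitution) gives $\langle Q^t_s,\pi^{t+1}_s-\pi^t_s\rangle \ge -\langle\iota^{t+1}_s,\pi^{t+1}_s-\pi^t_s\rangle$; since Assumption~\ref{hyp:mism} (applied at time $t+1$) makes the coefficient $d_\mu^\star(s)-\nu_\mu d_\mu^{t+1}(s)$ nonpositive, multiplying the lower bound by that coefficient turns the second term into an upper bound $\le(\nu_\mu d_\mu^{t+1}(s)-d_\mu^\star(s))\langle\iota^{t+1}_s,\pi^{t+1}_s-\pi^t_s\rangle$. Getting this sign bookkeeping right --- ensuring Assumption~\ref{hyp:mism} is used in the direction that lets Step~1 enter with the correct sign --- is the main (if mild) obstacle in the argument.

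It then remains to collect and bound the residual terms. Using the identity $-\langle\iota^{t+1}_s,\pi^\star_s-\pi^{t+1}_s\rangle - \langle\iota^{t+1}_s,\pi^{t+1}_s-\pi^t_s\rangle = -\langle\iota^{t+1}_s,\pi^\star_s-\pi^t_s\rangle$, the total error reduces to $-\E_{s\sim d_\mu^\star}\langle\iota^{t+1}_s,\pi^\star_s-\pi^t_s\rangle + \nu_\mu\E_{s\sim d_\mu^{t+1}}\langle\iota^{t+1}_s,\pi^{t+1}_s-\pi^t_s\rangle$. Expanding each inner product over actions and importance-sampling from $v^t$, each of the four resulting pieces has the form $\E_{(s,a)\sim v^t}\big[\tfrac{d_\mu^\pi(s)\pi(a\mid s)}{v^t(s,a)}\,\iota^{t+1}(s,a)\big]$ with $(d_\mu^\pi,\pi)$ ranging over exactly the four pairs $(d_\mu^\star,\pi^\star)$, $(d_\mu^\star,\pi^t)$, $(d_\mu^{t+1},\pi^{t+1})$, $(d_\mu^{t+1},\pi^t)$ appearing in Assumption~\ref{hyp:coeff}, so Cauchy--Schwarz bounds each by $\sqrt{C_v}\,\|\iota^{t+1}\|_{L_2(v^t)}$ and the total error by $2(1+\nu_\mu)\sqrt{C_v}\,\|\iota^{t+1}\|_{L_2(v^t)}$. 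Dividing through by $1-\gamma$, taking expectations over the randomness of AMPO, and applying Jensen's inequality together with Assumption~\ref{hyp:act} (so that $\E\|\iota^{t+1}\|_{L_2(v^t)}\le\sqrt{\varepsilon_\mathrm{approx}}$) gives the stated inequality.
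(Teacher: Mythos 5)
Your proof is correct, and it reaches the stated bound by a slightly different route than the paper. The paper's proof starts from \Cref{lemma:3pdl} with $\pi=\pi^\star$, splits $\langle q^t_s,\pi^\star_s-\pi^{t+1}_s\rangle$ into $\langle q^t_s,\pi^t_s-\pi^{t+1}_s\rangle+\langle q^t_s,\pi^\star_s-\pi^t_s\rangle$, and handles the first piece by a change of measure from $d_\mu^\star$ to $d_\mu^{t+1}$ (valid because that inner product is nonpositive by Step~1), which introduces the time-dependent ratio $\nu_{t+1}=\sup_s d_\mu^\star(s)/d_\mu^{t+1}(s)$; it then needs \Cref{prop1} (quasi-monotonicity, i.e.\ $\Delta_{t+1}-\Delta_t-\tau\le 0$) to justify replacing $\nu_{t+1}$ by $\nu_\mu$. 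You instead set up an exact identity from two applications of the performance difference lemma with the coefficient $d_\mu^\star(s)-\nu_\mu d_\mu^{t+1}(s)$ built in from the start, and use only the pointwise sign of that coefficient (Assumption \ref{hyp:mism} at time $t+1$) together with Step~1 to control the cross term. The ingredients are the same — \Cref{lemma:3pdl}, Step~1, \Cref{lem:pdl}, and the Cauchy--Schwarz/concentrability bound on the residual $\iota^{t+1}$ yielding exactly the four pairs of Assumption \ref{hyp:coeff} and the factor $2(1+\nu_\mu)\sqrt{C_v\varepsilon_\mathrm{approx}}/(1-\gamma)$ — but your bookkeeping makes the recursion logically independent of \Cref{prop1} and never introduces $\nu_{t+1}$, which is a mild simplification; the paper's version, by keeping $\nu_{t+1}$ until the end, also retains the extra nonnegative term $\D^{t+1}_t/((1-\gamma)\eta_t)$ on the left-hand side (used nowhere in the final rates but present in the appendix statement), which your early discarding of $-\D_h(\pi^{t+1}_s,\pi^t_s)$ forgoes.
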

\vspace*{-.1in}
\looseness = -1
{\it Step 3.} Proposition \ref{prop:passage2} leads to sublinear convergence using a telescoping sum argument, and to linear convergence by properly defining step-sizes and by rearranging terms into the following contraction,
\[
\mathbb{E}\Big[\Delta_{t+1} 
+ \tfrac{\mathbb{E}_{s \sim d_\mu^\star}\left[\mathcal{D}_h(\pi^\star_s,\pi^{t+1}_s)\right]}{(1-\gamma) \eta_{t+1}( \nu_\mu-1)}\Big] 
\leq \left(1-\tfrac{1}{ \nu_\mu}\right)
\mathbb{E}\Big[\Delta_t+\tfrac{\mathbb{E}_{s \sim d_\mu^\star}\left[\mathcal{D}_h(\pi^\star_s,\pi^t_s)\right]}{(1-\gamma)\eta_t( \nu_\mu-1)}\Big] 
+ \left(1+\tfrac{1}{\nu_\mu}\right)\tfrac{2\sqrt{C_v \varepsilon_\mathrm{approx}}}{1-\gamma}.
\]

\subsection{Sample complexity for neural network parameterization}

\label{sec:nn_complexity}
Neural networks are widely used in RL due to their empirical success in applications \cite{mnih2013playing, mnih2015human, Silver2017mastering}. However, few theoretical guarantees exist for using this parameterization class in policy optimization \cite{liu2019neural, wang2020neural,cayci2022finite}. Here, we show how we can use our framework and \Cref{thm1} to fill this gap by deriving a sample complexity result for AMPO when using neural network parameterization. We consider the case where the parameterization class $\F^\Theta$ from \Cref{def:pol} belongs to the family of shallow ReLU networks, which have been shown to be universal approximators \cite{jacot2018neural,allen-zhu2019convergence,du2018gradient,ji2019neural}. 
That is, for $(s,a)\in(\S\times\A)\subseteq\R^d$, define $f^\theta(s,a) = c^\top\sigma(W (s,a)+b)$ with $\theta = (c, W, b)$, where $\sigma(y) = \max(y, 0)$ for all $y\in\R$ is the ReLU activation function and is applied element-wisely, $c\in\R^{m}$, $W\in\R^{m\times d}$ and $b\in \R^m$.
\looseness = -1

At each iteration $t$ of AMPO, we set $v^t= d^t_\mu$ and solve the regression problem in Line~\ref{ln:actor} of Algorithm \ref{alg} through stochastic gradient descent (SGD). In particular, we initialize entry-wise $W_0$ and $b$ as i.i.d.\ random Gaussian variables from $\mathcal{N}(0,1/m)$, and $c$ as i.i.d.\ random Gaussian variables from $\mathcal{N}(0,\epsilon_A)$ with $\epsilon_A\in(0,1]$. Assuming access to a simulator for the distribution $v^t$,
we run SGD for $K$ steps on the matrix $W$,
that is, for $k = 0,\dots,K-1$,
\begin{equation}
    \label{sgd}
    W_{k+1} = W_k - \alpha \big(f^{(k)}(s,a) - \widehat{Q}^t(s,a)-\eta_t^{-1}\nabla h(\pi^t_s)\big)\nabla_Wf^{(k)}(s,a),
\end{equation}
where $f^{(k)}(s,a) = c^\top\sigma((W_0+W_k) (s,a)+b)$, $(s,a)\sim v^t$ and $\widehat{Q}^t(s,a)$ is an unbiased estimate  of $Q^t(s,a)$ obtained through Algorithm \ref{alg:sampler}. We can then present our result on the sample complexity of AMPO for neural network parameterization, which is based on our convergence Theorem \ref{thm1} and an analysis of neural networks by \citet[Theorem 1]{allen2019learning}.

\begin{corollary}
    \label{thm:sample_nn}
    In the setting of \Cref{thm1}, let the parameterization class $\F^\Theta$ consist of sufficiently wide shallow ReLU neural networks. Using an exponentially increasing step-size and solving the minimization problem in Line~\ref{ln:actor} with SGD as in \eqref{sgd}, the number of samples required by AMPO to find an $\varepsilon$-optimal policy with high probability is $\widetilde{\cO}(C_v^2 \nu_\mu^5/\varepsilon^4(1-\gamma)^6)$, where $\varepsilon$ has to be larger than a fixed and non-vanishing error floor.
\end{corollary}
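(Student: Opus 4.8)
The plan is to combine the linear-convergence bound of \Cref{thm1} with a quantitative analysis of the inner SGD loop, tracking how many environment samples each of the three nested approximations costs and then summing over the $T = \widetilde{\cO}(\log(1/\varepsilon))$ outer iterations dictated by the geometrically increasing step-size. First I would invoke \Cref{thm1} with the geometric step-size schedule \eqref{eq:step}: since the linear term $(1-1/\nu_\mu)^T(1 + \D^\star_0/(\eta_0(\nu_\mu-1)))/(1-\gamma)$ decays geometrically, it drops below $\varepsilon$ after $T = \cO(\nu_\mu \log(\nu_\mu/\varepsilon(1-\gamma)))$ iterations, leaving the residual error floor $2(1+\nu_\mu)\sqrt{C_v \varepsilon_{\mathrm{approx}}}/(1-\gamma)$; this residual is precisely the non-vanishing floor in the statement, and forcing it below $\varepsilon$ requires $\varepsilon_{\mathrm{approx}} \lesssim \varepsilon^2 (1-\gamma)^2 / (C_v (1+\nu_\mu)^2)$. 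So the target reduces to: at every outer step $t<T$, produce $f^{t+1}$ satisfying Assumption \ref{hyp:act} with this value of $\varepsilon_{\mathrm{approx}}$, and count samples.

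The second step is to decompose $\varepsilon_{\mathrm{approx}}$ at a fixed iteration $t$ into (i) the \emph{representation/statistical error} of approximating the target $Q^t + \eta_t^{-1}\nabla h(\pi^t)$ by the best shallow ReLU network and the error that SGD makes relative to that best network, controlled by \citet[Theorem 1]{allen2019learning}, which for sufficiently wide networks gives an optimization+generalization error decaying like $\widetilde{\cO}(1/\sqrt{K})$ in the number of SGD steps $K$ (each step consuming one fresh sample $(s,a)\sim v^t = d^t_\mu$); and (ii) the \emph{Q-estimation error} from replacing $Q^t$ by the unbiased estimate $\widehat{Q}^t$ produced by the sampler in Algorithm~\ref{alg:sampler}, whose variance is $\cO(1/(1-\gamma)^2)$ and which therefore needs $\cO(1/(1-\gamma)^2)$ trajectory samples — each of expected length $\cO(1/(1-\gamma))$ — per SGD step to keep its contribution to the squared loss below the tolerance. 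Balancing (i) against the tolerance $\varepsilon_{\mathrm{approx}}$ fixes $K = \widetilde{\cO}(1/\varepsilon_{\mathrm{approx}}^2) = \widetilde{\cO}(C_v^2(1+\nu_\mu)^4/\varepsilon^4(1-\gamma)^4)$ SGD steps per outer iteration. Multiplying by the per-step sample cost $\cO(1/(1-\gamma)^2)$ for the $\widehat{Q}^t$ estimate gives $\widetilde{\cO}(C_v^2 \nu_\mu^4/\varepsilon^4(1-\gamma)^6)$ samples per outer iteration, and multiplying by $T = \widetilde{\cO}(\nu_\mu)$ outer iterations yields the claimed total $\widetilde{\cO}(C_v^2\nu_\mu^5/\varepsilon^4(1-\gamma)^6)$.

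The third step is bookkeeping on the high-probability statement: the $\widetilde{\cO}$ hides logarithmic factors in $1/\delta$ coming from a union bound over the $T$ outer iterations (so that Assumption~\ref{hyp:act}, which is stated in expectation, can be upgraded to hold simultaneously at every $t$ with probability $1-\delta$ via Markov's inequality plus a constant-factor inflation of $\varepsilon_{\mathrm{approx}}$), and in the network width $m$ required by \citet{allen2019learning} for the relevant concentration bounds; I would note that $m$ affects only the per-sample \emph{computation}, not the sample count, so it does not appear in the bound.

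The main obstacle I anticipate is the second step — cleanly matching the error model of \citet[Theorem 1]{allen2019learning} to our regression target. Their guarantee is for agnostic learning of a target function against a fixed data distribution, whereas here the target $Q^t + \eta_t^{-1}\nabla h(\pi^t)$ changes with $t$, the label $\widehat{Q}^t$ is noisy rather than exact, and — most delicately — the step-size $\eta_t$ is growing geometrically, so the term $\eta_t^{-1}\nabla h(\pi^t)$ shrinks but $\nabla h(\pi^t)$ itself may blow up as $\pi^t$ approaches the boundary of the simplex (recall $h$ is essentially smooth, so $\norm{\nabla h(\pi^t_s)}\to\infty$ there). One must argue that $\eta_t^{-1}\nabla h(\pi^t)$ stays bounded along the trajectory — or absorb any growth into the $\widetilde{\cO}$ and the "sufficiently wide" qualifier — so that the regression target has bounded norm uniformly in $t$ and \citet{allen2019learning} applies with $t$-independent constants; this is where the "fixed and non-vanishing error floor" caveat does real work, since it lets us avoid driving $\varepsilon_{\mathrm{approx}}$ to zero and hence avoid the regime where these boundary effects would dominate.
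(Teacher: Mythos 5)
Your overall architecture matches the paper's: invoke \Cref{thm1} with the geometric step-size, solve the two conditions in \eqref{eq:samp_condition} to get $T=\widetilde{\cO}(\nu_\mu)$ and $\varepsilon_\mathrm{approx}\lesssim \varepsilon^2(1-\gamma)^2/(C_v(1+\nu_\mu)^2)$, then use \citet[Theorem 1]{allen2019learning} to bound the per-iteration SGD sample count by $\widetilde{\cO}(1/\varepsilon_\mathrm{approx}^2)$ times the cost of the $\widehat{Q}^t$ estimates, and multiply. That part is essentially the paper's proof.

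The genuine gap is in how you treat the error floor and, relatedly, in what \citet{allen2019learning} actually guarantees. Their Theorem 1 is an \emph{agnostic} guarantee: SGD on the wide ReLU network achieves population loss at most $\mathrm{OPT}+\varepsilon_\mathrm{approx}$, where $\mathrm{OPT}$ is the best error achievable by a target network of a prescribed form. To turn this into Assumption \ref{hyp:act} one must therefore assume something about the regression target $Q^t+\eta_t^{-1}\nabla h(\pi^t)$ — the paper introduces exactly this (Assumption \ref{hyp:opt}: for every $t$ there is a target network $F^{\star,t}$ with population error at most $\mathrm{OPT}$), and the ``fixed and non-vanishing error floor'' in the corollary is then $\varepsilon \geq 4(1+\nu_\mu)\sqrt{C_v\,\mathrm{OPT}}/(1-\gamma)$, i.e., it comes from the irreducible $\mathrm{OPT}$ term that SGD cannot be guaranteed to beat. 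Your proposal never introduces such a realizability assumption, never tracks $\mathrm{OPT}$, and instead attributes the floor to possible blow-up of $\nabla h(\pi^t)$ near the simplex boundary under growing $\eta_t$; that concern is legitimate but it is not where the floor comes from, and without the $\mathrm{OPT}$ decomposition your Step 2 cannot conclude that Assumption \ref{hyp:act} holds with the required $\varepsilon_\mathrm{approx}$ at every iteration (indeed, in the paper the boundary/unboundedness issue is absorbed precisely into $\mathrm{OPT}$ via Assumption \ref{hyp:opt}). A secondary bookkeeping issue: the paper feeds SGD one unbiased estimate $\widehat{Q}^t(s,a)$ per step from Algorithm \ref{alg:sampler}, at expected cost $1/(1-\gamma)$ samples (the variance is handled by the SGD analysis, not by averaging rollouts), whereas you posit $\cO(1/(1-\gamma)^2)$ trajectories per SGD step and then multiply by $\cO(1/(1-\gamma)^2)$ even though the transition count of what you describe would be $\cO(1/(1-\gamma)^3)$; this internal inconsistency happens to land on the advertised exponent but is not a correct derivation of it.
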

We provide a proof of Corollary \ref{thm:sample_nn} and an explicit expression for the error floor in Appendix \ref{app:nn}. Note that the sample complexity in Corollary \ref{thm:sample_nn} might be impacted by an additional $\poly(\varepsilon^{-1})$ term. We refer to Appendix \ref{app:nn} for more details and an alternative result (\Cref{cor:samp_alternative}) which does not include an additional $\poly(\varepsilon^{-1})$ term, enabling comparison with prior works.

\section{Numerical experiments}
We provide an empirical evaluation of AMPO in order to validate our theoretical findings. We note that the scope of this work is mainly theoretical and that we do not aim at establishing state-of-the-art results in the setting of deep RL. Our implementation is based upon the PPO implementation from PureJaxRL~\cite{lu2022discovered}, which obtains the estimates of the $Q$-function through generalized advantage estimation (GAE) \cite{schulman2016highdimensional} and performs the policy update using ADAM optimizer \cite{kingma2014adam} and mini-batches. To implement AMPO, we (i) replaced the PPO loss with the expression to minimize in Equation~\eqref{eq:pot_update}, (ii) replaced the softmax projection with the Bregman projection, (iii) saved the constants $\lambda$ along the sampled trajectories in order to compute Equation~\eqref{eq:pot_update}.
The code is available \href{https://github.com/c-alfano/Approximate-Mirror-Policy-Optimization.git}{here}.

\begin{figure}[ht]
	\centering
	\begin{subfigure}
		\centering
		\includegraphics[width=6.5cm]{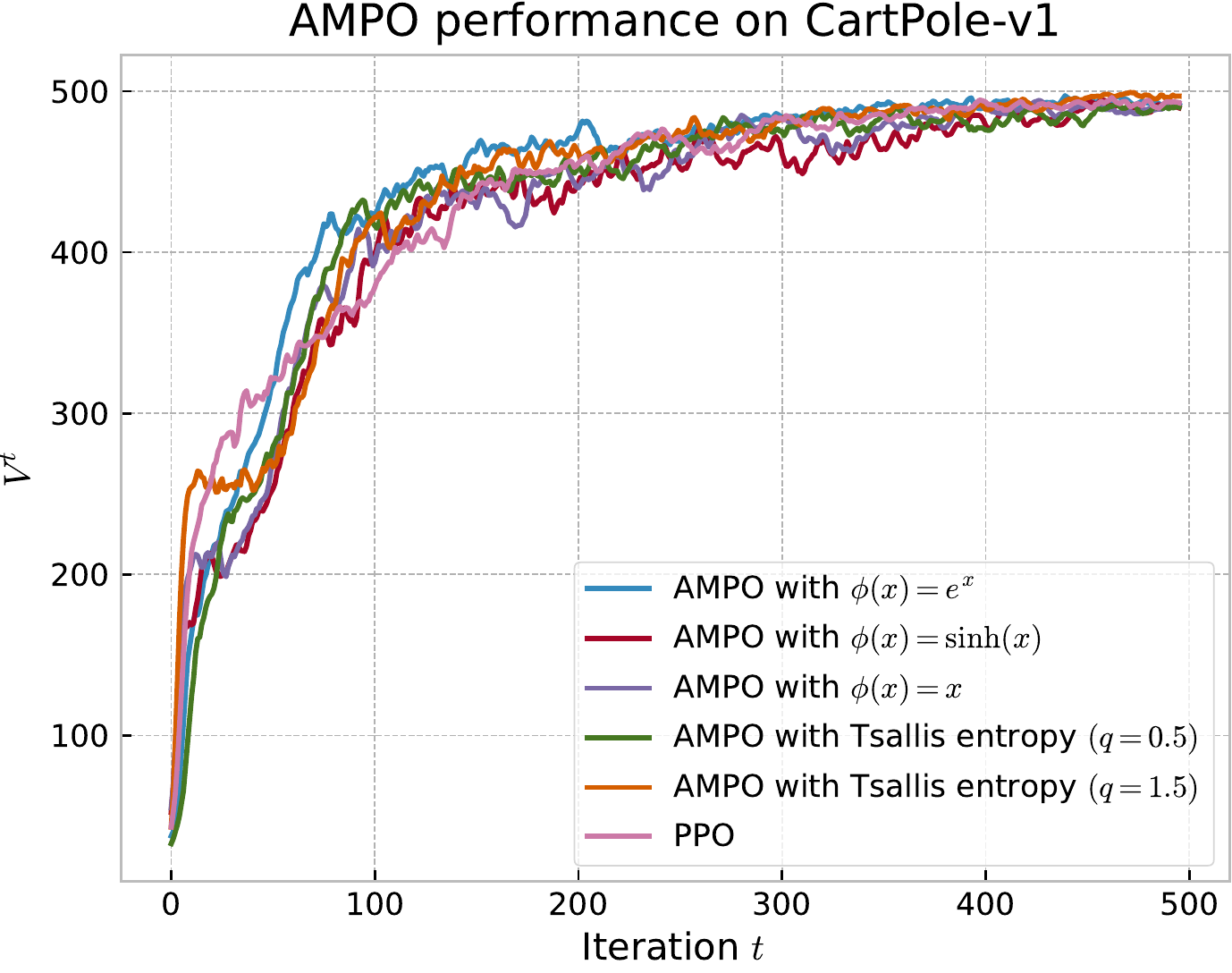}
	\end{subfigure}
	\hfill
	\begin{subfigure}
		\centering
		\includegraphics[width=6.5cm]{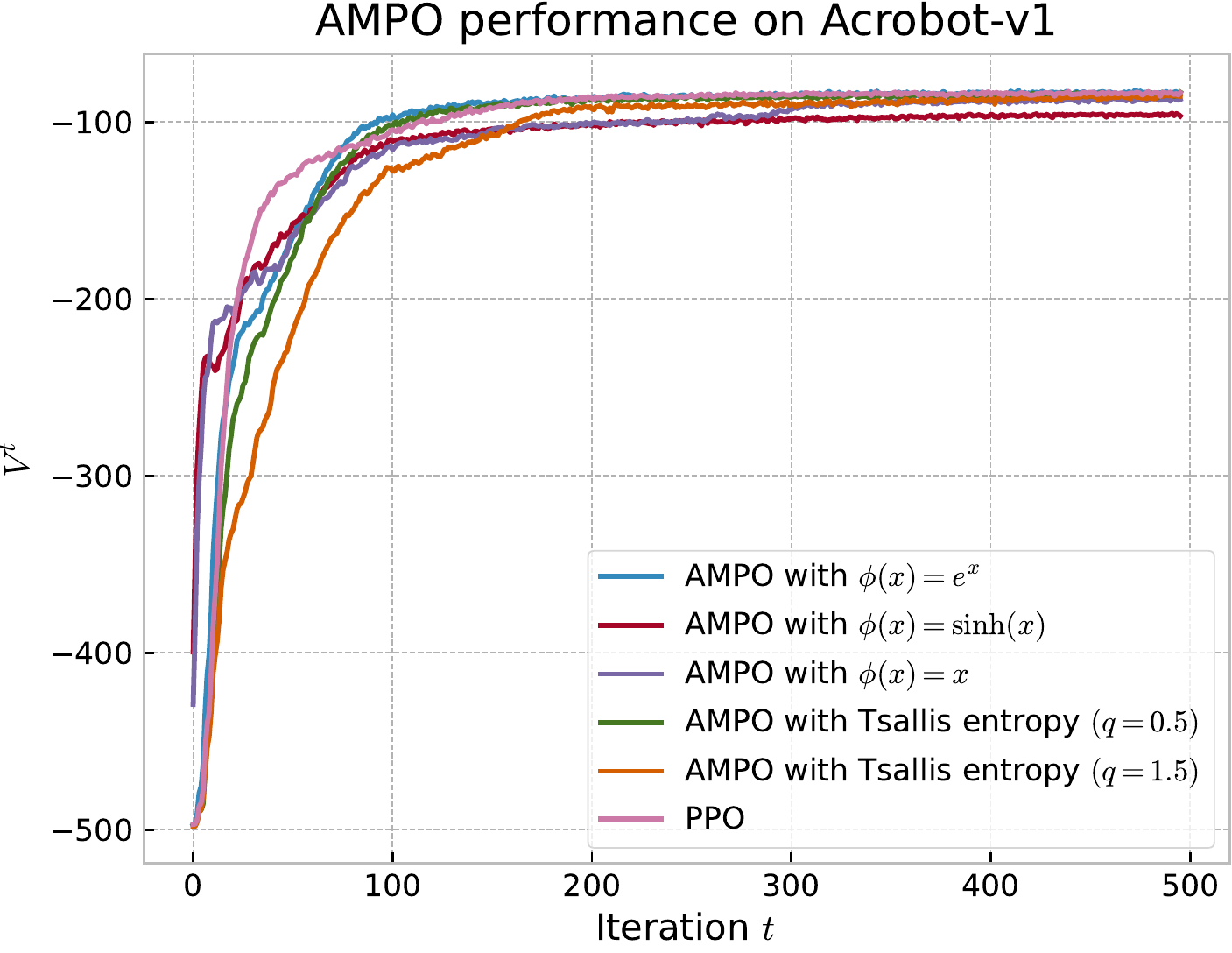}
	\end{subfigure}
	\hfill
	   \caption{Averaged performance over 50 runs of AMPO in CartPole and Acrobot environments. Note that the maximum values for CartPole and Acrobot are 500 and -80, respectively.}
	   \label{fig:1}
\end{figure}

In \Cref{fig:1}, we show the averaged performance over 100 runs %
of AMPO in two classic control environments, i.e.\ CartPole and Acrobot, in the setting of $\omega$-potential mirror maps. In particular, we choose: $\phi(x)= e^x$, which corresponds to the negative entropy (\Cref{ex:entr2}); $\phi(x)= \sinh(x)$, which corresponds to the hyperbolic entropy \cite[][see also \eqref{eq:hyperbolic}]{ghai2020exponentiated}; $\phi(x) = x$, which corresponds to the Euclidean norm (\Cref{ex:l2}); and the Tsallis entropy for two values of the entropic index $q$ \cite[][see also \eqref{eq:Tsallis}]{orabona2020modern,li2023policy}. We refer to Appendix \ref{app:pot} for a detailed discussion on these mirror maps.
We set the step-size to be constant and of value 1. For a comparison, we also plot the averaged performance over 100 runs of PPO.

The plots in \Cref{fig:1} confirm our results on the quasi-monotonicity of the updates of AMPO and on its convergence to the optimal policy. We observe that instances of AMPO with different mirror maps are very competitive as compared to PPO. We also note that, despite the convergence rates in \Cref{thm1} depend on the mirror map only in terms of a $\D^\star_0$ term, different mirror maps may result in different convergence speeds and error floors in practice. In particular, our experiments suggest that the negative entropy mirror map may not be the best choice for AMPO, and that exploring different mirror maps is a promising direction of research.

\section{Conclusion}
\label{conc}
We have introduced a novel framework for RL which, given a mirror map and any parameterization class, induces a policy class and an update rule. We have proven that this framework enjoys sublinear and linear convergence for non-decreasing and geometrically increasing step-size schedules, respectively. %
Future venues of investigation include studying the sample complexity of AMPO in on-policy and off-policy settings other than neural network parameterization, exploiting the properties of specific mirror maps to take advantage of the structure of the MDP and efficiently including representation learning in the algorithm. %
We refer to Appendix \ref{sec:fut} for a thorough discussion of future work.
We believe that the main contribution of AMPO is to provide a general framework with theoretical guarantees that can help the analysis of specific algorithms and MDP structures. AMPO recovers and improves several convergence rate guarantees in the literature, but it is important to keep in consideration how previous works have exploited particular settings, while AMPO tackles the most general case. It will be interesting to see whether these previous works combined with our fast linear convergence result can derive new efficient sample complexity results.

\looseness = -1

\begin{ack}
    We thank the anonymous reviewers for their helpful comments. Carlo Alfano was supported by the Engineering and Physical Sciences Research Council and thanks G-Research for partly funding attendance to NeurIPS 2023.
\end{ack}

\bibliography{References}
\bibliographystyle{plainnat}

\newpage

\appendix

\part{Appendix}

\parttoc

Here we provide the related work discussion, the deferred proofs from the main paper and some additional noteworthy observations.

\section{Related work}

We provide an extended discussion for the context of our work, including 
a comparison of different PMD frameworks and a comparison of the convergence theories of PMD in the literature.
Furthermore, we discuss future work, such as extending our analysis to the dual averaging updates and developing sample complexity analysis of AMPO.

\subsection{Comparisons with other policy mirror descent frameworks}
\label{sec:PMD_framework}

In this section, we give a comparison of AMPO with some of the most popular policy optimization algorithms in the literature. First, recall AMPO's update through \eqref{eq:bregman2}, that is, for all $s \in \S$,
\begin{align} \label{eq:bregman3}
    \pi^{t+1}_s %
    &\in \argmax_{\pi_s \in \Delta(\mathcal{A})} \langle \eta_t f_s^{t+1} - \nabla h(\pi^t_s), \pi_s \rangle - \mathcal{D}_h(\pi_s, \pi^t_s),
\end{align}
where $\eta_t f_s^{t+1} - \nabla h(\pi^t_s) \approx \eta_t Q^t_s$ following Line \ref{ln:actor} of Algorithm \ref{alg}. The proof of \eqref{eq:bregman3} can be found in Lemma \ref{lem:bregman} in Appendix \ref{app:bregman_projection}.

\paragraph*{Generalized Policy Iteration (GPI) \cite{sutton2018reinforcement}.}

The update consists of evaluating the Q-function of the policy and obtaining the new policy by acting greedily with respect to the estimated Q-function. That is, for all $s \in \S$,
\begin{align}
\pi^{t+1}_s \in \argmax_{\pi_s\in\Delta(\A)} \langle Q^t_s,\pi_s \rangle.
\end{align}
AMPO behaves like GPI when we perfectly approximate $f_s^{t+1}$ to the value of $Q^t_s$ (e.g.\ when we consider the tabular case) and $\eta_t \rightarrow +\infty$ (or $\eta_t^{-1} \rightarrow 0$) which is the case with the use of geometrically increasing step-size schedule.

\ruiline{
\paragraph*{Mirror Descent Modified Policy Iteration (MD-MPI) \cite{geist2019theory}.} 
Consider the full policy space $\Pi = \Delta(\A)^{\S}$.
The MD-MPI's update is as follows:
\begin{align} \label{eq:MD-MPI}
\pi^{t+1}_s \in \argmax_{\pi_s\in\Delta(\A)} \langle Q^t_s, \pi_s \rangle - \mathcal{D}_h(\pi_s, \pi^t_s), \quad \forall s \in \S.
\end{align}
In this case, the PMD framework of \citet{xiao2022convergence}, which is a special case of AMPO, recovers MD-MPI with the fixed step-size $\eta_t = 1$. Consequently, Assumption \ref{hyp:act} holds with $\varepsilon_\mathrm{approx} = 0$, and we obtain the sublinear convergence of MD-MPI through Theorem \ref{thm1}, which is
    \begin{align*}
        V^\star(\mu)-\frac{1}{T}\sum_{t<T}\E\left[V^t(\mu)\right]
        \leq \frac{1}{T}\bigg(\frac{\D^\star_0}{(1-\gamma)} + \frac{\nu_\mu}{1-\gamma}\bigg).
    \end{align*}
As explained later in Appendix \ref{sec:rho} that the distribution mismatch coefficient $\nu_\mu$ is upper bounded by $\mathcal{O}(\frac{1}{1-\gamma})$, we obtain an average regret of MD-MPI as $\mathcal{O}(\frac{1}{(1-\gamma)^2T})$, which matches the convergence results in \citet[Corollary 3]{geist2019theory}.
}

\paragraph*{Trust Region Policy Optimization (TRPO) \cite{trpo}.}
The TRPO's update is as follows:
\begin{align}
\pi^{t+1} &\in \argmax_{\pi\in\Pi}\E_{s\sim d^t_\mu}\left[\langle A^t_s,\pi_s\rangle\right], \\
& \text{ such that} \ \E_{s\sim d^t_\mu} \left[D_h(\pi_s^t,\pi_s)\right] \leq \delta, \nonumber
\end{align}
where  $A^t_s = Q^t_s-V^t$ represents the advantage function, $h$ is the negative entropy and $\delta>0$. Like GPI, TRPO is equivalent to AMPO 
when at each time $t$, the admissible policy class is $\Pi^t = \{\pi\in\Delta(\A)^\S:~\E_{s\sim d^t_\mu}D_h(\pi_s^t,\pi_s)\leq\delta\}$, and we perfectly approximate $Q^t_s$ with $\eta_t\rightarrow +\infty$.

\paragraph*{Proximal Policy Optimization (PPO) \cite{RN215}.}

The PPO's update consists of maximizing a surrogate function depending on the policy gradient with respect to the new policy. Namely,
\begin{align}
\pi^{t+1} \in \argmax_{\pi\in\Pi}\E_{s\sim d^t_\mu} \left[L(\pi_s, \pi^t_s)\right],
\end{align}
with 
\[L(\pi_s, \pi^t_s) = \E_{a\sim\pi^t}[\min\left(r^\pi(s,a)A^t(s,a),\text{clip}(r^\pi(s,a),1\pm \epsilon)A^t(s,a)\right)],\]
where $r^\pi(s,a) = \pi(s,a)/\pi^t(s,a)$ is the probability ratio between the current policy $\pi^t$ and the new one, and the function $\text{clip}(r^\pi(s,a),1\pm \epsilon)$ clips the probability ratio $r^\pi(s,a)$ to be no more than $1 + \epsilon$ and no less than $1 - \epsilon$. PPO has also a KL variation \cite[Section 4]{RN215}, where the objective function $L$ is defined as  %
\[L(\pi_s, \pi^t_s) = \eta_t\langle A^t_s,\pi_s\rangle - D_h(\pi_s^t,\pi_s),\]
where $h$ is the negative entropy. In an exact setting and when $\Pi = \Delta(\A)^\S$, the KL variation of PPO still differs from AMPO because it inverts the terms in the Bregman divergence penalty.

\paragraph*{Mirror Descent Policy Optimization (MDPO.) \cite{tomar2022mirror}.}
The MDPO's update is as follows:
\begin{equation}
    \pi^{t+1} \in \argmax_{\pi\in \Pi} \E_{s\sim d^t_\mu} [ \langle \eta_t A^t_s, \pi_s\rangle - D_h(\pi_s, \pi_s^t)],
\end{equation}
where $\Pi$ is a parameterized policy class. While it is equivalent to AMPO in an exact setting and when $\Pi = \Delta(\A)^\S$, as we show in Appendix \ref{app:equiv}, the difference between the two algorithms lies in the approximation of the exact algorithm.

\paragraph*{Functional Mirror Ascent Policy Gradient (FMA-PG) \cite{vaswani2022general}.} The FMA-PG's update is as follows:
\begin{align}
    \pi^{t+1} &\in \argmax_{\pi^\theta:\; \theta\in \Theta} \E_{s\sim d^t_\mu} [V^t(\mu)+ \langle\eta_t\nabla_{\pi_s} V^t(\mu)\big|_{\pi=\pi^t}, \pi^\theta_s-\pi^t_s\rangle - D_h(\pi^\theta_s, \pi_s^t)] \\
    &\in \argmax_{\pi^\theta:\; \theta\in \Theta} \E_{s\sim d^t_\mu} [\langle \eta_t Q^t_s, \pi_s^\theta\rangle - D_h(\pi^\theta_s, \pi_s^t)], \nonumber
\end{align}
The second line is obtained by the definition of $V^t$ and the policy gradient theorem \eqref{eq:grad}. The discussion is the same as the previous algorithm.

\paragraph*{Mirror Learning \cite{kuba2022mirror}.}
The on-policy version of the algorithm consists of the following update:
\begin{equation}
    \pi^{t+1} = \argmax_{\pi\in \Pi(\pi^t)} \E_{s\sim d^t_\mu} [ \langle Q^t_s, \pi_s\rangle - D(\pi_s, \pi_s^t)],
\end{equation}
where $\Pi(\pi^t)$ is a policy class that depends on the current policy $\pi^t$ and the drift functional $D$ is defined as a map $D:\Delta(\A)\times\Delta(\A)\rightarrow \R$ such that $D(\pi_s, \bar{\pi}_s)\geq0$ and $\nabla_{\pi_s} D(\pi_s, \bar{\pi}_s)\big|_{\pi_s=\bar{\pi}_s} = 0$. The drift functional $D$ recovers the Bregman divergence as a particular case, in which case Mirror Learning is equivalent to AMPO in an exact setting and when $\Pi = \Delta(\A)^\S$. Again, the main difference between the two algorithms lies in the approximation of the exact algorithm.

\subsection{Discussion on related work}
\label{sec:rel}

\noindent \emph{Our Contributions.} Our work provides a framework for policy optimization -- AMPO. For AMPO, we establish in Theorem \ref{thm1} both $\cO(1/T)$ convergence guarantee by using a non-decreasing step-size and linear convergence guarantee by using a geometrically increasing step-size. %
Our contributions to the prior literature on sublinear and linear convergence of policy optimization methods can be summarized as follows.
\begin{itemize}
	\item The generality of our framework %
    allows \Cref{thm1} to unify previous results in the literature and generate new theoretically sound algorithms under one guise. Both the sublinear and the linear convergence analysis of natural policy gradient (NPG) with softmax tabular policies \cite{xiao2022convergence} or with log-linear policies \cite{alfano2022linear,yuan2023linear} are special cases of our general analysis. As mentioned in Appendix \ref{sec:PMD_framework}, MD-MPI \cite{geist2019theory} in the tabular setting is also a special case of AMPO. 
    Thus, \Cref{thm1} recovers the best-known convergence rates in both the tabular setting \cite{geist2019theory,xiao2022convergence} and the non-tabular setting \cite{cayci2022finite, alfano2022linear,yuan2023linear}.
    AMPO also generates new algorithms by selecting mirror maps, such as the $\epsilon$-negative entropy mirror map in Appendix \ref{app:pot} associated with Algorithm \ref{algo:proj2}, and generalizes the projected Q-descent algorithm \cite{xiao2022convergence} from the tabular setting to a general parameterization class $\F^\Theta$.\looseness = -1
    \item As discussed in \Cref{sec:theory}, the results of \Cref{thm1} hold for a general setting with fewer restrictions than in previous work. The generality of the assumptions of Theorem \ref{thm1} allows the application of our theory to specific settings, where existing sample complexity analyses could be improved thanks to the linear convergence of AMPO.
    For instance, since Theorem \ref{thm1} holds for any structural MDP, AMPO could be applied directly to the linear MDP setting to derive a sample complexity analysis of AMPO which could improve that of \citet{zanette2021cautiously} and \citet{hu2022actorcritic}. As we discuss in Appendix \ref{sec:fut}, this is a promising direction for future work.
	\item From a technical point of view, our main contributions are: \Cref{def:pol} introduces a novel way of incorporating general parameterization into the policy; the update in Line~\ref{ln:actor} of Algorithm \ref{alg} simplifies the policy optimization step into a regression problem; and \Cref{lemma:3pdl} establishes a key result for policies belonging to the class in \Cref{def:pol}. Together, these innovations have allowed us to establish new state-of-the-art results in \Cref{thm1} by leveraging the modern PMD proof techniques of \citet{xiao2022convergence}.
\end{itemize}

In particular, our technical novelty with respect to \citet{xiao2022convergence, alfano2022linear}, and \citet{yuan2023linear} can be summarized as follows. 
\begin{itemize}
    \item In terms of algorithm design, AMPO is an innovation. The PMD algorithm proposed by \citet{xiao2022convergence} is strictly limited to the tabular setting and, although it is well defined for any mirror map, it cannot include general parameterization. \citet{alfano2022linear} and \citet{yuan2023linear} propose a first generalization of the PMD algorithm in the function approximation regime thanks to the linear compatible function approximation framework \cite{agarwal2021theory}, but are limited to considering the log-linear policy parameterization and the entropy mirror map. On the contrary, AMPO solves the problem of incorporating general parameterizations in the policy thanks to \Cref{def:pol} and the extension of the compatible function approximation framework from linear to nonlinear, which corresponds to the parameter update in Line \ref{ln:actor} of Algorithm \ref{alg}. This innovation is key to the generality of the algorithm, as it allows AMPO to employ any mirror map and any parameterization class. Moreover, AMPO is computationally efficient for a large class of mirror maps (see Appendix \ref{app:pot} and Algorithms \ref{algo:proj2} and \ref{alg:proj}). Our design is readily applied to deep RL, where the policy is usually parameterized by a neural network whose last layer is a softmax transformation. %
    Our policy definition can be implemented in this setting by replacing the softmax layer with a Bregman projection, as shown in Example \ref{ex:entr2}. 
    \looseness = -1%
    \item Regarding the assumptions necessary for convergence guarantees, we have weaker assumptions. \citet{xiao2022convergence} requires an $\ell_\infty$-norm on the approximation error of $Q^t$, i.e., $\lnorm{\widehat{Q}^t-Q^t}_\infty\leq \varepsilon_{\text{approx}}$, for all $t\leq T$. \citet{alfano2022linear} and \citet{yuan2023linear} require an $L_2$-norm bound on the error of the linear approximation of $Q^t$, i.e., $\lnorm{w^\top\phi-Q^t}_{L_2(v^t)}^2 \leq \varepsilon_{\text{approx}}$ for some feature mapping $\phi:\S\times\A\rightarrow\R^d$ and vector $w\in\R^d$, for all $t\leq T$. Our approximation error $\varepsilon_{\text{approx}}$ in Assumption \ref{hyp:act} is an improvement since it does not require the bound to hold in $\ell_\infty$-norm,
    and allows any regression model instead of linear function approximation, especially neural networks, which greatly increases the representation power of $\F^\Theta$ and expands the range of applications. We further relax Assumption \ref{hyp:act} in Appendix \ref{app:rel_hyp} and show that the approximation error bound can be larger for earlier iterations. In addition, we improve the concentrability coefficients of \citet{yuan2023linear} by defining the expectation under an arbitrary state-action distribution $v^t$ instead of the state-action visitation distribution with a fixed initial state-action pair (see \citet[Equation (4)]{yuan2023linear}).
    \item As for the analysis of the algorithm, while we borrow tools from \citet{xiao2022convergence, alfano2022linear}, and \citet{yuan2023linear}, our results are not simple extensions. In fact, without our work, it is not clear from \citet{xiao2022convergence, alfano2022linear}, and \citet{yuan2023linear} whether PMD could have theoretical guarantees in a setting with general parameterization and an arbitrary mirror map. The two main problems on this front are the non-convexity of the policy class, which prevents the use of the three-point descent lemma by \citet[Lemma 3.2]{chen1993convergence} (or by \citet[Lemma 6]{xiao2022convergence}), and the fact that the three-point identity used by \citet[Equation 4]{alfano2022linear} holds only for the negative entropy mirror map. Our \Cref{lemma:3pdl} 
    successfully addresses general policy parameterization and arbitrary mirror maps thanks to the design of AMPO. Additionally, we provide a sample complexity analysis of AMPO when employing shallow neural networks that improves upon previous state-of-the-art results in this setting. We further improve this sample complexity analysis in Appendix \ref{app:nn}, where we consider an approximation error assumption that is weaker than Assumption \ref{hyp:act} (see Appendix \ref{app:rel_hyp}).
\end{itemize} 

We also include a comparison wih \citet{lan2022policy}. Our diffences can be outlined in two points.
\begin{itemize}
    \item \citet{lan2022policy} propose a PMD algorithm (Algorithm 2 in their paper) that can accommodate general parameterization and arbitrary mirror maps. As AMPO, it involves a two-step procedure where the first step is to find an approximation of $Q^t - \eta_t^{-1} \nabla h(\pi^t)$ and the second step is to find the policy through a Bregman projection. However, it is unclear how to implement their algorithm in practice, as they do not propose a specific method to perform either step. We provide an explicit implementation of AMPO and identify a class of mirror maps that is computationally efficient for AMPO (see Appendix \ref{app:pot} and Algorithms \ref{algo:proj2} and \ref{alg:proj}). \looseness = -1
    \item In terms of theoretical analysis, they assume for their results that the approximation error is bounded in $\ell_\infty$-norm over the action space. Let
    \begin{align*}
        \varepsilon_\text{det} &= \E_{s\sim v^\star}\left[\norm{\E[f^{t+1}_s] - Q^t_s - \eta_t^{-1} \nabla h(\pi^t_s)}_\infty\right],\\
        \varepsilon_{sto}&= \E_{s\sim v^\star}\left[\norm{\E[f^{t+1}_s]-f^{t+1}_s}_\infty^2\right],
    \end{align*}
    where the expectation $\E[f^{t+1}_s]$ is taken w.r.t. the stochasticity of the algorithm employed to obtain $f^{t+1}$.
    \citet{lan2022policy} assume that both $\varepsilon_\text{det}$ and $\varepsilon_{sto}$ are bounded for all iterations $t$. In contrast, our assumptions are weaker as they are required to hold for the $L_2(v)$-norm we define in \Cref{sec:algo}. Additionally, \citet{lan2022policy} establishes a $\cO(1/\sqrt{T})$ convergence rate for their algorithm without regularization and a $\cO(1/T)$ convergence rate in the regularized case, in both cases using bounded step-sizes. We improve upon these results by obtaining a $\cO(1/T)$ convergence rate without regularization and a linear convergence rate.
\end{itemize}

\noindent \emph{Related literature.} Recently, the impressive empirical success of policy gradient (PG)-based methods has catalyzed the development of theoretically sound algorithms for policy optimization. In particular, there has been a lot of attention around algorithms inspired by mirror descent (MD) \cite{nemirovski1983problem,beck2003mirror} and, more specifically, by natural gradient descent \cite{amari1998natural}. These two approaches led to policy mirror descent (PMD) methods \cite{shani2020adaptive,RN270} and natural policy gradient (NPG) methods \cite{RN159}, which, as first shown by \citet{neu2017aunified}, is a special case of PMD. 
For instance, PMD and NPG are the building blocks of the state-of-the-art policy optimization algorithms, TRPO \cite{trpo} and PPO \cite{RN215}.
Leveraging various techniques from the MD literature, it has been established that PMD, NPG, and their variants converge to the global optimum in different settings. 
We refer to global optimum convergence as an analysis that guarantees that $V^\star(\mu)-\E\left[V^T(\mu)\right] \leq \epsilon$ after $T$ iterations with $\epsilon > 0$.
As an important variant of NPG, we will also discuss the literature of the convergence analysis of natural actor-critic (NAC) \cite{RN179,RN180}. The comparison of AMPO with different methods will proceed from the tabular case to different function approximation regimes.

\paragraph{Sublinear convergence analyses of PMD, NPG and NAC.}
For softmax tabular policies, \citet{shani2020adaptive} establish a $\cO(1/\sqrt{T})$ convergence rate for  unregularized NPG and $\cO(1/T)$ for regularized NPG. \citet{agarwal2021theory,khodadadian2021finite} and \citet{xiao2022convergence} improve the convergence rate for unregularized NPG and NAC to $\cO(1/T)$ and \citet{xiao2022convergence} extends the same convergence rate to projected Q-descent. The same convergence rate is established by MD-MPI \cite{geist2019theory} through the PMD framework.
\looseness = -1

In the function approximation regime, \citet{zanette2021cautiously} and \citet{hu2022actorcritic} achieve $\cO(1/\sqrt{T})$ convergence rate by developing variants of PMD methods for the linear MDP \cite{jin2020provably} setting.
The same $\cO(1/\sqrt{T})$ convergence rate is obtained by \citet{agarwal2021theory} for both log-linear and smooth policies, while \citet{yuan2023linear} improve the convergence rate to $\cO(1/T)$ for log-linear policies. For smooth policies, the convergence rate is later improved to $\cO(1/T)$ either by adding an extra Fisher-non-degeneracy condition on the policies \cite{RN148} or by analyzing NAC under Markovian sampling \cite{xu2020improving}.
\citet{Yang2022policy} and \citet{huang2022bregman} consider Lipschitz and smooth policies \cite{yuan2022general}, obtain $\cO(1/\sqrt{T})$ convergence rates for PMD-type methods and faster $\cO(1/T)$ convergence rates by applying the variance reduction techniques SARAH \cite{nguyen2017sarah} and STORM \cite{cutkosky2019momentum}, respectively.
As for neural policy parameterization, \citet{liu2019neural} establish a $\cO(1/\sqrt{T})$ convergence rate for two-layer neural PPO. The same $\cO(1/\sqrt{T})$ convergence rate is established by \citet{wang2020neural} for two-layer neural NAC, which is later improved to $\cO(1/T)$ by \citet{cayci2022finite}, using entropy regularization. \looseness =-1

We highlight that all of the above sublinear convergence analyses, for both softmax tabular policies and the function approximation regime, are obtained either by using a decaying step-size or a constant step-size.
\ruiline{Under these step-size schemes, our AMPO's $\cO(1/T)$ sublinear convergence rate is the state of the art: it recovers the best-known convergence rates in the tabular setting \cite{geist2019theory,xiao2022convergence} without regularization; it improves the $\cO(1/\sqrt{T})$ convergence rate of \citet{zanette2021cautiously} and \citet{hu2022actorcritic} to $\cO(1/T)$ for the linear MDP setting; it recovers the best-known convergence rates for the log-linear policies \cite{yuan2023linear}; it matches the $\cO(1/T)$ sublinear convergence rate for smooth and Fisher-non-degenerate policies \cite{RN148} and the same convergence rate of \citet{Yang2022policy} and \citet{huang2022bregman} for Lipschitz and smooth policies without introducing variance reduction techniques; it matches the previous best-known convergence result in the neural network settings \cite{cayci2022finite} without regularization; lastly, it goes beyond all these results by allowing general parameterization.}
We refer to Table \ref{tab:sub} for an overview of recent sublinear convergence analyses of NPG/PMD.

\paragraph{Linear convergence analysis of PMD, NPG, NAC and other PG methods.} In the softmax tabular policy settings, the linear convergence guarantees of NPG and PMD are achieved by either adding regularization \cite{RN150,zhan2021policy,RN270,RN269} or by varying the step-sizes \cite{RN273,RN268,khodadadian2022linear,xiao2022convergence}.

In the function approximation regime, the linear convergence guarantees are achieved for NPG with log-linear policies, either by adding entropy regularization \cite{RN280} or by choosing geometrically increasing step-sizes \cite{alfano2022linear,yuan2023linear}. It can also be achieved for NAC with log-linear policy by using adaptive increasing step-sizes \cite{ChenMaguluri22aistats}.

\ruiline{Again, our AMPO's linear convergence rate is the state of the art: not only it recovers the best-known convergence rates in both the tabular setting \cite{xiao2022convergence} and the log-linear policies \cite{alfano2022linear,yuan2023linear} without regularization \cite{RN150,zhan2021policy,RN270,RN269}, nor adaptive step-sizes \cite{RN273,RN268,khodadadian2022linear,ChenMaguluri22aistats}, but also it achieves the new state-of-the-art linear convergence rate for PG-based methods with general parameterization, including the neural network parameterizations.}
We refer to Table \ref{tab:linear} for an overview of recent linear convergence analyses of NPG/PMD.

Alternatively, by exploiting a Polyak-Lojasiewicz (PL) condition \cite{polyak1963gradient,lojasiewicz1963une}, fast linear convergence results can be achieved for PG methods under different settings, such as linear quadratic control problems~\cite{fazel2018global} and softmax tabular policies with entropy regularization~\cite{RN272,yuan2022general}. 
The PL condition is extensively studied by~\citet{bhandari2019global} to identify more general MDP settings.
Like the cases of NPG and PMD, 
linear convergence of PG can also be obtained for the softmax tabular policy without regularization by choosing adaptive step sizes through exact line search \cite{RN273} or by exploiting non-uniform smoothness \cite{mei2021leveraging}.
When the PL condition is relaxed to other weaker conditions, PG methods combined with variance reduction methods such as SARAH \cite{nguyen2017sarah} and PAGE \cite{li2021PAGE} can also achieve linear convergence. 
This is shown by \citet{fatkhullin2022sharp,fatkhullin2023stochastic} when the PL condition is replaced by the weak PL condition \cite{yuan2022general}, which is satisfied by Fisher-non-degenerate policies \cite{ding2022global}.
It is also shown by \citet{zhang2021convergence}, where the MDP satisfies some hidden convexity property that contains a similar property to the weak PL condition studied by \citet{zhang2020variational}.
Lastly, linear convergence is established for the cubic-regularized Newton method \cite{Nesterov2006cubic}, a second-order method, applied to Fisher-non-degenerate policies combined with variance reduction \cite{masiha2022stochastic}.

Outside of the literature focusing on finite time convergence guarantees, \citet{vaswani2022general} and \citet{kuba2022mirror} provide a theoretical analysis for variations of PMD and show monotonic improvements for their frameworks. Additionally, \citet{kuba2022mirror} give an infinite time convergence guarantee for their framework.

\begin{small}
\begin{table*}%
  \caption{Overview of sublinear convergence results for NPG and PMD methods with constant step-size
  in different settings. 
  The \colorbox{paleaqua}{dark blue cells} contain our new results. 
  The \colorbox{aliceblue}{light blue cells} contain previously known results that we recover as special cases of our analysis. %
  The \colorbox{piggypink}{pink cells} contain previously known results that we improve upon by providing a faster convergence rate.
  The white cells contain existing results that have already been improved by other literature or that we could not recover under our general analysis.
  }
  \label{tab:sub}
  \begin{small}
  \begin{center}
    \begin{tabular}{ccl}
    \hline
    \multicolumn{1}{c}{\textbf{Algorithm}} 
    & \multicolumn{1}{c}{\textbf{Rate}} 
    & \multicolumn{1}{c}{\textbf{Comparisons to our works}} \\[1ex] \hline
    \multicolumn{3}{l}{\textbf{Setting}: Softmax tabular policies} \\[1ex] \hline
    \multicolumn{1}{c}{\Centerstack{Adaptive TRPO \cite{shani2020adaptive}}} 
    & \multicolumn{1}{c}{$\cO(1/\sqrt{T})$}  
    & They employ regularization  \\[1ex] \hline
    \multicolumn{1}{c}{\Centerstack{Tabular off-policy NAC \cite{khodadadian2021finite}}}  
    & \multicolumn{1}{c}{$\cO(1/T)$}  
    & \Centerstack[l]{We have a weaker approximation error \\ Assumption \ref{hyp:act} with $L_2$ instead of $\ell_\infty$ norm} \\[1ex] \hline
    \multicolumn{1}{c}{\Centerstack{Tabular NPG \cite{agarwal2021theory}}}
    & \multicolumn{1}{c}{$\cO(1/T)$}  &   \\[1ex] \hline
    \multicolumn{1}{c}{\cellcolor{aliceblue} MD-MPI \cite{geist2019theory}}
    & \multicolumn{1}{c}{\cellcolor{aliceblue} $\cO(1/T)$}  
    & \cellcolor{aliceblue} \Centerstack[l]{We match their results when $f^\theta(s,a) = \theta_{s,a}$.} \\[1ex] \hline
    \multicolumn{1}{c}{\cellcolor{aliceblue} \Centerstack{Tabular NPG/ \\ projected Q-descent \cite{xiao2022convergence}}}
    & \multicolumn{1}{c}{\cellcolor{aliceblue} $\cO(1/T)$}  
    & \cellcolor{aliceblue} \Centerstack[l]{We recover their results when $f^\theta(s,a) = \theta_{s,a}$; \\ we have a weaker approximation error \\ Assumption \ref{hyp:act} with $L_2$ instead of $\ell_\infty$ norm.} \\[1ex] \hline
    \multicolumn{3}{l}{\textbf{Setting}: Log-linear policies} \\[1ex] \hline
    \multicolumn{1}{c}{\Centerstack{Q-NPG \cite{agarwal2021theory}}}
    & \multicolumn{1}{c}{$\cO(1/\sqrt{T})$}  &   \\[1ex] \hline
    \multicolumn{1}{c}{\cellcolor{aliceblue} \Centerstack{Q-NPG/NPG  \cite{yuan2023linear}}}
    & \multicolumn{1}{c}{\cellcolor{aliceblue} $\cO(1/T)$}  
    & \cellcolor{aliceblue} \Centerstack[l]{We recover their results when $f^\theta(s,a)$ is linear \\ to $\theta$. } \\[1ex] \hline
    \multicolumn{3}{l}{\textbf{Setting}: Softmax two-layer neural policies} \\[1ex] \hline
    \multicolumn{1}{c}{\Centerstack{Neural PPO \cite{liu2019neural}}}
    & \multicolumn{1}{c}{$\cO(1/\sqrt{T})$}  &   \\[1ex] \hline
    \multicolumn{1}{c}{\Centerstack{Neural NAC \cite{wang2020neural}}}
    & \multicolumn{1}{c}{$\cO(1/\sqrt{T})$}  &   \\[1ex] \hline
    \multicolumn{1}{c}{\cellcolor{piggypink} \Centerstack{Regularized neural NAC \cite{cayci2022finite}}}
    & \multicolumn{1}{c}{\cellcolor{piggypink} $\cO(1/T)$}  
    & \cellcolor{piggypink} \Centerstack[l]{We match their results without regularization.}  \\[1ex] \hline
    \multicolumn{3}{l}{\textbf{Setting}: Linear MDP} \\[1ex] \hline
    \multicolumn{1}{c}{
    \cellcolor{piggypink} \Centerstack{NPG \cite{zanette2021cautiously,hu2022actorcritic}}}
    & \multicolumn{1}{c}{\cellcolor{piggypink} $\cO(1/\sqrt{T})$}  
    & \cellcolor{piggypink} \\[1ex] \hline
    \multicolumn{3}{l}{\textbf{Setting}: Smooth policies} \\[1ex] \hline
    \multicolumn{1}{c}{\Centerstack{NPG \cite{agarwal2021theory}}}
    & \multicolumn{1}{c}{$\cO(1/\sqrt{T})$}  &   \\[1ex] \hline
    \multicolumn{1}{c}{\Centerstack{NAC under Markovian sampling \cite{xu2020improving}}}
    & \multicolumn{1}{c}{$\cO(1/T)$}  &   \\[1ex] \hline
    \multicolumn{1}{c}{\Centerstack{NPG with \\ Fisher-non-degenerate policies \cite{RN148}}}
    & \multicolumn{1}{c}{$\cO(1/T)$}  &   \\[1ex] \hline
    \multicolumn{3}{l}{\textbf{Setting}: Lipschitz and Smooth policies} \\[1ex] \hline
    \multicolumn{1}{c}{\Centerstack{Variance reduced PMD \cite{Yang2022policy,huang2022bregman}}}
    & \multicolumn{1}{c}{$\cO(1/T)$}  
    & \Centerstack[l]{We match their results without variance reduction.} \\[1ex] \hline
    \multicolumn{3}{l}{\textbf{Setting}: Bregman projected policies with general parameterization and mirror map} \\[1ex] \hline
    \multicolumn{1}{c}{ \Centerstack{Regularized PMD \cite{lan2022policy}}}  
    & \multicolumn{1}{c}{ $\cO(1/T)$} &  \Centerstack[l]{We match their results without regularization; \\
    we have a weaker approximation error \\ Assumption \ref{hyp:act} with $L_2$ instead of $\ell_\infty$ norm.} \\[1ex] \hline
    \multicolumn{1}{c}{\cellcolor{paleaqua} \Centerstack{AMPO (Theorem \ref{thm1}, this work)}}  
    & \multicolumn{1}{c}{\cellcolor{paleaqua} $\cO(1/T)$} & \cellcolor{paleaqua}  \\[1ex] \hline
    \end{tabular}
  \end{center}
\end{small}
\end{table*}
\end{small}

\begin{small}
\begin{table*}%
  \caption{Overview of linear convergence results for NPG and PMD methods in different settings. 
  The \colorbox{paleaqua}{darker cells} contain our new results. 
  The \colorbox{aliceblue}{light cells} contain previously known results that we recover as special cases of our analysis, and extend the permitted concentrability coefficients settings. 
  The white cells contain existing results that we could not recover under our general analysis.
  }
  \label{tab:linear}
  \begin{small}
  \begin{center}
    \begin{tabular}{cccccc}
    \hline
    \multicolumn{1}{c}{\textbf{Algorithm}} 
    & \textbf{Reg.} & \textbf{C.S.} & \textbf{A.I.S.}
    & \multicolumn{1}{c}{\textbf{N.I.S.}$^*$} 
    & \multicolumn{1}{c}{\textbf{Error assumption$^{**}$}} \\[1ex] \hline
    \multicolumn{6}{l}{\textbf{Setting}: Softmax tabular policies} \\[1ex] \hline
    \multicolumn{1}{c}{ \Centerstack{NPG \cite{RN150}}}  
    &  \cmark &  \cmark & 
    & \multicolumn{1}{c}{} & $\ell_\infty$ \\[1ex] \hline
    \multicolumn{1}{c}{ \Centerstack{PMD \cite{zhan2021policy}}}  
    &  \cmark &  \cmark & 
    & \multicolumn{1}{c}{} & $\ell_\infty$  \\[1ex] \hline
    \multicolumn{1}{c}{ \Centerstack{NPG \cite{RN270}}}  
    &  \cmark &   & 
    & \multicolumn{1}{c}{\cmark} 
    & $\ell_\infty$ \\[1ex] \hline
    \multicolumn{1}{c}{ \Centerstack{NPG \cite{RN269}}}  
    &  \cmark &   & 
    & \multicolumn{1}{c}{\cmark} 
    & $\ell_\infty$  \\[1ex] \hline
    \multicolumn{1}{c}{ \Centerstack{NPG \cite{RN273}}}  
    &   &   & \cmark
    & \multicolumn{1}{c}{} &   \\[1ex] \hline
    \multicolumn{1}{c}{ \Centerstack{NPG \cite{RN268,khodadadian2022linear}}}  
    &   &   & \cmark
    & \multicolumn{1}{c}{} & $\ell_\infty$ \\[1ex] \hline
    \multicolumn{1}{c}{\cellcolor{aliceblue} \Centerstack{NPG / Projected Q-descent \cite{xiao2022convergence}}}  
    & \cellcolor{aliceblue}  &  \cellcolor{aliceblue} & \cellcolor{aliceblue}
    & \multicolumn{1}{c}{\cellcolor{aliceblue} \cmark} 
    & \cellcolor{aliceblue} $\ell_\infty$ \\[1ex] \hline
    \multicolumn{6}{l}{\textbf{Setting}: Log-linear policies} \\[1ex] \hline
    \multicolumn{1}{c}{ \Centerstack{NPG \cite{RN280}}}  
    &  \cmark &  \cmark & 
    & \multicolumn{1}{c}{} & $L_2$ \\[1ex] \hline
    \multicolumn{1}{c}{ \Centerstack{Off-policy NAC \cite{ChenMaguluri22aistats}}}  
    &  &  & \cmark & \multicolumn{1}{c}{} 
    & $\ell_\infty$ \\[1ex] \hline
    \multicolumn{1}{c}{\cellcolor{aliceblue} \Centerstack{Q-NPG \cite{alfano2022linear}}}  
    & \cellcolor{aliceblue} & \cellcolor{aliceblue} & \cellcolor{aliceblue}
    & \multicolumn{1}{c}{\cellcolor{aliceblue} \cmark} 
    & \cellcolor{aliceblue} $L_2$ \\[1ex] \hline
    \multicolumn{1}{c}{\cellcolor{aliceblue} \Centerstack{Q-NPG/NPG \cite{yuan2023linear}}}  
    & \cellcolor{aliceblue} & \cellcolor{aliceblue} & \cellcolor{aliceblue}
    & \multicolumn{1}{c}{\cellcolor{aliceblue} \cmark} 
    & \cellcolor{aliceblue} $L_2$ \\[1ex] \hline
    \multicolumn{6}{l}{\textbf{Setting}: Bregman projected policies with general parameterization and mirror map} \\[1ex] \hline
    \multicolumn{1}{c}{\cellcolor{paleaqua} \Centerstack{AMPO (Theorem \ref{thm1}, this work)}}  
    & \cellcolor{paleaqua} & \cellcolor{paleaqua} & \cellcolor{paleaqua}
    & \multicolumn{1}{c}{\cellcolor{paleaqua} \cmark} & \cellcolor{paleaqua}  $L_2$ \\[1ex] \hline
    \end{tabular}
  \end{center}
\end{small}
  {$^{*}$ \footnotesize 
  \textbf{Reg.}: regularization; \textbf{C.S.}: constant step-size; 
  \textbf{A.I.S.}: Adaptive increasing step-size; \textbf{N.I.S.}: Non-adaptive increasing step-size.} \\ 
  {$^{**}$ \footnotesize 
  \textbf{Error assumption.}: $\ell_\infty$ means that the approximation error assumption uses the $\ell_\infty$-norm; and $L_2$ means that the approximation error assumption uses the weaker $L_2$ norm.}
\end{table*}
\end{small}

\subsection{Future work}
\label{sec:fut}

Our work opens several interesting research directions in both algorithmic and theoretical aspects.

From an algorithmic point of view, the updates in Lines \ref{ln:actor} and \ref{ln:proj} of AMPO are not explicit. This might be an issue in practice, especially for large scale RL problems. It would be interesting to design efficient regression solver for minimizing the approximation error in Line~\ref{ln:actor} of Algorithm \ref{alg}. For instance, by using the dual averaging algorithm \cite[Chapter 4]{beck2017first}, 
it could be possible to replace the term $\nabla h(\pi^t_s)$ with $f^t_s$ for all $s \in \S$, to make the computation of the algorithm more efficient. That is, it could be interesting to consider the following variation of Line~\ref{ln:actor} in Algortihm \ref{alg}: 
\begin{align} \label{eq:dual_averaging}
\norm{f^{t+1} - {Q}^t-\frac{\eta_{t-1}}{\eta_t} f^t}_{L_2(v^t)}^2\leq \varepsilon_\mathrm{approx}.
\end{align}
Notice that \eqref{eq:dual_averaging} has the same update as \eqref{eq:trpo}, however, \eqref{eq:dual_averaging} is not restricted to using the negative entropy mirror map.
To efficiently solve the regression problem in Line~\ref{ln:actor} of Algorithm \ref{alg}, one may want to apply modern variance reduction techniques \cite{nguyen2017sarah,cutkosky2019momentum,li2021PAGE}. This has been done by \citet{RN148} for NPG method.

From a theoretical point of view, it would be interesting to derive a sample complexity analysis for AMPO in specific settings, by leveraging its linear convergence. As mentioned for the linear MDP \cite{jin2020provably} in Appendix \ref{sec:rel}, one can apply the linear convergence theory of AMPO to other structural MDPs, e.g., block MDP \cite{du2019provably}, factored MDP \cite{kearns1999efficient,sun2019model-based}, RKHS linear MDP and RKHS linear mixture MDP \cite{du2021bilinear}, to build new sample complexity results for these settings, since the assumptions of \Cref{thm1} do not impose any constraint on the MDP.
On the other hand, it would be interesting to explore the interaction between the Bregman projected policy class and the expected Lipschitz and smooth policies \cite{yuan2022general} and the Fish-non-degenerate policies \cite{RN148} to establish new improved sample complexity results in these settings, again thanks to the linear convergence theory of AMPO.

Additionally, it would be interesting to study the application of AMPO to the offline setting. In the main text, we have discussed how to extend Algorithm \ref{alg} and \Cref{thm1} to the offline setting, where $v^t$ can be set as the state-action distribution induced by an arbitrary behavior policy that generates the data. However, we believe that this direction requires further investigation. One of the major challenges of offline RL is dealing with the distribution shifts that stem from the mismatch between the trained policy $\pi^t$ and the behaviour policy. Several methods have been introduced to deal with this issue, such as constraining the current policy to be close to the behavior policy \cite{levine2020offline}. We leave introducing offline RL techniques in AMPO as future work.

Another direction for future work is extending the policy update of AMPO to mirror descent algorithm based on value iteration and Bellman operators, such as MD-MPI \cite{geist2019theory}, in order to extend existing results to the general parameterization setting. 
Other interesting settings that have been addressed using the PMD framework are mean-field games \cite{yardim2023policy} and constrained MDPs \cite{ding2020natural}. We hope to build on the existing literature for these settings and see whether our results can bring any improvements.

Finally, this work theoretically indicates that, perhaps the most important future work of PMD-type algorithms is to design efficient policy evaluation algorithms to make the estimation of the $Q$-function as accurate as possible, such as using offline data for training, and to construct adaptive representation learning for $\F^\Theta$ to closely approximate $Q$-function, so that $\epsilon_\mathrm{approx}$ is guaranteed to be small. This matches one of the most important research questions for deep Q-learning type algorithms for general policy optimization problems.

\section{Equivalence of \texorpdfstring{\eqref{eq:mirr1}}{(9)}-\texorpdfstring{\eqref{eq:mirr2}}{(10)} and \texorpdfstring{\eqref{eq:singleup}}{(11)} in the tabular case}
\label{app:equiv}

To demonstrate the equivalence between the two-step update \eqref{eq:mirr1}-\eqref{eq:mirr2} and the one-step update \eqref{eq:singleup} for policy mirror descent in the tabular case, it is sufficient to validate the following lemma, which comes from the optimization literature. The proof of this lemma can be found in \citet[Chapter 4.2]{RN186}. However, for the sake of completeness, we present the proof here.
\looseness = -1

\begin{lemma}[Right after Theorem 4.2 in \citet{RN186}] \label{lem:equiv}
Consider the mirror descent update in \eqref{eq:mirrupd1}-\eqref{eq:mirrupd2} for the minimization of a function $V(\cdot)$, that is,
\begin{align}
    y^{t+1} &= \nabla h(x^t)-\eta_t \nabla V(x)|_{x = x^t}, \label{eq:mirrupd1'} \\
    x^{t+1} &= \text{Proj}_\X^h(\nabla h^\ast(y^{t+1})). \label{eq:mirrupd2'}
\end{align}
Then the mirror descent update can be rewritten as
\begin{align} \label{eq:mirr_beck}
x^{t+1} \in \argmin_{x \in \X} \eta_t\langle x, \nabla V(x)|_{x = x^t} \rangle+ \D_h(x, x^t).
\end{align}
\end{lemma}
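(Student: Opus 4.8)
The plan is to show the two-step mirror descent update equals the minimizer of the proximal objective by unfolding the definitions and using the first-order optimality characterization of the Bregman projection. First I would write out the right-hand side of \eqref{eq:mirr_beck} and recognize that, up to additive constants independent of $x$, the function $\eta_t\langle x,\nabla V(x)|_{x=x^t}\rangle + \D_h(x,x^t)$ equals $h(x) - \langle \nabla h(x^t) - \eta_t\nabla V(x)|_{x=x^t},\, x\rangle + (\text{const})$. By \eqref{eq:mirrupd1'}, the linear coefficient is precisely $y^{t+1}$, so the objective is $h(x) - \langle y^{t+1}, x\rangle$ plus a constant. Hence
\[
\argmin_{x\in\X}\, \eta_t\langle x, \nabla V(x)|_{x=x^t}\rangle + \D_h(x,x^t) \;=\; \argmin_{x\in\X}\, h(x) - \langle y^{t+1}, x\rangle.
\]

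Next I would connect the latter to the Bregman projection. For any $y^\ast\in\R^{|\A|}$, set $z = \nabla h^\ast(y^\ast)$, so that $\nabla h(z) = y^\ast$ by \eqref{eq:id}. Then for $x\in\X\subseteq\Y$,
\[
\D_h(x,z) = h(x) - h(z) - \langle \nabla h(z), x - z\rangle = h(x) - \langle y^\ast, x\rangle + \big(\langle y^\ast, z\rangle - h(z)\big),
\]
and the parenthesized term does not depend on $x$. Therefore $\argmin_{x\in\X}\D_h(x,z) = \argmin_{x\in\X} h(x) - \langle y^\ast,x\rangle$, i.e. $\text{Proj}_\X^h(\nabla h^\ast(y^\ast)) = \argmin_{x\in\X} h(x) - \langle y^\ast, x\rangle$. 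Applying this with $y^\ast = y^{t+1}$ and combining with the previous display gives
\[
x^{t+1} = \text{Proj}_\X^h(\nabla h^\ast(y^{t+1})) = \argmin_{x\in\X} h(x) - \langle y^{t+1},x\rangle = \argmin_{x\in\X}\, \eta_t\langle x,\nabla V(x)|_{x=x^t}\rangle + \D_h(x,x^t),
\]
which is exactly \eqref{eq:mirr_beck}.

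The main obstacle — really the only subtlety — is justifying that the computation is legitimate despite the fact that $h$ and $\nabla h^\ast$ may only be well-behaved on the interior of $\Y$, and that the minimizers above are genuinely attained and unique. Strict convexity of $h$ (part of the mirror map definition) ensures uniqueness of the $\argmin$ whenever it exists, and the essential smoothness assumption guarantees that the minimizer of $\D_h(\cdot,z)$ over $\X$ lies in the relative interior where $\nabla h$ is defined, so that the identity $\D_h(x,z) = h(x) - \langle\nabla h(z),x\rangle + \text{const}$ is valid for all relevant $x$. I would also remark that the constant terms dropped in each step are finite precisely because $z = \nabla h^\ast(y^{t+1})\in\Y$ and $h$ is finite on $\Y$. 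Once these regularity points are in place, the argument is a short chain of rewritings with no further computation; I would cite \citet[Chapter 4.2]{RN186} for the standard convex-analytic facts about $h^\ast$ and the projection if a fuller justification of the interiority claim is wanted.
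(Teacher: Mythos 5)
Your proposal is correct and follows essentially the same route as the paper's proof: both reduce each side to $\argmin_{x\in\X}\, h(x) - \langle y^{t+1}, x\rangle$ by expanding the Bregman divergence, invoking the identity $\nabla h(\nabla h^\ast(y^{t+1})) = y^{t+1}$ from \eqref{eq:id}, and discarding terms independent of $x$; you merely run the chain from the proximal objective toward the projection, whereas the paper starts from the projection, which is immaterial since every step is an equivalence of minimizers. Your added remarks on attainment, uniqueness via strict convexity, and interiority via essential smoothness are a welcome (if optional) tightening of points the paper leaves implicit.
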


\begin{proof}
From definition of the Bregman projection step, starting from \eqref{eq:mirrupd1'} we have
\begin{eqnarray*}
    x^{t+1} &=& \text{Proj}_\X^h(\nabla h^\ast(y^{t+1})) \; = \; \argmin_{x\in\X} \D_h(x, \nabla h^\ast(y^{t+1})) \\
    &\in& \argmin_{x\in\X} \nabla h(x) - \nabla h(\nabla h^\ast(y^{t+1})) - \dotprod{\nabla h(\nabla h^\ast(y^{t+1})), x - \nabla h^\ast(y^{t+1})} \\ 
    &\overset{\eqref{eq:id}}{\in}& \argmin_{x\in\X} \nabla h(x) - y^{t+1} - \dotprod{y^{t+1}, x - \nabla h^\ast(y^{t+1})} \\ 
    &\in& \argmin_{x\in\X} \nabla h(x) - \langle x, y^{t+1} \rangle \\ 
    &\overset{\eqref{eq:mirrupd1'}}{\in}& \argmin_{x\in\X} \nabla h(x) - \langle x,\nabla h(x^t)-\eta_t \nabla V(x)|_{x = x^t}\rangle \\
    &\in& \argmin_{x \in \X} \eta_t \langle x, \nabla V(x)|_{x = x^t} \rangle + \nabla h(x) - \nabla h(x^t) - \dotprod{\nabla h(x^t), x - x^t} \\ 
    &\in& \argmin_{x \in \X} \eta_t \langle x, \nabla V(x)|_{x = x^t} \rangle + \D_h(x, x^t),
\end{eqnarray*}
where the second and the last lines are both obtained by the definition of the Bregman divergence.
\end{proof}
The one-step update in \eqref{eq:mirr_beck} is often taken as the definition of mirror descent \cite{beck2003mirror}, which provides a proximal view point of mirror descent, i.e.,\ a gradient step in the primal space with a regularization of Bregman divergence.

\section{AMPO for specific mirror maps}
\label{app:mirror_maps}

In this section, we give the derivations for Example \ref{ex:entr}, %
which is based on the Karush-Kuhn-Tucker (KKT) conditions~\cite{karush1939minima,kuhn1951nonlinear}, and then provide details about the $\omega$-potential mirror map class from \Cref{sec:omega}.

\subsection{Derivation of  Example \ref{ex:entr}}
\label{app:entr}

We give here the derivation of \Cref{ex:entr}. Let $h$ be the negative entropy mirror map, that is
\looseness = -1
\[h(\pi_s)= \sum_{a\in\A} \pi(a \mid s)\log(\pi(a \mid s)), \qquad \forall \pi_s \in \Delta(\A) \mbox{ and } \forall s \in \S.\]
For every state $s \in \S$, we solve the minimization problem
\[\pi^\theta_s \in \argmin_{\pi_s \in \Delta(\A)}\D_{h}(\pi_s, \nabla h^\ast(\eta f^\theta_s))\]
through the KKT conditions. We formalize it as
\begin{align*}
    \pi^\theta_s \in \argmin_{\pi_s\in\R^{|\A|}}~&\D_{h}(\pi_s, \nabla h^\ast(\eta f^\theta_s))\\
    \text{subject to }  &\langle \pi_s, \1\rangle = 1, \\
    &\pi(a \mid s)\geq 0, \qquad\forall ~a\in\A,
\end{align*}
where $\1$ denotes a vector in $\R^{|\A|}$ with coordinates equal to $1$ element-wisely.
 The conditions then become 
\begin{align*}
    \text{(stationarity)} \qquad& \log(\pi^\theta_s)- \eta f^\theta_s + (\lambda_s + 1) \1 - c_s = 0,\\
    \text{(complementary slackness)} \qquad& c_s^a \pi^\theta(a \mid s) = 0, \quad\forall ~a\in\A,\\
    \text{(primal feasibility)} \qquad& \langle \pi^\theta_s, \1\rangle = 1,~\pi^\theta(a \mid s) \geq 0, \qquad\forall ~a\in\A,\\
    \text{(dual feasibility)} \qquad& c_s^a \geq 0, \quad\forall ~a\in\A,
\end{align*}
where $\log(\pi_s)$ is applied element-wisely, $\lambda_s \in \R$ and $c_s^a \in \R$ are the dual variables, and $c_s$ devotes the vector $[c_s^a]_{a \in \A}$. It is easy to verify that the solution
\[\pi^\theta_s = \frac{\exp( \eta f^\theta_s)}{\norm{\exp(\eta f^\theta_s)}_1},\]
with $\lambda_s = \log\left(\sum_{a\in\A} \exp(\eta f^\theta(s,a))\right) - 1$ and $c^a_s = 0$ for all $a\in\A$, satisfies all the conditions.

When $f^\theta(s,a)= \theta_{s,a}$ we obtain the tabular softmax policy $\pi^\theta(a \mid s) \propto \exp( \eta \theta_{s,a})$. When $f^\theta(s,a)= \theta^\top\phi(s,a)$ is a linear function, for $\theta\in\R^d$ and for a feature function $\phi:\S\times\A\rightarrow\R^d$, we obtain the log-linear policy $\pi^\theta(a \mid s) \propto \exp( \eta \theta^\top\phi(s,a))$. When $f^\theta:\S\times\A\rightarrow\R$ is a neural network, we obtain the softmax neural policy $\pi^\theta(a \mid s) \propto \exp( \eta f^\theta(s,a))$.

\subsection{More on \texorpdfstring{$\omega$}{w}-potential mirror maps}
\label{app:pot}

In this section, we provide details about the $\omega$-potential mirror map class from \Cref{sec:omega}, including the derivation of \eqref{eq:pot_update}, several instantiations of $\omega$-potential mirror map mentioned in \Cref{sec:omega} with their derivations, and an iterative algorithm to find approximately the Bregman projection induced by $\omega$-potential mirror map when an exact solution is not available. \looseness = -1

We give a different but equivalent formulation of Proposition 2 of \citet{krichene2015efficient}.

\begin{proposition}
    \label{prop:proj}
    For $u\in(-\infty,+\infty]$ and $\omega\leq0$, an increasing $C^1$-diffeomorphism $\phi:(-\infty,u)\rightarrow(\omega,+\infty)$ is called an $\omega$-potential if \vspace{-.1in}\[\lim_{x\rightarrow-\infty}\phi(x)=\omega, \quad \lim_{x\rightarrow u}\phi(x)=+\infty, \quad  \int_0^{1}\phi^{-1}(x)dx\leq\infty.\]
    Let the mirror map $h_\phi$ be defined as 
    \[h_\phi(\pi_s) = \sum_{a\in\A}\int_1^{\pi(a \mid s)}\phi^{-1}(x)dx.\]
    We have that $\pi^t_s$ is a solution to the Bregman projection
    \[\min_{\pi\in\Delta_s}\text{Proj}_{\Delta(\A)}^h(\nabla h_\phi^\ast(\eta_{t-1} f^{t}_s)),\]
    if and only if there exist a normalization constant $\lambda_s^t\in\R$ such that
    \begin{align} \label{eq:pi_t2}
        \pi^t(a \mid s) &= \sigma(\phi(\eta_{t-1} f^t(s,a)+\lambda_s^t)), \quad \forall a\in\A,
    \end{align}
    and $\sum_{a\in\A}\pi^t(a \mid s) = 1$, where for all $s\in\S$ and $\sigma(z) = \max(z,0)$ for $z\in\R$.
\end{proposition}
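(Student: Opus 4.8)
The plan is to reduce the Bregman projection onto $\Delta(\A)$ to a constrained convex optimization problem and solve it via the KKT conditions, exactly as in the derivation of \Cref{ex:entr} in Appendix \ref{app:entr}. First I would write out the objective: using the definition of the Bregman divergence and of $h_\phi$, and recalling the identity \eqref{eq:id} that $\nabla h_\phi(\nabla h_\phi^\ast(\eta_{t-1}f^t_s)) = \eta_{t-1}f^t_s$, the projection problem $\min_{\pi_s\in\Delta(\A)}\D_{h_\phi}(\pi_s, \nabla h_\phi^\ast(\eta_{t-1}f^t_s))$ becomes, up to terms not depending on $\pi_s$,
\[
\min_{\pi_s\in\R^{|\A|}} \ \sum_{a\in\A}\int_1^{\pi(a\mid s)}\phi^{-1}(x)\,dx - \langle \eta_{t-1}f^t_s, \pi_s\rangle \quad \text{s.t.}\quad \langle \pi_s,\1\rangle = 1,\ \pi(a\mid s)\geq 0\ \forall a.
\]
This objective is convex because $\phi^{-1}$ is increasing (so each summand is convex), and the constraint set is the simplex, so the KKT conditions are necessary and sufficient for optimality; this also gives the ``if and only if'' in the statement.

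Next I would write the KKT system. Introducing a multiplier $\lambda_s^t\in\R$ for the equality constraint and multipliers $c_s^a\geq 0$ for the inequality constraints, stationarity reads $\phi^{-1}(\pi^t(a\mid s)) - \eta_{t-1}f^t(s,a) - \lambda_s^t - c_s^a = 0$ for every $a$, together with complementary slackness $c_s^a\,\pi^t(a\mid s) = 0$, primal feasibility $\sum_a \pi^t(a\mid s) = 1$ and $\pi^t(a\mid s)\geq 0$, and dual feasibility $c_s^a\geq 0$. The goal is to show this system is equivalent to \eqref{eq:pi_t2}, i.e.\ $\pi^t(a\mid s) = \sigma(\phi(\eta_{t-1}f^t(s,a)+\lambda_s^t))$ with $\sum_a \pi^t(a\mid s)=1$. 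I would argue by cases on whether $\pi^t(a\mid s)$ is zero or strictly positive: if $\pi^t(a\mid s)>0$, complementary slackness forces $c_s^a=0$, so stationarity gives $\phi^{-1}(\pi^t(a\mid s)) = \eta_{t-1}f^t(s,a)+\lambda_s^t$, hence $\pi^t(a\mid s) = \phi(\eta_{t-1}f^t(s,a)+\lambda_s^t)$, and since this is positive it equals $\sigma(\phi(\cdot))$; conversely, if $\pi^t(a\mid s)=0$, then $c_s^a = \phi^{-1}(0) - \eta_{t-1}f^t(s,a) - \lambda_s^t \geq 0$ (here $\phi^{-1}(0)$ is interpreted as $\lim_{x\to 0^+}\phi^{-1}(x)$, which exists in $[-\infty,u)$ since $\phi^{-1}$ is increasing and the integrability condition $\int_0^1\phi^{-1}<\infty$ controls it), which rearranges to $\eta_{t-1}f^t(s,a)+\lambda_s^t \leq \phi^{-1}(0)$, i.e.\ $\phi(\eta_{t-1}f^t(s,a)+\lambda_s^t)\leq 0$ (using that $\phi$ is increasing), so $\sigma(\phi(\cdot)) = 0 = \pi^t(a\mid s)$. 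This establishes both directions.

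Finally I would note existence and uniqueness of the normalization constant: the map $\lambda\mapsto \sum_{a}\sigma(\phi(\eta_{t-1}f^t(s,a)+\lambda))$ is continuous, nondecreasing, equals $0$ for $\lambda$ sufficiently negative (as $\phi\to\omega\leq 0$) and tends to $+\infty$ as $\lambda\to\infty$, so by the intermediate value theorem there is a $\lambda_s^t$ making the sum equal to $1$; strict monotonicity on the relevant range gives uniqueness. The main obstacle I anticipate is the careful treatment of the boundary: justifying that $\phi^{-1}(0)$ (equivalently the behavior of $\phi^{-1}$ near $0$) is well-defined and that the KKT stationarity condition is correctly stated at points where $\pi^t(a\mid s)=0$, since $\phi^{-1}$ is only defined on $(\omega,+\infty)$ and $0$ may be its left endpoint when $\omega = 0$. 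The integrability hypothesis $\int_0^1\phi^{-1}(x)\,dx\leq\infty$ in \Cref{def:omega} is precisely what makes the objective well-defined and finite on the closed simplex, and I would invoke it at this point; otherwise the argument is a routine convex-duality computation parallel to Appendix \ref{app:entr}.
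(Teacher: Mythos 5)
Your KKT argument is correct, and it is essentially the proof the paper relies on: the paper does not prove Proposition \ref{prop:proj} itself but imports it as a reformulation of Proposition 2 of \citet{krichene2015efficient}, whose proof is exactly this Lagrangian/KKT computation, and the same calculation is carried out in Appendix \ref{app:entr} for the negative-entropy special case. The only points requiring care are the ones you already flag: when $\omega=0$ the limit $\phi^{-1}(0^+)$ may be $-\infty$ (e.g.\ $\phi(x)=e^{x-1}$), in which case the dual-feasibility case $\pi^t(a\mid s)=0$ simply cannot occur and the $\sigma$ truncation in \eqref{eq:pi_t2} is vacuous since $\phi>0$, and when $u<+\infty$ the normalization constant must be sought on $\bigl(-\infty,\,u-\max_a \eta_{t-1}f^t(s,a)\bigr)$ rather than all of $\R$, where the intermediate-value argument still goes through because the sum diverges as $\lambda$ approaches the right endpoint.
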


We can now use \Cref{prop:proj} to derive \eqref{eq:pot_update}.

Consider an $\omega$-potential mirror map $h_\phi$ associated with an $\omega$-potential $\phi$. By definition, we have
\begin{equation} \label{eq:nabla_h}
\nabla h_\phi(\pi_s^t) = [\phi^{-1}(\pi^t(a \mid s))]_{a \in \A}.
\end{equation}
Plugging \eqref{eq:pi_t2} into \eqref{eq:nabla_h}, we have
\begin{eqnarray*}
\nabla h_\phi(\pi_s^t) &\overset{\eqref{eq:nabla_h}}{=}& [\phi^{-1}(\pi^t(a \mid s))]_{a \in \A} \\ 
&\overset{\eqref{eq:pi_t2}}{=}& \left[\phi^{-1}\left( \sigma\left(\phi(\eta_{t-1} f^t(s,a)+\lambda_s^t)\right) \right)\right]_{a \in \A} \\ 
&=& \left[\max \left( \phi^{-1}\left(\phi(\eta_{t-1} f^t(s,a)+\lambda_s^t)\right), \phi^{-1}(0)\right)\right]_{a \in \A} \\ 
&=& \left[ \max \left( \eta_{t-1} f^t(s,a)+\lambda_s^t, \phi^{-1}(0)\right) \right]_{a \in \A} \\ 
&=& \left[ \max \left( \eta_{t-1} f^t(s,a), \phi^{-1}(0) - \lambda_s^t \right)\right]_{a \in \A}+\lambda_s^t,
\end{eqnarray*}
where the third line is obtained by using the increasing property of $\phi^{-1}$, as $\phi$ is increasing. Finally, plugging the above expression of $\nabla h_\phi(\pi_s^t)$ into Line \ref{ln:actor}, we obtain \eqref{eq:pot_update}, which is
\[\theta^{t+1}\in\argmin\nolimits_{\theta \in \Theta}\E_{(s,a) \sim v^t}\left[(f^{\theta}(s,a) - Q^t(s,a) - \eta_t^{-1}\max(\eta_{t-1} f^t(s,a), \phi^{-1}(0)-\lambda_s^{t}))^2\right],\]
where the term $\lambda_s^t$ is dropped, as it is constant over actions and does not affect the resulting policy.

Once \eqref{eq:pot_update} is obtained, we can instantiate AMPO for mirror maps belonging to this class. We highlight that due to the definition of the Bregman divergence, two mirror maps that only  differ for a constant term are equivalent and generate the same algorithm. We start with the negative entropy, which leads to a closed solution for $\lambda_s^t$ and therefore for the Bregman projection.

\paragraph{Negative entropy.}
Let $\phi(x) = \exp(x-1)$, which is an $\omega$-potential with $\omega = 0$, $u = +\infty$, and 
\[\int_0^{1}\phi^{-1}(x)dx \;=\; \int_0^{1} \log (x) + 1 dx \;=\; [x \log (x)]^1_0 = 0 \leq +\infty.\]
The mirror map $h_\phi$ becomes the negative entropy, as
\[h_\phi(\pi_s) = \sum_{a\in\A}\int_1^{\pi(a \mid s)}(\log(x)+1)dx = \sum_{a\in\A}\pi(a \mid s)\log(\pi(a \mid s)),\]
and the associated Bregman divergence becomes the KL divergence, i.e., $D_{h_\phi}(\pi_s,\bar{\pi}_s)=\KL(\pi_s,\bar{\pi}_s)$. Equation \eqref{eq:trpo} follows from Equation \eqref{eq:pot_update} by the fact that 
\[\phi^{-1}(0) = \log(0) + 1 = - \infty,\] 
which means that $\max(\eta_{t-1} f^t, \phi^{-1}(0)-\lambda_s^t)=\eta_{t-1} f^t$ element-wisely.

As we showed in Appendix \ref{app:entr}, the Bregman projection (Line \ref{ln:proj} of Algorithm \ref{alg}) for the negative entropy has a closed form which is $\pi^{t+1}_s \propto \exp(\eta_t f^{t+1}_s)$.

In NPG with softmax tabular policies \cite{agarwal2021theory,xiao2022convergence}, at time $t$, the updates for the policies have
\begin{equation} \label{eq:NPG_tabular}
\pi^{t+1}_s \propto \pi^t_s \odot \exp(\eta_tQ^t_s), \quad \forall s \in \S.
\end{equation}
When considering AMPO with $f^\theta(s,a) = \theta_{s,a} \in \R$, from \eqref{eq:trpo}, we obtain that for all $s,a \in \S, \A$, we have
\begin{align*}
\theta^{t+1} \in \argmin_{\theta \in \R^{|\S|\times|\A|}} \norm{\theta - Q^t - \frac{\eta_{t-1}}{\eta_t}\theta^t}_{L_2(v^t)}^2 &\Longleftrightarrow f^{t+1}(s,a) = Q^t(s,a) + \frac{\eta_{t-1}}{\eta_t}f^t(s,a) \\
&\Longleftrightarrow \eta_tf^{t+1}(s,a) = \eta_tQ^t(s,a) + \eta_{t-1}f^t(s,a).
\end{align*}
With the above expression, we have the AMPO's updates for the policies rewritten as
\begin{align*}
\pi^{t+1}_s &\propto \exp(\eta_t f^{t+1}_s) \\ 
&= \exp(\eta_tQ^t_s + \eta_{t-1}f^t_s) \\
&\propto \pi^t_s \odot \exp(\eta_tQ^t_s), \quad \forall s \in \S,
\end{align*}
which recovers \eqref{eq:NPG_tabular}. In particular, the summation in the second line is element-wise and the third line is obtained because of $\pi^t_s \propto \exp(\eta_{t-1}f_s^t)$, as shown in Appendix \ref{app:entr}.

In Q-NPG with log-linear policies \cite{agarwal2021theory,yuan2023linear,alfano2022linear}, at time $t$, the updates for the policies have
\begin{equation} \label{eq:NPG_log-linear}
\pi^{t+1}(a \mid s) \propto \pi^t(a \mid s) \exp(\eta_t \phi(s,a)^\top w^t),
\end{equation}
where $\phi : \S \times \A \rightarrow \R^d$ is a feature map, and
\begin{align} \label{eq:NPG_log-linear_compatible}
w^t \in \argmin_{w \in \R^d} \E_{(s,a) \sim d_\mu^t}\left[(Q^t(s,a) - \phi(s,a)^\top w)^2\right].
\end{align}
Like in the tabular case, when considering AMPO with $\theta \in \R^d$, $f^\theta(s,a) = \phi(s,a)^\top\theta$ and $v^t = d_\mu^t$, from \eqref{eq:trpo}, we obtain that for all $s,a \in \S, \A$, we have
\begin{align*}
\theta^{t+1} &\in \argmin_{\theta \in \R^d} \E_{(s,a) \sim d_\mu^t}\left[\left(\phi(s,a)^\top\theta - Q^t(s,a) - \frac{\eta_{t-1}}{\eta_t}\phi(s,a)^\top \theta^t\right)^2\right] \\ 
&\in \argmin_{\theta \in \R^d} \E_{(s,a) \sim d_\mu^t}\bigg[\bigg(Q^t(s,a) - \phi(s,a)^\top \bigg(\underbrace{\theta - \frac{\eta_{t-1}}{\eta_t} \theta^t}_{w}\bigg)\bigg)^2\bigg].
\end{align*}
Compared the above form with \eqref{eq:NPG_log-linear_compatible}, we obtain that
\begin{align} \label{eq:NPG-AMPO}
w^t = \theta^{t+1} - \frac{\eta_{t-1}}{\eta_t} \theta^t \quad \Longleftrightarrow \quad \eta_t\theta^{t+1} = \eta_tw^t + \eta_{t-1}\theta^t.
\end{align}
So, the AMPO's updates for the policies can be rewritten as
\begin{eqnarray*}
\pi^{t+1}(a \mid s) &\propto& \exp(\eta_t \phi(s,a)^\top \theta^{t+1}) \\
&\overset{\eqref{eq:NPG-AMPO}}{=}& \exp(\phi(s,a)^\top (\eta_tw^t + \eta_{t-1}\theta^t)) \\
&\propto& \pi^t(a \mid s)\exp(\eta_t\phi(s,a)^\top w^t),
\end{eqnarray*}
where the last line is obtained because of $\pi^t_s \propto \exp(\eta_{t-1}f_s^t)$, as shown in Appendix \ref{app:entr}, and we recover \eqref{eq:NPG_log-linear}.

We next present the squared $\ell_2$-norm and the $\epsilon$-negative entropy. For these two mirror maps, the Bregman projection can be computed exactly but has no closed form. 

\paragraph{Squared $\ell_2$-norm.} Let $\phi$ be the identity function. The mirror map $h_\phi$ becomes the squared $\ell_2$-norm, up to a constant term, as
\[h_\phi(\pi_s) = \sum_{a\in\A}\int_1^{\pi(a \mid s)}x~dx = \frac{1}{2}\sum_{a\in\A}\left(\pi(a \mid s)^2-1\right).\]
The associated Bregman divergence becomes the squared Euclidean distance, i.e., $D_{h_\phi}(\pi_s,\bar{\pi}_s)=\frac{1}{2}\norm{\pi_s-\bar{\pi}_s}_2^2$, and $\nabla h^*(\cdot)$ is the identity function. The update in \eqref{eq:pQd} follows immediately and the Bregman projection step with the Euclidean distance becomes, for all $s\in\S$,
\begin{align} \label{eq:pQd2'}
    \pi^{t+1}_s &= \text{Proj}_{\Delta(\A)}^{h_\phi}(\nabla h^*(\eta_tf^{t+1}_s)) 
    = \text{Proj}_{\Delta(\A)}^{l_2}(\eta_tf^{t+1}_s) 
    = \argmin_{p \in \Delta(\A)} \norm{p - \eta_tf^{t+1}_s}^2_2.
\end{align}
In the projected-Q descent for tabular policies developed by \citet{xiao2022convergence}, at time $t$, the updates for the policies are
\begin{align} \label{eq:pQd_tabular}
\pi^{t+1}_s \in \argmin_{p \in \Delta(\A)} \norm{\pi^t_s + \eta_t Q^t_s - p}_2^2, \quad \forall s \in \S.
\end{align}
When considering AMPO with $f^\theta(s,a) = \theta_{s,a}$ and $\Theta = \R^{|\S|\times|\A|}$, \eqref{eq:pQd} is solved with
\[f^{t+1}(s,a) = \theta^{t+1}_{s,a} = Q^t(s,a) + \eta_t^{-1}\pi^t(a \mid s).\]
Plugging the above expression into \eqref{eq:pQd2'}, we recover \eqref{eq:pQd_tabular}.

Notice that the Euclidean projection onto the probability simplex can be obtained exactly, as shown by \citet{wang2013projection}.
\begin{algorithm}[t]
    \caption{Bregman projection for $\epsilon$-negative entropy}
    \label{algo:proj2}
    \nlnonumber \textbf{Input:} vector to project $x\in\R^{|\A|}$, parameter $\epsilon$.

    Initialize $y  = \exp(x)$ element-wisely.\\
    Let $y^{(i)}$ be the $i$-th smallest element of $y$.\\
    Let $i^\star$ be the smallest index for which
    \[(1 + \epsilon(|\A| - i + 1)) y^{(i)} - \epsilon \sum_{j \geq i} y^{(j)} > 0.\]\vspace{-0.2cm}
    \noindent Set \vspace{-0.2cm}\[\lambda = \frac{\sum_{i\geq i^\star}y^{(i)}}{1 + \epsilon(|\A| - i^\star + 1)}.\]
    \textbf{Return:} the projected vector $\left(\sigma\left(-\epsilon+y_a/\lambda\right)\right)_{a\in\A}$.
\end{algorithm}

\paragraph{$\epsilon$-negative entropy \cite{krichene2015efficient}.}
Let $\epsilon\geq 0$ and define the $\epsilon$-exponential potential as $\phi(x) = \exp(x-1)-\epsilon$. The mirror map $h_\phi$ becomes
\[h_\phi(\pi_s) = \sum_{a\in\A}\int_1^{\pi(a \mid s)}(\log(x+\epsilon)+1)dx = \sum_{a\in\A} \left[ (\pi(a \mid s) + \epsilon) \ln(\pi(a \mid s) + \epsilon) - (1 + \epsilon) \ln(1 + \epsilon) \right].\]
An exact solution to the associated projection can then be found in $\widetilde{\mathcal{O}}(|\A|)$ computations using Algorithm \ref{algo:proj2}, which has been proposed by \citet[Algorithm 4]{krichene2015efficient}. Additionally, following \eqref{eq:pot_update}, the regression problem in Line~\ref{ln:actor} of Algorithm \ref{alg} becomes
\[\theta^{t+1}\in\argmin_{\theta \in \Theta}\norm{f^{\theta} - Q^t-\eta_t^{-1}\max(\eta_{t-1} f^t, 1+\log(\epsilon)-\lambda_s^t)}_{L_2(v^t)}^2,\]
where $\lambda_s^t$ can be obtained through Algorithm \ref{algo:proj2}.

\begin{algorithm}[t]
    \caption{Bregman projection for $\omega$-potential mirror maps}
    \label{alg:proj}
    \nlnonumber \textbf{Input:} vector to project $x\in\R^{|\A|}$, $\omega$-potential $\phi$, precision $\varepsilon$.

    Initialize
    \begin{align*}
        \bar{\nu} &= \phi^{-1}(1) - \max_{a\in\A} x_a\\
        \ubar{\nu} &= \phi^{-1}(1/|\A|) - \max_{a\in\A} x_a
    \end{align*}

    Define $\tilde{x}(\nu) = \left(\sigma(\phi\left(x_a + \nu\right))\right)_{a\in\A}$.

    \nlnonumber\While{$\norm{\tilde{x}(\bar{\nu}) - \tilde{x}(\ubar{\nu})}_1 > \varepsilon$}{
        Let $\nu^+ \leftarrow (\bar{\nu} + \ubar{\nu})/2$

        \uIf{$\sum_{a\in\A} \tilde{x}_a(\nu^+) > 1$}{$\bar{\nu} \leftarrow \nu^+$}
        \Else{$\ubar{\nu} \leftarrow \nu^+$}
    }
    \textbf{Return} $\tilde{x}(\bar{\nu})$
\end{algorithm}

The Bregman projection for generic mirror maps can be computed approximately in $\widetilde{\mathcal{O}}(|\A|)$ computations through a bisection algorithm. \citet{krichene2015efficient} propose one such algorithm, which we report in Algorithm \ref{alg:proj} for completeness. We next provide two mirror maps that have appeared before in the optimization literature, but do not lead to an exact solution to the Bregman projection. We leave them as object for future work.

\paragraph{Negative Tsallis entropy \cite{orabona2020modern,li2023policy}.} Let $q > 0$ and define $\phi$ as
    \[\phi_q(x)= \begin{cases} \exp(x-1)  &\text{if } q = 1,\\
    \left[\sigma\left(\frac{(q-1)x}{q}\right)\right]^\frac{1}{q-1}  &\text{else.} \end{cases}\]
    The mirror map $h_{\phi_q}$ becomes the negative Tsallis entropy, that is 
    \begin{align} \label{eq:Tsallis}
    h_{\phi_q}(\pi_s)= \sum \pi(a \mid s)\log_q(\pi(a \mid s)),
    \end{align}
    where, for $y>0$,
    \[\log_q(y)= \begin{cases} \log(y)&\text{if } q = 1,\\
        \frac{-\, y^{q-1}}{q-1}  &\text{else}. \end{cases}\]
    If $q\neq 1$ and following \eqref{eq:pot_update}, the regression problem in Line~\ref{ln:actor} of Algorithm \ref{alg} becomes
    \[\theta^{t+1}\in\argmin_{\theta \in \Theta}\norm{f^{\theta} - Q^t-\frac{\eta_{t-1}}{\eta_t} f^t}_{L_2(v^t)}^2,\]

\paragraph{Hyperbolic entropy \cite{ghai2020exponentiated}.}
    Let $b > 0$ and define $\phi$ as
    \[\phi_b(x)= b\sinh(x)\]
    The mirror map $h_{\phi_b}$ becomes the hyperbolic entropy, that is 
    \begin{align} \label{eq:hyperbolic}
    h_{\phi_b}(\pi_s)= \sum_{a\in\A}\pi(a \mid s)\arcsinh(\pi(a \mid s)/b)-\sqrt{\pi(a \mid s)^2+b^2},
    \end{align}
    and, following \eqref{eq:pot_update}, the regression problem in Line~\ref{ln:actor} of Algorithm \ref{alg} becomes
    \[\theta^{t+1}\in\argmin_{\theta \in \Theta}\norm{f^{\theta} - Q^t-\eta_t^{-1}\max(\eta_{t-1} f^t, -\lambda_s^t)}_{L_2(v^t)}^2.\]

\paragraph{Hyperbolic-tangent entropy.}
    Inspired by the hyperbolic entropy, we consider $\phi$ as
    \[\phi(x)= \tanh(x)/2 + 0.5\]
    The mirror map $h_{\phi}$ becomes
    \begin{align} \label{eq:hyperbolic-tangent}
    h_\phi(\pi_s)= \frac{1}{2}\sum_{a\in\A}(2\pi(a \mid s)-1)\arctanh(2\pi(a \mid s)-1)+\frac{1}{2}\log\pi(a \mid s)(1-\pi(a \mid s)),
    \end{align}
    which, to the best of our knowledge, is not equivalent to any mirror map studied in the literature. Following \eqref{eq:pot_update}, the regression problem in Line~\ref{ln:actor} of Algorithm \ref{alg} becomes
    \[\theta^{t+1}\in\argmin_{\theta \in \Theta}\norm{f^{\theta} - Q^t-\frac{\eta_{t-1}}{\eta_t} f^t}_{L_2(v^t)}^2.\]

Regarding the limitations of the $\omega$-potential mirror map class, we are aware of two previously used mirror maps that cannot be recovered using $\omega$-potentials: $h(x) = \frac{1}{2}x^\top {\bf A} x$ \cite{gao2022meta}, for some positive-definite matrix ${\bf A}$, which generates the Mahalanobis distance, and $p$-norms, i.e.\ $h(x) = \|x\|_p^2$ \cite{orabona2020modern}. Note that the case where $h(x) = \|x\|_p^p$ can be recovered by setting $\phi(x) = (px)^{p/(1-p)}$.

\looseness = -1
\ruiline{We note that tuning the mirror map and the step-size can lead AMPO to encompass the case of deterministic policies, which can be obtained when using softmax policies by sending the step-size to infinity, effectively turning the softmax operator into a max operator. Another simple way of introducing deterministic policies in our framework is to choose the mirror map to be the Euclidean norm and to choose the step-size large enough. Doing so will cause the Bregman projection to put all the probability on the action that corresponds to the maximum value of $f^\theta_s$. Our results hold in this setting because our analysis does not use the policy gradient theorem \eqref{eq:grad}, which has a different expression for deterministic policies \cite{silver2014deterministic}.}

\section{Deferred proofs from Section~\ref{sec:theory}}
\label{app:2}

\subsection{Proof of Lemma~\ref{lemma:3pdl}}
\label{app:3pdl}

Here we provide the proof of Lemma~\ref{lemma:3pdl}, an application of the three-point descent lemma that accommodates arbitrary parameterized functions. Lemma~\ref{lemma:3pdl} is the key tool for our analysis of AMPO. It is a generalization of both~\citet[][Equation (44)]{xiao2022convergence} and~\citet[Equation (50)]{yuan2023linear} thanks to our two-step PMD framework. 
First, we recall some technical conditions of the mirror map \cite[Chapter 4]{RN186}. 

Suppose that $\Y \subset \R^{|\A|}$ is a closed convex set, we say a function $h: \Y \rightarrow \R$ is a \textit{mirror map} if it satisfies the following properties:

\begin{enumerate}[label=(\roman*)]
  \item $h$ is strictly convex and differentiable;
  \item $h$ is essentially smooth, i.e., the graident of $h$ diverges on the boundary of $\Y$, that is $\lim\limits_{x \rightarrow \partial \Y}\norm{\nabla h(x)} \rightarrow \infty$;
  \item the gradient of $h$ takes all possible values, that is $\nabla h(\Y)=\R^{|\A|}$.
\end{enumerate}

\smallskip 

To prove Lemma~\ref{lemma:3pdl}, we also need the following rather simple properties, i.e., the three-point identity and the generalized Pythagorean theorem, satisfied by the Bregman divergence. We provide their proofs for self-containment.

\begin{lemma}[Three-point identity, Lemma 3.1 in \citet{chen1993convergence}] \label{lem:3pti}
Let $h$ be a mirror map. For any $a,b$ in the relative interior of $\Y$ and $c \in \Y$, we have that:
\begin{align} \label{eq:3pt}
\D_h(c,a) + \D_h(a,b) - \D_h(c,b) = \dotprod{\nabla h(b) - \nabla h(a), c-a}.
\end{align}
\end{lemma}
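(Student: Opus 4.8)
The plan is to prove \eqref{eq:3pt} by a direct computation: expand each of the three Bregman divergences according to the definition $\D_h(x,y) = h(x) - h(y) - \langle \nabla h(y), x-y \rangle$ and collect terms. Since $a$ and $b$ lie in the relative interior of $\Y$, the gradients $\nabla h(a)$ and $\nabla h(b)$ are well defined, so the quantities $\D_h(c,a)$, $\D_h(a,b)$, $\D_h(c,b)$ all make sense and the manipulation below is legitimate.

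First I would write out the three divergences explicitly:
\begin{align*}
\D_h(c,a) &= h(c) - h(a) - \langle \nabla h(a), c-a \rangle, \\
\D_h(a,b) &= h(a) - h(b) - \langle \nabla h(b), a-b \rangle, \\
\D_h(c,b) &= h(c) - h(b) - \langle \nabla h(b), c-b \rangle.
\end{align*}
Forming $\D_h(c,a) + \D_h(a,b) - \D_h(c,b)$, the function-value contributions telescope, namely $h(c) - h(a) + h(a) - h(b) - h(c) + h(b) = 0$, so only the linear terms remain.

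Next I would combine the surviving inner products $-\langle \nabla h(a), c-a\rangle - \langle \nabla h(b), a-b\rangle + \langle \nabla h(b), c-b\rangle$. Merging the two terms involving $\nabla h(b)$ gives $\langle \nabla h(b), (c-b) - (a-b)\rangle = \langle \nabla h(b), c-a\rangle$, and hence the whole expression equals $\langle \nabla h(b) - \nabla h(a), c-a\rangle$, which is precisely the right-hand side of \eqref{eq:3pt}.

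I do not expect any real obstacle: the three-point identity is a purely algebraic consequence of the definition of the Bregman divergence and requires neither convexity of $h$ nor any regularity beyond the existence of $\nabla h(a)$ and $\nabla h(b)$. The only point worth stating carefully is that $a$ and $b$ must belong to the relative interior of $\Y$ so that these gradients are defined, which is exactly the hypothesis of the lemma; the point $c$ may lie anywhere in $\Y$.
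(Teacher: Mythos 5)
Your proposal is correct and is essentially the same argument as the paper's: the paper writes out the three instances of the Bregman divergence definition as inner-product identities and subtracts them, which is exactly your expand-and-collect computation. The algebra checks out, and your remark about why the gradients at $a$ and $b$ must exist is consistent with the lemma's hypotheses.
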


\begin{proof}
Using the definition of the Bregman divergence $\D_h$, we have
\begin{align}
\dotprod{\nabla h(a), c-a} &= h(c) - h(a) - \D_h(c,a), \label{eq:3pta} \\
\dotprod{\nabla h(b), a-b} &= h(a) - h(b) - \D_h(a,b), \label{eq:3ptb} \\
\dotprod{\nabla h(b), c-b} &= h(c) - h(b) - \D_h(c,b). \label{eq:3ptc}
\end{align}
Subtracting~\eqref{eq:3pta} and~\eqref{eq:3ptb} from~\eqref{eq:3ptc} yields~\eqref{eq:3pt}.
\end{proof}

\smallskip 

\begin{lemma}[Generalized Pythagorean Theorem of Bregman divergence, Lemma 4.1 in \citet{RN186}] \label{lem:pythagore}
Let $\X \subseteq \Y$ be a closed convex set. Let $h$ be a mirror map defined on $\Y$. Let $x\in \X$, $y\in\Y$ and $y^\star = \text{Proj}_{\X}^h(y)$, then 
\[\dotprod{\nabla h \left(y^\star\right) - \nabla h(y), y^\star - x} \leq 0,\]
which also implies
\begin{align} \label{eq:pythagore}
\D_h\left(x, y^\star\right) + \D_h\left(y^\star,y\right) \leq \D_h(x,y).
\end{align}
\end{lemma}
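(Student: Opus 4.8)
The plan is to exploit that $y^\star$ is, by definition, the minimizer of the convex function $z \mapsto \D_h(z,y)$ over the convex set $\X$, extract the associated first-order optimality condition, and then convert it into the claimed divergence inequality using the three-point identity (Lemma~\ref{lem:3pti}).

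First I would record the elementary fact that, for fixed $y$ in the relative interior of $\Y$, the map $z \mapsto \D_h(z,y) = h(z) - h(y) - \langle \nabla h(y), z-y\rangle$ is strictly convex and differentiable on the relative interior of $\Y$, with gradient $\nabla_z \D_h(z,y) = \nabla h(z) - \nabla h(y)$; this is immediate from the strict convexity and differentiability of the mirror map $h$, since $\D_h(\cdot,y)$ is just $h$ shifted by an affine function. Because $h$ is essentially smooth, the minimizer $y^\star = \text{Proj}_\X^h(y) = \argmin_{z\in\X}\D_h(z,y)$ must lie in the relative interior of $\Y$ (otherwise $\|\nabla h(z)\|\to\infty$ as $z$ approaches $\partial\Y$ would push the objective to $+\infty$ near the boundary), so the gradient above is well-defined at $z = y^\star$.

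Next I would invoke the first-order optimality condition for the constrained convex minimization $\min_{z\in\X}\D_h(z,y)$: for every $x \in \X$,
\[
\big\langle \nabla_z\D_h(z,y)\big|_{z=y^\star},\ x - y^\star\big\rangle \geq 0,
\qquad\text{i.e.}\qquad
\big\langle \nabla h(y^\star) - \nabla h(y),\ x - y^\star\big\rangle \geq 0,
\]
which is exactly $\langle \nabla h(y^\star) - \nabla h(y),\ y^\star - x\rangle \leq 0$, the first assertion of the lemma.

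Finally, I would apply the three-point identity of Lemma~\ref{lem:3pti} with $c = x$, $a = y^\star$, $b = y$, which gives
\[
\D_h(x,y^\star) + \D_h(y^\star,y) - \D_h(x,y) = \big\langle \nabla h(y) - \nabla h(y^\star),\ x - y^\star\big\rangle.
\]
The right-hand side is precisely $-\langle \nabla h(y^\star) - \nabla h(y),\ x - y^\star\rangle$, which is $\leq 0$ by the first assertion; rearranging yields $\D_h(x,y^\star) + \D_h(y^\star,y) \leq \D_h(x,y)$. The only genuinely delicate point in this argument is the domain bookkeeping — verifying that $y^\star$ (and $y$) lie in the relative interior of $\Y$ where $\nabla h$ is defined and where the three-point identity applies — which is exactly what the essential smoothness of $h$ buys us; everything else is routine.
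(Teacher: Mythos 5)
Your proposal is correct and follows essentially the same route as the paper: the first-order optimality condition for the convex problem $\min_{z\in\X}\D_h(z,y)$ gives $\langle \nabla h(y^\star)-\nabla h(y),\,y^\star-x\rangle\leq 0$, and the divergence inequality then follows by expanding the Bregman divergences (your explicit use of the three-point identity with $a=y^\star$, $b=y$, $c=x$ is exactly the ``definition of Bregman divergence and rearranging terms'' step in the paper's proof). The extra remarks on essential smoothness and the relative interior are a harmless, correct refinement rather than a different argument.
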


\begin{proof}
From the definition of $y^\star$, which is
\[y^\star \in \argmin_{y'\in\X}\D_h(y',y),\]
and from the first-order optimality condition \cite[Proposition 1.3]{RN186}, with 
\[\nabla_{y'} \D_h(y',y) = \nabla h(y') - \nabla h(y), \quad \mbox{ for all } y' \in \Y,\]
we have
\begin{align*}
\dotprod{\nabla_{y'} \D_h(y',y)|_{y' = y^\star}, y^\star - x} \leq 0 
\quad \Longrightarrow \quad  \dotprod{\nabla h \left(y^\star\right) - \nabla (y), y^\star - x} \leq 0,
\end{align*}
which implies \eqref{eq:pythagore} by applying the definition of Bregman divergence and rearranging terms.
\end{proof}

\smallskip 

Now we are ready to prove Lemma~\ref{lemma:3pdl}.

\begin{lemma}[Lemma~\ref{lemma:3pdl}]
    Let $\Y \subset \R^{|\A|}$ be a closed convex set with $\Delta(\A) \subseteq \Y$.
    For any policies $\pi \in \Delta(\A)^{\S}$ and $\bar{\pi}$ in the relative interior of $\Delta(\A)^{\S}$, any function $f^\theta$ with $\theta \in \Theta$, any $s\in\S$ and for $\eta>0$, we have that,
    \[\langle \eta f^\theta_s - \nabla h(\bar{\pi}_s),\pi_s - \tilde{\pi}_s\rangle \leq 
    \D_h(\pi_s, \bar{\pi}_s) - \D_h(\tilde{\pi}_s, \bar{\pi}_s) - \D_h(\pi, \tilde{\pi}_s),
    \]
    where $\tilde{\pi}$ is induced by $f^\theta$ and $\eta$ according to \Cref{def:pol}, that is, for all $s \in \S$,
    \begin{align} \label{eq:bregman}
    \tilde{\pi}_s = \text{Proj}_{\Delta(\A)}^h\left(\nabla h^*(\eta f^\theta_s)\right) = \argmin_{\pi'_s \in \Delta(\A)} \D_h(\pi'_s, \nabla h^*(\eta f^\theta_s)).
    \end{align}
\end{lemma}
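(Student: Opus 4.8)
\smallskip

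The plan is to combine the two auxiliary lemmas just established — the three-point identity (Lemma~\ref{lem:3pti}) and the generalized Pythagorean theorem (Lemma~\ref{lem:pythagore}) — applied at the point $\nabla h^*(\eta f^\theta_s)$. First I would fix $s \in \S$ and write $z := \nabla h^*(\eta f^\theta_s) \in \Y$, so that by the identity \eqref{eq:id} we have $\nabla h(z) = \eta f^\theta_s$. By definition \eqref{eq:bregman}, $\tilde{\pi}_s = \text{Proj}_{\Delta(\A)}^h(z)$, i.e.\ $\tilde{\pi}_s$ is the Bregman projection of $z$ onto $\Delta(\A)$. The point $\pi_s \in \Delta(\A)$ plays the role of the arbitrary competitor.

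\smallskip

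The key step is the first-order optimality condition for the Bregman projection, which is exactly the inequality proved inside Lemma~\ref{lem:pythagore}: $\langle \nabla h(\tilde{\pi}_s) - \nabla h(z), \tilde{\pi}_s - \pi_s \rangle \leq 0$, equivalently $\langle \nabla h(z) - \nabla h(\tilde{\pi}_s), \pi_s - \tilde{\pi}_s \rangle \leq 0$. Substituting $\nabla h(z) = \eta f^\theta_s$, this reads
\[
\langle \eta f^\theta_s, \pi_s - \tilde{\pi}_s \rangle \leq \langle \nabla h(\tilde{\pi}_s), \pi_s - \tilde{\pi}_s \rangle.
\]
Next I would subtract $\langle \nabla h(\bar{\pi}_s), \pi_s - \tilde{\pi}_s \rangle$ from both sides to get
\[
\langle \eta f^\theta_s - \nabla h(\bar{\pi}_s), \pi_s - \tilde{\pi}_s \rangle \leq \langle \nabla h(\tilde{\pi}_s) - \nabla h(\bar{\pi}_s), \pi_s - \tilde{\pi}_s \rangle.
\]
Now the right-hand side is precisely the form handled by the three-point identity \eqref{eq:3pt}: taking $c = \pi_s$, $a = \tilde{\pi}_s$, $b = \bar{\pi}_s$, we have
\[
\langle \nabla h(\bar{\pi}_s) - \nabla h(\tilde{\pi}_s), \pi_s - \tilde{\pi}_s \rangle = \D_h(\pi_s, \tilde{\pi}_s) + \D_h(\tilde{\pi}_s, \bar{\pi}_s) - \D_h(\pi_s, \bar{\pi}_s),
\]
so the right-hand side of the previous display equals $\D_h(\pi_s, \bar{\pi}_s) - \D_h(\tilde{\pi}_s, \bar{\pi}_s) - \D_h(\pi_s, \tilde{\pi}_s)$, which is exactly the claimed bound.

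\smallskip

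The only points requiring care — and the main (mild) obstacle — are the regularity/membership conditions needed to invoke the two lemmas: the three-point identity needs $\tilde{\pi}_s$ and $\bar{\pi}_s$ in the relative interior of $\Y$ (which holds by hypothesis on $\bar{\pi}$, and for $\tilde{\pi}_s$ because $h$ is essentially smooth so its minimizer over $\Delta(\A)$ lies in the relative interior), and $\pi_s \in \Y$, which holds since $\Delta(\A) \subseteq \Y$; the Pythagorean lemma needs $z \in \Y$, which holds because $\nabla h^*$ maps $\R^{|\A|}$ into $\Y$, and needs $\Delta(\A)$ to be a closed convex subset of $\Y$, which is given. Once these are checked the argument is a two-line chain of substitutions, so I would state the membership facts explicitly at the start and then carry out the computation above. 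Finally I would note the bound holds for every $s \in \S$, completing the proof.
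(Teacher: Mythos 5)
Your proof is correct, and it rests on exactly the two ingredients the paper uses (the three-point identity, Lemma~\ref{lem:3pti}, and the Bregman-projection optimality property, Lemma~\ref{lem:pythagore}), but you combine them in a slightly different and somewhat leaner way. The paper's proof in Appendix~\ref{app:3pdl} applies the three-point identity \emph{twice}, both times with the unprojected point $p_s=\nabla h^\ast(\eta f^\theta_s)$ as the pivot $b$, subtracts the two identities, and then invokes the divergence (``generalized Pythagorean'') form $\D_h(\pi_s,\tilde\pi_s)+\D_h(\tilde\pi_s,p_s)\le\D_h(\pi_s,p_s)$ to eliminate the divergences involving $p_s$. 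You instead use the first-order optimality (variational-inequality) form $\langle \nabla h(\tilde\pi_s)-\nabla h(p_s),\tilde\pi_s-\pi_s\rangle\le 0$ directly — the inequality established inside Lemma~\ref{lem:pythagore} before it is converted to divergences — and then apply the three-point identity only once, with the triple $(c,a,b)=(\pi_s,\tilde\pi_s,\bar\pi_s)$; as a result no Bregman divergence with $p_s$ as second argument ever appears, which shortens the bookkeeping. The trade-off is purely presentational: the paper's route makes the non-expansivity of the Bregman projection explicit (which it reuses conceptually elsewhere), while yours is a more direct derivation of the same inequality. Your regularity checks (that $\bar\pi_s$ and $\tilde\pi_s$ lie where $\nabla h$ is defined, the latter via essential smoothness of $h$, consistent with Lemma~\ref{lem:3pd}, and that $p_s\in\Y$ via \eqref{eq:id}) match what the paper implicitly relies on, so there is no gap.
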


\begin{proof} For clarity of exposition, let $p_s = \nabla h^*(\eta f^\theta_s)$.
Plugging $a = \bar{\pi}_s,$ $b = p_s$ and $c = {\pi}_s$ in the three-point identity lemma \ref{lem:3pti}, we obtain 
\begin{align} \label{eq:3pt1}
    \D_h(\pi_s, \bar{\pi}_s) - \D_h(\pi_s, p_s) + \D_h(\bar{\pi}_s, p_s) = \dotprod{\nabla h(\bar{\pi}_s) - \nabla h(p_s), \bar{\pi}_s - \pi_s}.
\end{align}
Similarly, plugging $a = \bar{\pi}_s, b = p_s$ and $c = \tilde{\pi}_s$ in the three-point identity lemma \ref{lem:3pti}, we obtain 
\begin{align} \label{eq:3pt2}
    \D_h(\tilde{\pi}_s, \bar{\pi}_s) - \D_h(\tilde{\pi}_s, p_s) + \D_h(\bar{\pi}_s, p_s) = \dotprod{\nabla h(\bar{\pi}_s) - \nabla h(p_s), \bar{\pi}_s - \tilde{\pi}_s}.
\end{align}
From \eqref{eq:3pt1}, we have
\begin{eqnarray*}
    &&\D_h(\pi_s, \bar{\pi}_s) - \D_h(\pi_s, p_s) + \D_h(\bar{\pi}_s, p_s) \\
    &=& \dotprod{\nabla h(\bar{\pi}_s) - \nabla h(p_s), \bar{\pi}_s - \pi_s} \\ 
    &=& \dotprod{\nabla h(\bar{\pi}_s) - \nabla h(p_s), \bar{\pi}_s - \tilde{\pi}_s} + \dotprod{\nabla h(\bar{\pi}_s) - \nabla h(p_s), \tilde{\pi}_s - \pi_s} \\ 
    &\overset{\eqref{eq:3pt2}}{=}& \D_h(\tilde{\pi}_s, \bar{\pi}_s) - \D_h(\tilde{\pi}_s, p_s) + \D_h(\bar{\pi}_s, p_s) + \dotprod{\nabla h(\bar{\pi}_s) - \nabla h(p_s), \tilde{\pi}_s - \pi_s}.
\end{eqnarray*}
By rearranging terms, we have
\begin{align} \label{eq:3pt3}
    \D_h(\pi_s, \bar{\pi}_s) - \D_h(\tilde{\pi}_s, \bar{\pi}_s) - \D_h(\pi_s, p_s) + \D_h(\tilde{\pi}_s, p_s) =
    \dotprod{\nabla h(\bar{\pi}_s) - \nabla h(p_s), \tilde{\pi}_s - \pi_s}.
\end{align}
From the Generalized Pythagorean Theorem of the Bregman divergence in Lemma~\ref{lem:pythagore}, also known as non-expansivity property, and from the fact that $\tilde{\pi}_s = \text{Proj}_{\Delta(\A)}^h(p_s)$, we have that
\[
\D_h(\pi_s, \tilde{\pi}_s) + \D_h(\tilde{\pi}_s, p_s) \leq \D_h(\pi_s, p_s) \quad \Longleftrightarrow \quad 
- \D_h(\pi_s, p_s) + \D_h(\tilde{\pi}_s, p_s) \leq - \D_h(\pi_s, \tilde{\pi}_s).
\]
Plugging the above inequality into the left hand side of \eqref{eq:3pt3} yields
\begin{align*}
\D_h(\pi_s, \bar{\pi}_s) - \D_h(\tilde{\pi}_s, \bar{\pi}_s) - \D_h(\pi_s, \tilde{\pi}_s) \geq \dotprod{\nabla h(\bar{\pi}_s) - \nabla h(p_s), \tilde{\pi}_s - \pi_s},
\end{align*}
which concludes the proof with $\nabla h(p_s) = \eta f^\theta_s$.
\end{proof}

We also provide an alternative proof of Lemma \ref{lemma:3pdl} later in Appendix \ref{app:bregman_projection}.

\subsection{Bounding errors}
\label{app:bounding_errors}

In this section, we will bound error terms of the type
\begin{align} \label{eq:error_floor}
\E_{s\sim d_\mu^\pi, a\sim \pi_s}\left[Q^t(s,a) + \eta_t^{-1}[\nabla h(\pi^t_s)]_a - f^{t+1}(s,a)\right],
\end{align}
where 
$(d_\mu^\pi,\pi)\in\{(d_\mu^\star,\pi^\star), (d_\mu^{t+1},\pi^{t+1}), (d_\mu^\star,\pi^t), (d_\mu^{t+1},\pi^t)\}$.
These error terms appear in the forthcoming proofs of our results and directly induce the error floors in the convergence rates. \looseness = -1

\smallskip

In the rest of Appendix \ref{app:2}, let $\q^t : \S \times \A \rightarrow \R$ such that, for every $s \in \S$,
\begin{align*}
    \q^t_s  &:= f^{t+1}_s - \eta_t^{-1}\nabla h(\pi^t_s) \in \R^{|\A|}. %
\end{align*}
So \eqref{eq:error_floor} can be rewritten as
\begin{align} \label{eq:error_floor2}
    \E_{s\sim d_\mu^\pi, a\sim \pi_s}\left[Q^t(s,a) + \eta_t^{-1}[\nabla h(\pi^t_s)]_a - f^{t+1}(s,a)\right]
    = \E_{s\sim d_\mu^\pi, a\sim \pi_s}\left[Q^t(s,a) - q^t(s,a)\right].
\end{align}

To bound it, let $(v^t)_{t\geq0}$ be a sequence of distributions over states and actions. By using Cauchy-Schwartz's inequality, we have
\begin{align*}
    \E_{s\sim d_\mu^\pi, a\sim \pi_s}&\left[Q^t(s,a)-q^t(s,a)\right]\\
    &= \int_{s\in\S,a\in\A}\frac{d^\pi_\mu(s)\pi(a \mid s)}{\sqrt{v^t(s,a)}} 
    \cdot \sqrt{v^t(s,a)}(Q^t(s,a)-q^t(s,a))  \\ 
    &\leq \sqrt{
    \int_{s\in\S,a\in\A} \frac{\left(d^\pi_\mu(s)\pi(a \mid s)\right)^2}{v^t(s,a)} \cdot 
    \int_{s\in\S,a\in\A} v^t(s,a) (Q^t(s,a)-q^t(s,a))^2
    }  \\ 
    &= \sqrt{
    \E_{(s,a) \sim v^t}\left[\left(\frac{d^\pi_\mu(s)\pi(a \mid s)}{v^t(s,a)}\right)^2 \right] \cdot 
    \E_{(s,a) \sim v^t}\left[(Q^t(s,a)-q^t(s,a))^2\right]
    }  \\ 
    &\leq \sqrt{C_v \E_{(s,a) \sim v^t}\left[(Q^t(s,a)-q^t(s,a))^2\right]}, 
\end{align*}
where the last line is obtained by Assumption \ref{hyp:coeff}. Using the concavity of the square root and Assumption \ref{hyp:act}, we have that
\begin{equation}
    \label{eq:error}
    \E\left[\E_{s\sim d_\mu^\pi, a\sim \pi_s}\left[Q^t(s,a)-q^t(s,a)\right]\right]\leq\sqrt{C_v \varepsilon_\mathrm{approx}}.
\end{equation}

\subsection{Quasi-monotonic updates -- Proof of Proposition \ref{prop1}}
\label{app:monotonic}

In this section, we show Proposition \ref{prop1} with its proof that the AMPO updates guarantee a quasi-monotonic property, i.e.,\ a non-decreasing property up to a certain error floor due to the approximation error, which allows us to establish an important recursion about the AMPO iterates next. First, we recall the performance difference lemma~\cite{kakade2002approximatelyoptimal} which is the second key tool for our analysis and a well known result in the RL literature. Here we use a particular form of the lemma presented by \citet[Lemma 1]{xiao2022convergence}.

\begin{lemma}[Performance difference lemma, Lemma 1 in~\cite{xiao2022convergence}] \label{lem:pdl}
For any policy $\pi, \pi' \in \Delta(\A)^{\S}$ and $\mu \in \Delta(\S)$,
\begin{align*}
    V^\pi(\mu) - V^{\pi'}(\mu) %
    &= \frac{1}{1-\gamma}\E_{s \sim d^\pi_\mu}\left[\dotprod{Q^{\pi'}_s, \pi_s - \pi'_s}\right]. %
\end{align*}
\end{lemma}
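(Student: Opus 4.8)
The plan is to use the classical telescoping-sum argument, which requires no input beyond the definitions of $V^\pi$, $Q^\pi$ and $d^\pi_\mu$ recalled in \Cref{sec:setting}. Fix $\mu \in \Delta(\S)$ and consider a trajectory $(s_0,a_0,s_1,a_1,\dots)$ generated by following $\pi$, i.e.\ $s_0 \sim \mu$, $a_t \sim \pi(\cdot \mid s_t)$ and $s_{t+1} \sim P(\cdot \mid s_t,a_t)$. First I would note that, since $r(s,a) \in [0,1]$ and $\gamma \in [0,1)$, every value function is bounded by $1/(1-\gamma)$, so $\gamma^T V^{\pi'}(s_T) \to 0$ along the trajectory and the series $\sum_{t\ge0}\gamma^t\big(\gamma V^{\pi'}(s_{t+1}) - V^{\pi'}(s_t)\big)$ telescopes to $-V^{\pi'}(s_0)$. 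Taking expectations over the $\pi$-trajectory and adding this to $V^\pi(\mu) = \E_{\tau\sim\pi}\big[\sum_{t\ge0}\gamma^t r(s_t,a_t)\big]$ gives
\[
V^\pi(\mu) - V^{\pi'}(\mu) = \E_{\tau\sim\pi}\Big[\sum_{t=0}^\infty \gamma^t\big(r(s_t,a_t) + \gamma V^{\pi'}(s_{t+1}) - V^{\pi'}(s_t)\big)\Big].
\]

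Next I would peel off the conditional expectations term by term. Conditioning on $(s_t,a_t)$ and averaging over $s_{t+1}$, the definition of the $Q$-function gives $\E\big[r(s_t,a_t)+\gamma V^{\pi'}(s_{t+1}) \,\big|\, s_t,a_t\big] = Q^{\pi'}(s_t,a_t)$, so the $t$-th bracket becomes $Q^{\pi'}(s_t,a_t) - V^{\pi'}(s_t)$. Conditioning further on $s_t$ and averaging over $a_t \sim \pi(\cdot\mid s_t)$, using $\E_{a\sim\pi(\cdot\mid s)}[Q^{\pi'}(s,a)] = \langle Q^{\pi'}_s,\pi_s\rangle$ and $V^{\pi'}(s) = \langle Q^{\pi'}_s, \pi'_s\rangle$, the $t$-th term equals $\langle Q^{\pi'}_{s_t}, \pi_{s_t} - \pi'_{s_t}\rangle$, whence
\[
V^\pi(\mu) - V^{\pi'}(\mu) = \sum_{t=0}^\infty \gamma^t\, \E\big[\langle Q^{\pi'}_{s_t}, \pi_{s_t} - \pi'_{s_t}\rangle\big],
\]
where $s_t$ is drawn according to $\P(s_t = \cdot \mid \pi, s_0)$ with $s_0 \sim \mu$.

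Finally I would interchange the sum and the expectation over states and recognise, from the definition \eqref{eq:d_mu}, that $\sum_{t\ge0}\gamma^t \P(s_t = s \mid \pi,\mu) = \frac{1}{1-\gamma} d^\pi_\mu(s)$, which turns the right-hand side into $\frac{1}{1-\gamma}\E_{s\sim d^\pi_\mu}[\langle Q^{\pi'}_s, \pi_s - \pi'_s\rangle]$ and proves the lemma. The only non-algebraic steps are the vanishing of the tail $\gamma^T V^{\pi'}(s_T)$ in the telescoping identity and the Fubini-type interchange of the infinite sum with the expectation; both are immediate since each summand is bounded in absolute value by $\gamma^t\cdot 3/(1-\gamma)$, which is summable, so I do not anticipate any genuine obstacle — this is a textbook result included here for completeness.
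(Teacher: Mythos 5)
Your proof is correct: the telescoping decomposition of $V^\pi(\mu)-V^{\pi'}(\mu)$ via the Bellman equation for $Q^{\pi'}$, followed by folding the discounted sum over $\P(s_t=\cdot\mid\pi,s_0)$ into $d^\pi_\mu$ using \eqref{eq:d_mu}, is exactly the classical argument of \citet{kakade2002approximatelyoptimal} underlying the cited Lemma 1 of \citet{xiao2022convergence}, and your handling of the vanishing tail $\gamma^T V^{\pi'}(s_T)$ and the sum--expectation interchange is sound. The paper itself does not reprove this lemma but simply imports it from the literature, so there is nothing to compare beyond noting that your argument coincides with the standard proof of the cited result.
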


For clarity of exposition, we introduce the notation
\[\tau:= \frac{2\sqrt{C_v \varepsilon_\mathrm{approx}}}{1-\gamma}.\]

\bigskip 

Proposition \ref{prop1} characterizes the non-decreasing property of AMPO. The error bound \eqref{eq:error} in Appendix \ref{app:bounding_errors} will be used to prove the the result. %

\begin{proposition}[Proposition \ref{prop1}]
\looseness=-1
    For the iterates of Algorithm \ref{alg}, at each time $t\geq0$, we have
    \[\E[V^{t+1}(\mu)-V^t(\mu)]\geq 
    \E\left[\E_{s\sim d_\mu^{t+1}}\left[\frac{\D_h(\pi^{t+1}_s,\pi^t_s) + \D_h(\pi^t_s,\pi^{t+1}_s)}{\eta_t(1-\gamma)}\right]\right] - \tau.\]
\end{proposition}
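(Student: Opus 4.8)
The plan is to combine the performance difference lemma (Lemma \ref{lem:pdl}) applied to the pair $(\pi^{t+1},\pi^t)$ with two applications of the three-point descent Lemma \ref{lemma:3pdl}, and then absorb the mismatch between $q^t_s = f^{t+1}_s - \eta_t^{-1}\nabla h(\pi^t_s)$ and $Q^t_s$ into the error term $\tau$ using the bound \eqref{eq:error} derived in Appendix \ref{app:bounding_errors}. First I would write
\[
V^{t+1}(\mu) - V^t(\mu) = \frac{1}{1-\gamma}\E_{s\sim d_\mu^{t+1}}\big[\langle Q^t_s, \pi^{t+1}_s - \pi^t_s\rangle\big]
\]
and split $Q^t_s = q^t_s + (Q^t_s - q^t_s)$. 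The term involving $Q^t_s - q^t_s$, after taking expectations over the randomness of the algorithm, is controlled by \eqref{eq:error} (applied to both $(d_\mu^{t+1},\pi^{t+1})$ and $(d_\mu^{t+1},\pi^t)$, both of which are covered by Assumption \ref{hyp:coeff}), contributing at most $-\tau$ to the lower bound. It remains to lower bound $\frac{1}{1-\gamma}\E_{s\sim d_\mu^{t+1}}[\langle q^t_s, \pi^{t+1}_s - \pi^t_s\rangle]$.

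For the main term, note $\langle q^t_s, \pi^{t+1}_s - \pi^t_s\rangle = \eta_t^{-1}\langle \eta_t f^{t+1}_s - \nabla h(\pi^t_s), \pi^{t+1}_s - \pi^t_s\rangle$. I would apply Lemma \ref{lemma:3pdl} twice with $\bar\pi = \pi^t$, $f^\theta = f^{t+1}$, $\eta = \eta_t$, so that $\tilde\pi = \pi^{t+1}$: once with $\pi = \pi^{t+1}$, which gives (the left side vanishing)
\[
0 \le \D_h(\pi^{t+1}_s,\pi^t_s) - \D_h(\pi^{t+1}_s,\pi^t_s) - \D_h(\pi^{t+1}_s,\pi^{t+1}_s) = 0,
\]
so that is vacuous; the useful one is with $\pi = \pi^t$, giving
\[
\langle \eta_t f^{t+1}_s - \nabla h(\pi^t_s), \pi^t_s - \pi^{t+1}_s\rangle \le \D_h(\pi^t_s,\pi^t_s) - \D_h(\pi^{t+1}_s,\pi^t_s) - \D_h(\pi^t_s,\pi^{t+1}_s) = -\D_h(\pi^{t+1}_s,\pi^t_s) - \D_h(\pi^t_s,\pi^{t+1}_s),
\]
hence $\langle \eta_t f^{t+1}_s - \nabla h(\pi^t_s), \pi^{t+1}_s - \pi^t_s\rangle \ge \D_h(\pi^{t+1}_s,\pi^t_s) + \D_h(\pi^t_s,\pi^{t+1}_s)$. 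Dividing by $\eta_t$ and by $1-\gamma$, and taking $\E_{s\sim d_\mu^{t+1}}$ and then expectation over the algorithm, yields exactly the claimed Bregman-divergence term.

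The main obstacle, such as it is, is bookkeeping: one must check that the two instances of the concentrability requirement invoked, namely $(d_\mu^{t+1},\pi^{t+1})$ and $(d_\mu^{t+1},\pi^t)$, are precisely among the four pairs listed in Assumption \ref{hyp:coeff} (they are), so that \eqref{eq:error} applies to bound $\E[\E_{s\sim d_\mu^{t+1},a\sim\pi^{t+1}_s}[Q^t-q^t]]$ and $\E[\E_{s\sim d_\mu^{t+1},a\sim\pi^t_s}[Q^t-q^t]]$ each by $\sqrt{C_v\varepsilon_\mathrm{approx}}$, summing to $\tau$ after the $1/(1-\gamma)$ factor. Everything else is a direct substitution; no regularity beyond what Lemma \ref{lemma:3pdl} already assumes ($\pi^t$ in the relative interior of the simplex, which holds for Bregman projected policies since $h$ is essentially smooth) is needed.
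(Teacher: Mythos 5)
Your proposal is correct and follows essentially the same route as the paper's proof: the performance difference lemma applied to $(\pi^{t+1},\pi^t)$, the split $Q^t_s = q^t_s + (Q^t_s - q^t_s)$ with the error term bounded by \eqref{eq:error} for the pairs $(d_\mu^{t+1},\pi^{t+1})$ and $(d_\mu^{t+1},\pi^t)$, and Lemma \ref{lemma:3pdl} with $\bar\pi=\pi^t$, $\tilde\pi=\pi^{t+1}$, $\pi=\pi^t$ to extract the Bregman divergence terms. The only cosmetic difference is that you briefly consider the vacuous instance $\pi=\pi^{t+1}$ before discarding it, which the paper omits.
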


\begin{proof}
    Using Lemma \ref{lemma:3pdl} with $\bar{\pi} = \pi^t$, $f^\theta=f^{t+1}$, $\eta =\eta_t$, thus $\tilde{\pi} = \pi^{t+1}$ by Definition \ref{def:pol} and Algorithm \ref{alg}, and $\pi_s = \pi^t_s$, we have
    \begin{align} \label{eq:3pdl}
    \langle \eta_t\q_s^t,\pi^t_s - \pi^{t+1}_s\rangle 
    \leq \D_h(\pi^t_s,\pi^t_s) - \D_h(\pi^{t+1}_s,\pi^t_s) - \D_h(\pi^t_s,\pi^{t+1}_s).
    \end{align}
    By rearranging terms and noticing $\D_h(\pi^t_s,\pi^t_s) = 0$, we have
    \begin{align} \label{eq:>0}
    \langle \eta_t\q_s^t,\pi^{t+1}_s - \pi^t_s\rangle \geq \D_h(\pi^{t+1}_s,\pi^t_s) + \D_h(\pi^t_s,\pi^{t+1}_s) \geq 0.
    \end{align}
    Then, by the performance difference lemma \ref{lem:pdl}, we have
    \begin{eqnarray*}
		(1-\gamma)\E[V^{t+1}(\mu)-V^t(\mu)]
        &=&\E\left[\E_{s\sim d_\mu^{t+1}}\left[\langle Q_s^t,\pi^{t+1}_s-\pi_s^t\rangle\right]\right]\\
		&=&\E\left[\E_{s\sim d_\mu^{t+1}}\left[\langle \q_s^t,\pi^{t+1}_s-\pi_s^t\rangle\right]\right]\\
        &&+ \E\left[\E_{s\sim d_\mu^{t+1}}\left[\langle Q_s^t-\q_s^t,\pi^{t+1}_s-\pi_s^t\rangle\right]\right]\\
		&\overset{\eqref{eq:3pdl}}{\geq}& 
        \E\left[\E_{s\sim d_\mu^{t+1}}\left[\frac{\D_h(\pi^{t+1}_s,\pi^t_s) + \D_h(\pi^t_s,\pi^{t+1}_s)}{\eta_t}\right]\right]
        \\
        &&-\left|\E\left[\E_{s\sim d_\mu^{t+1}}\left[\langle Q_s^t-\q_s^t,\pi^{t+1}_s-\pi_s^t\rangle\right]\right]\right|\\ 
        &\geq& \E\left[\E_{s\sim d_\mu^{t+1}}\left[\frac{\D_h(\pi^{t+1}_s,\pi^t_s) + \D_h(\pi^t_s,\pi^{t+1}_s)}{\eta_t}\right]\right] - \tau(1-\gamma),
	\end{eqnarray*}
    which concludes the proof after dividing both sides by $(1-\gamma)$. The last line follows from
    \begin{eqnarray} \label{eq:error_floor4}
        \left|\E\left[\E_{s\sim d_\mu^{t+1}}\left[\langle Q_s^t-\q_s^t,\pi^{t+1}_s-\pi_s^t\rangle\right]\right]\right|
        &\leq& \left|\E\left[\E_{s\sim d_\mu^{t+1}, a\sim\pi^{t+1}_s}\left[Q^t(s,a)-\q^t(s,a)\right]\right]\right|\\
        &&+ \left|\E\left[\E_{s\sim d_\mu^{t+1}, a\sim\pi^t_s}\left[Q^t(s,a)-\q^t(s,a)\right]\right]\right| \nonumber \\
        &\overset{\eqref{eq:error}}{\leq}& 2\sqrt{C_1 \varepsilon_\mathrm{error}}
        = \tau(1-\gamma),
    \end{eqnarray}
    where both terms are upper bounded by $\sqrt{C_v \varepsilon_\mathrm{approx}}$ through \eqref{eq:error} with $(d_\mu^\pi,\pi) = (d_\mu^{t+1}, \pi^{t+1})$ and $(d_\mu^\pi,\pi) = (d_\mu^{t+1}, \pi^t)$, respectively.
\end{proof}

\subsection{Main passage -- An important recursion about the AMPO method}

In this section, we show an important recursion result for the AMPO updates, which will be used for both the sublinear and the linear convergence analysis of AMPO.

For clarity of exposition in the rest of Appendix~\ref{app:2}, let 
\begin{align*}
    \nu_t := \norm{\frac{d^\star_\mu}{d_\mu^{t+1}}}_{L_\infty} := \sup_{s\in\S}\frac{d^\star_\mu(s)}{d_\mu^{t+1}(s)}.
\end{align*}

For two different time $t, t' \geq 0$, let $\D^t_{t'}$ denote the expected Bregman divergence between the policy $\pi^t$ and policy $\pi^{t'}$, where the expectation is taken over the discounted state visitation distribution of the optimal policy $d_\mu^\star$, that is,
\[\D^t_{t'} := \E_{s \sim d_\mu^\star}\left[\D_h(\pi^t_s,\pi^{t'}_s)\right].\]
Similarly, let $\D^\star_t$ denote the expected Bregman divergence between the optimal policy $\pi^\star$ and $\pi^t$, that is,
\[\D^\star_t := \E_{s \sim d_\mu^\star}\left[\D_h(\pi^\star_s,\pi^t_s)\right].\]
Let $\Delta_t := V^\star(\mu) - V^t(\mu)$ be the optimality gap.

\bigskip 

We can now state the following important recursion result for the AMPO method.

\begin{proposition}[Proposition \ref{prop:passage2}] \label{prop:passage}
    Consider the iterates of Algorithm \ref{alg}, at each time $t\geq0$, we have
    \[ \E\left[\frac{\D^{t+1}_t}{(1-\gamma)\eta_t}+\nu_\mu\left(\Delta_{t+1}-\Delta_t\right)+\Delta_t\right]\leq\E\left[\frac{\D^\star_t}{(1-\gamma)\eta_t} -\frac{\D^\star_{t+1}}{(1-\gamma)\eta_t}\right]+(1+\nu_\mu)\tau.\]
\end{proposition}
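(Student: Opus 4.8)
The plan is to combine three ingredients: Lemma \ref{lemma:3pdl} applied with $\pi_s = \pi^\star_s$, the performance difference lemma (Lemma \ref{lem:pdl}), and the error bound \eqref{eq:error} from Appendix \ref{app:bounding_errors}, together with the distribution mismatch coefficient \ref{hyp:mism}. I would start by writing the optimality gap via the performance difference lemma:
\[
(1-\gamma)\Delta_t = (1-\gamma)\big(V^\star(\mu) - V^t(\mu)\big) = \E_{s\sim d_\mu^\star}\big[\langle Q^t_s, \pi^\star_s - \pi^t_s\rangle\big].
\]
Then I would split $Q^t_s = \q^t_s + (Q^t_s - \q^t_s)$, so that $(1-\gamma)\Delta_t = \E_{s\sim d_\mu^\star}[\langle \q^t_s, \pi^\star_s - \pi^t_s\rangle] + \E_{s\sim d_\mu^\star}[\langle Q^t_s - \q^t_s, \pi^\star_s - \pi^t_s\rangle]$, and the second term is controlled in absolute value by $2\sqrt{C_v\varepsilon_\mathrm{approx}} = \tau(1-\gamma)$ via \eqref{eq:error} applied with $(d_\mu^\star,\pi^\star)$ and $(d_\mu^\star,\pi^t)$, after taking expectations over the randomness of the algorithm.

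Next I would bound $\langle \eta_t \q^t_s, \pi^\star_s - \pi^t_s\rangle$ using Lemma \ref{lemma:3pdl} with $\bar\pi = \pi^t$, $f^\theta = f^{t+1}$, $\eta = \eta_t$, $\tilde\pi = \pi^{t+1}$, and $\pi_s = \pi^\star_s$: writing $\langle \eta_t\q^t_s, \pi^\star_s - \pi^t_s\rangle = \langle \eta_t\q^t_s, \pi^\star_s - \pi^{t+1}_s\rangle + \langle \eta_t\q^t_s, \pi^{t+1}_s - \pi^t_s\rangle$, the first piece is at most $\D_h(\pi^\star_s,\pi^t_s) - \D_h(\pi^{t+1}_s,\pi^t_s) - \D_h(\pi^\star_s,\pi^{t+1}_s)$, and the second piece is handled by \eqref{eq:>0} from the proof of Proposition \ref{prop1}, which gives $\langle \eta_t\q^t_s, \pi^{t+1}_s - \pi^t_s\rangle \geq \D_h(\pi^{t+1}_s,\pi^t_s) + \D_h(\pi^t_s,\pi^{t+1}_s)$; actually for this direction I want an upper bound on $\langle \eta_t \q^t_s, \pi^\star_s - \pi^t_s \rangle$, so I keep the exact Lemma \ref{lemma:3pdl} inequality for the $\pi^\star_s - \pi^{t+1}_s$ part and use Proposition \ref{prop1}'s argument (i.e.\ the performance difference lemma applied to $V^{t+1} - V^t$) to rewrite the $\pi^{t+1}_s - \pi^t_s$ part in terms of $\Delta_{t+1} - \Delta_t$. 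Concretely, taking expectations over $s\sim d_\mu^\star$, dividing by $(1-\gamma)\eta_t$, and collecting the Bregman terms yields
\[
\Delta_t \leq \frac{\D^\star_t - \D^\star_{t+1} - \D^{t+1}_t}{(1-\gamma)\eta_t} + \frac{1}{(1-\gamma)}\E_{s\sim d_\mu^\star}\big[\langle Q^t_s, \pi^{t+1}_s - \pi^t_s\rangle\big] + \tau,
\]
where I have used $\langle \eta_t \q^t_s, \pi^{t+1}_s - \pi^t_s\rangle \le \langle \eta_t Q^t_s, \pi^{t+1}_s - \pi^t_s \rangle + \eta_t|\langle Q^t_s - \q^t_s, \pi^{t+1}_s - \pi^t_s\rangle|$ and absorbed the error term.

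The main obstacle is the transfer of the term $\E_{s\sim d_\mu^\star}[\langle Q^t_s, \pi^{t+1}_s - \pi^t_s\rangle]$ from the distribution $d_\mu^\star$ to $d_\mu^{t+1}$, where the performance difference lemma gives $\E_{s\sim d_\mu^{t+1}}[\langle Q^t_s, \pi^{t+1}_s - \pi^t_s\rangle] = (1-\gamma)(V^{t+1}(\mu) - V^t(\mu)) = (1-\gamma)(\Delta_t - \Delta_{t+1})$. Since by \eqref{eq:>0} the integrand $\langle \q^t_s, \pi^{t+1}_s - \pi^t_s\rangle \geq 0$ is nonnegative (up to the approximation error), I can apply \ref{hyp:mism}, i.e.\ $d_\mu^\star(s) \le \nu_\mu d_\mu^{t+1}(s)$, to get $\E_{s\sim d_\mu^\star}[\langle \q^t_s, \pi^{t+1}_s - \pi^t_s\rangle] \le \nu_\mu \E_{s\sim d_\mu^{t+1}}[\langle \q^t_s, \pi^{t+1}_s - \pi^t_s\rangle]$; then I convert back from $\q^t$ to $Q^t$ on both sides at the cost of error terms bounded again by \eqref{eq:error} (this time with $(d_\mu^\star,\pi^t)$, $(d_\mu^\star,\pi^{t+1})$, $(d_\mu^{t+1},\pi^t)$, $(d_\mu^{t+1},\pi^{t+1})$ — exactly the four pairs listed in \ref{hyp:coeff}). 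This gives $\E_{s\sim d_\mu^\star}[\langle Q^t_s, \pi^{t+1}_s - \pi^t_s\rangle] \le \nu_\mu(1-\gamma)(\Delta_t - \Delta_{t+1}) + (\text{error})$. Substituting this into the displayed inequality, rearranging so that $\nu_\mu(\Delta_{t+1} - \Delta_t) + \Delta_t$ appears on the left, and keeping the $\D^{t+1}_t/((1-\gamma)\eta_t)$ term on the left (it is nonnegative, so this is the tightest form) gives exactly the claimed inequality, with the total error budget adding up to $(1+\nu_\mu)\tau$. Finally I would take the expectation over the randomness of AMPO throughout, using Jensen/concavity of the square root exactly as in \eqref{eq:error}, to land the stated bound.
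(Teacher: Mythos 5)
Your argument is, in its final assembled form, essentially the paper's own proof: apply Lemma \ref{lemma:3pdl} with $\bar{\pi}=\pi^t$, $f^\theta=f^{t+1}$, $\pi=\pi^\star$, split the resulting inner product as $\langle \q^t_s,\pi^\star_s-\pi^t_s\rangle$ plus $\langle \q^t_s,\pi^{t+1}_s-\pi^t_s\rangle$, use the performance difference lemma twice, change measure from $d^\star_\mu$ to $d^{t+1}_\mu$ on the nonnegative term via the mismatch coefficient, and absorb the $Q^t-\q^t$ discrepancies through \eqref{eq:error}, landing on the same $(1+\nu_\mu)\tau$ budget. Two remarks. First, the intermediate display in which you replace $\q^t$ by $Q^t$ under $d^\star_\mu$ before the measure change should be dropped: that conversion requires the pair $(d^\star_\mu,\pi^{t+1})$, which is \emph{not} among the four pairs of \ref{hyp:coeff} (they are $(d_\mu^\star,\pi^\star)$, $(d_\mu^{t+1},\pi^{t+1})$, $(d_\mu^\star,\pi^t)$, $(d_\mu^{t+1},\pi^t)$, so your parenthetical list is off), and once the term is written with $Q^t$ the integrand is no longer pointwise nonnegative, so \ref{hyp:mism} cannot be applied to it. The ordering you spell out afterwards — change measure on $\langle\q^t_s,\pi^{t+1}_s-\pi^t_s\rangle\geq 0$ (which holds exactly by \eqref{eq:>0}, not merely ``up to the approximation error''), then convert $\q^t\to Q^t$ under $d^{t+1}_\mu$, then apply Lemma \ref{lem:pdl} — is the correct one and uses only covered pairs, so this is a presentational slip rather than a gap, but as written the display is not justified by the stated assumptions. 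Second, a genuine (minor) difference: by invoking \ref{hyp:mism} at time $t+1$ directly, i.e.\ $d^\star_\mu(s)\leq\nu_\mu d^{t+1}_\mu(s)$, you obtain $\nu_\mu$ immediately, whereas the paper carries $\nu_{t+1}=\sup_s d^\star_\mu(s)/d^{t+1}_\mu(s)$ through the computation and only replaces it by $\nu_\mu$ at the end, a step that needs the sign $\Delta_{t+1}-\Delta_t-\tau\leq 0$ from Proposition \ref{prop1}; your route bypasses that extra appeal to quasi-monotonicity and reaches the same bound.
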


\begin{proof}
    Using Lemma \ref{lemma:3pdl} with $\bar{\pi} = \pi^t$, $f^\theta=f^{t+1}$, $\eta =\eta_t$, and thus $\tilde{\pi} = \pi^{t+1}$ by Definition \ref{def:pol} and Algorithm \ref{alg}, and $\pi_s = \pi^\star_s$, we have that
    \begin{align*}
        \langle \eta_t\q_s^t, \pi^\star_s - \pi^{t+1}_s\rangle
        \leq \D_h(\pi^\star, \pi^t) - \D_h(\pi^\star, \pi^{t+1}) - \D_h(\pi^{t+1}, \pi^t),
    \end{align*}
    which can be decomposed as
    \begin{align*}
        \langle \eta_t\q_s^t, \pi^t_s - \pi^{t+1}_s\rangle + \langle \eta_t\q_s^t, \pi^\star_s - \pi^t_s\rangle
        \leq \D_h(\pi^\star, \pi^t) - \D_h(\pi^\star, \pi^{t+1}) - \D_h(\pi^{t+1}, \pi^t).
    \end{align*}
    Taking expectation with respect to the distribution $d_\mu^\star$ over states and with respect to the randomness of AMPO and dividing both sides by $\eta_t$, we have
    \begin{equation}
        \label{eq:2}
        \E\left[\E_{s\sim d_\mu^\star}\left[\langle \q_s^t, \pi^t_s - \pi^{t+1}_s\rangle\right] \right]
        + \E\left[\E_{s\sim d_\mu^\star}\left[\langle \q_s^t, \pi^\star_s - \pi^t_s\rangle\right]\right]
        \leq \frac{1}{\eta_t}\E[\D^\star_t - \D^\star_{t+1}-\D^{t+1}_t].
    \end{equation}
    We lower bound the two terms on the left hand side of \eqref{eq:2} separately. For the first term, we have that
    \begin{eqnarray*}
        \E\left[\E_{s\sim d_\mu^\star}\left[\langle \q^t_s,\pi^t_s - \pi^{t+1}_s\rangle\right]\right]
        &\overset{\eqref{eq:>0}}{\geq}& \norm{\frac{d^\star_\mu}{d_\mu^{t+1}}}_{L_\infty}\E\left[\E_{s\sim d_\mu^{t+1}}\left[\langle \q_s^t,\pi^t_s - \pi^{t+1}_s\rangle\right]\right]\\
        &=& \nu_{t+1}\E\left[\E_{s\sim d_\mu^{t+1}}\left[\langle Q_s^t,\pi^t_s - \pi^{t+1}_s\rangle\right]\right]\\
        &&+ \nu_{t+1}\E\left[\E_{s\sim d_\mu^{t+1}}\left[\langle \q_s^t-Q_s^t,\pi^t_s - \pi^{t+1}_s\rangle\right]\right]\\
        &\overset{(a)}{=}& \nu_{t+1}(1-\gamma)\E\left[V^t(\mu)-V^{t+1}(\mu)\right]\\
        &&+ \nu_{t+1}\E\left[\E_{s\sim d_\mu^{t+1}}\left[\langle \q_s^t-Q_s^t,\pi^t_s - \pi^{t+1}_s\rangle\right]\right]\\
        &\overset{\eqref{eq:error_floor4}}{\geq}& \nu_{t+1}(1-\gamma)\E\left[V^t(\mu)-V^{t+1}(\mu)\right]- \nu_{t+1}\tau(1-\gamma) \\
        &=& \nu_{t+1}(1-\gamma)\E\left[\Delta_{t+1}-\Delta_t\right]- \nu_{t+1}\tau(1-\gamma),
    \end{eqnarray*}
    where $(a)$ follows from Lemma \ref{lem:pdl}. For the second term, we have that
    \begin{eqnarray*}
        \E\left[\E_{s\sim d_\mu^\star}\left[\langle \q^t_s,\pi^\star_s - \pi^t_s\rangle\right]\right] 
        &=& \E\left[\E_{s\sim d_\mu^\star}\left[\langle Q^t_s,\pi^\star_s - \pi^t_s\rangle\right]\right]
        + \E\left[\E_{s\sim d_\mu^\star}\left[\langle \q^t_s - Q^t_s,\pi^\star_s - \pi^t_s\rangle\right]\right] \\ 
        &\overset{(b)}{=}& \E[\Delta_t](1-\gamma) + \E\left[\E_{s\sim d_\mu^\star}\left[\langle \q^t_s - Q^t_s,\pi^\star_s - \pi^t_s\rangle\right]\right] \\ 
        &\overset{(c)}{\geq}& \E[\Delta_t](1-\gamma) - \tau(1-\gamma),
    \end{eqnarray*}
    where $(b)$ follows from Lemma \ref{lem:pdl} and $(c)$ follows similarly to \eqref{eq:error_floor4}, i.e., by applying \eqref{eq:error} twice with $(d_\mu^\pi,\pi) = (d_\mu^\star,\pi^\star)$ and $(d_\mu^\pi,\pi) = (d_\mu^\star,\pi^t)$.

    \smallskip 

    Plugging the two bounds in \eqref{eq:2}, dividing both sides by $(1-\gamma)$ and rearranging, we obtain
    \[
    \E\left[\frac{\D^{t+1}_t}{(1-\gamma)\eta_t}+\nu_{t+1}\left(\Delta_{t+1}-\Delta_t-\tau\right)+\Delta_t\right]
    \leq \E\left[\frac{\D^\star_t}{(1-\gamma)\eta_t} -\frac{\D^\star_{t+1}}{(1-\gamma)\eta_t}\right]+\tau.
    \]
    From Proposition \ref{prop1}, we have that $\Delta_{t+1}-\Delta_t-\tau\leq0$. Consequently, since $\nu_{t+1} \leq \nu_\mu$ by the definition of $\nu_\mu$ in Assumption \ref{hyp:mism}, one can lower bound the left hand side of the above inequality by replacing $\nu_{t+1}$ by $\nu_\mu$, that is,
    \[ \E\left[\frac{\D^{t+1}_t}{(1-\gamma)\eta_t}+\nu_\mu\left(\Delta_{t+1}-\Delta_t-\tau\right)+\Delta_t\right]
    \leq\E\left[\frac{\D^\star_t}{(1-\gamma)\eta_t} -\frac{\D^\star_{t+1}}{(1-\gamma)\eta_t}\right]+\tau,\]
    which concludes the proof.
\end{proof}

\subsection{Proof of the sublinear convergence analysis}
In this section, we derive the sublinear convergence result of Theorem \ref{thm1} with non-decreasing step-size.
\begin{proof}
    Starting from Proposition \ref{prop:passage}
    \[\Ex{\frac{\D^{t+1}_t}{(1-\gamma)\eta_t}+ \nu_\mu\left(\Delta_{t+1}-\Delta_t\right)+\Delta_t}\leq\Ex{\frac{\D^\star_t}{(1-\gamma)\eta_t} -\frac{\D^\star_{t+1}}{(1-\gamma)\eta_t}}+(1+\nu_\mu)\tau.\]
    If $\eta_t \leq \eta_{t+1}$,
    \begin{align} \label{eq:telescope}
    \Ex{\frac{\D^{t+1}_t}{(1-\gamma)\eta_t}+ \nu_\mu\left(\Delta_{t+1}-\Delta_t\right)+\Delta_t}\leq\Ex{\frac{\D^\star_t}{(1-\gamma)\eta_t} -\frac{\D^\star_{t+1}}{(1-\gamma)\eta_{t+1}}}+(1+\nu_\mu)\tau.
    \end{align}
    Summing up from $0$ to $T-1$ and dropping some positive terms on the left hand side and some negative terms on the right hand side, we have   
    \begin{align*}
        \sum_{t<T} \Ex{\Delta_t}
        \leq  \frac{\D^\star_0}{(1-\gamma)\eta_0}+\nu_\mu\Delta_0+T(1+\nu_\mu)\tau
        \leq  \frac{\D^\star_0}{(1-\gamma)\eta_0}+\frac{\nu_\mu}{1-\gamma}+T(1+\nu_\mu)\tau.
    \end{align*}
    Notice that $\Delta_0\leq\frac{1}{1-\gamma}$ as $r(s,a) \in [0,1]$. By dividing $T$ on both side, we yield the proof of the sublinear convergence
    \begin{align*}
        V^\star(\mu)-\frac{1}{T}\sum_{t<T}\Ex{V^t(\mu)}\leq\frac{1}{T}\left(\frac{\D^\star_0}{(1-\gamma)\eta_0} + \frac{\nu_\mu}{1-\gamma}\right)
        +(1+\nu_\mu)\tau.
    \end{align*}
\end{proof}

\subsection{Proof of the linear convergence analysis}

In this section, we derive the linear convergence result of Theorem \ref{thm1} with exponentially increasing step-size.
\begin{proof}
    Starting from Proposition \ref{prop:passage} by dropping $\frac{\D^{t+1}_t}{(1-\gamma)\eta_t}$ on the left hand side, we have
    \[ \Ex{\nu_\mu\left(\Delta_{t+1}-\Delta_t\right)+\Delta_t}
    \leq\Ex{\frac{\D^\star_t}{(1-\gamma)\eta_t} -\frac{\D^\star_{t+1}}{(1-\gamma)\eta_t}}+(1+\nu_\mu)\tau.\]
    Dividing $\nu_\mu$ on both side and rearranging, we obtain
    \[\Ex{\Delta_{t+1} + \frac{\D^\star_{t+1}}{(1-\gamma) \nu_\mu\eta_t}} \leq \left(1-\frac{1}{ \nu_\mu}\right)\E\left[\Delta_t+\frac{\D^\star_t}{(1-\gamma)\eta_t( \nu_\mu-1)}\right] + \left(1+\frac{1}{\nu_\mu}\right)\tau.\]
    If the step-sizes satisfy $\eta_{t+1}( \nu_\mu-1)\geq\eta_t \nu_\mu$ with $\nu_\mu\geq1$, then
    \[\Ex{\Delta_{t+1} + \frac{\D^\star_{t+1}}{(1-\gamma)\eta_{t+1}( \nu_\mu-1)}} \leq \left(1-\frac{1}{ \nu_\mu}\right)\E\left[\Delta_t+\frac{\D^\star_t}{(1-\gamma)\eta_t( \nu_\mu-1)}\right] + \left(1+\frac{1}{\nu_\mu}\right)\tau.\]
    Now we need the following simple fact, whose proof is straightforward and thus omitted.
    
    Suppose $0<\alpha<1, b>0$ and a nonnegative sequence $\{a_t\}_{t\geq0}$ satisfies 
    \[a_{t+1} \leq \alpha a_t + b\qquad\forall t\geq 0.\] 
    Then for all $t\geq 0$,
    \[a_t \leq \alpha^t a_0+\frac{b}{1-\alpha}.\]
    The proof of the linear convergence analysis follows by applying this fact with $a_t = \Ex{\Delta_t+\frac{\D^\star_t}{(1-\gamma)\eta_t( \nu_\mu-1)}}$, $\alpha = 1-\frac{1}{ \nu_\mu}$ and $b = \left(1+\frac{1}{\nu_\mu}\right)\tau$.
    \end{proof}

\section{Discussion of the first step (Line \ref{ln:actor}) of AMPO -- the compatible function approximation framework}
\label{app:comp_fun}

Starting from this section, some additional remarks about AMPO are in order. In particular, we discuss in detail the novelty of the first step (Line \ref{ln:actor}) and the second step (Line \ref{ln:proj}) of AMPO in this and the next section, respectively. Afterwards, we provide an extensive justification of the assumptions used in Theorem \ref{thm1} in Appendices \ref{app:rel_hyp} to \ref{sec:mism}.

As mentioned in Remark \ref{rem:compatible_fun_approx}, \citet{agarwal2021theory} study NPG with smooth policies through compatible function approximation and propose the following algorithm. Let $\{\pi^\theta:\theta\in\Theta\}$ be a policy class such that $\log\pi^\theta(a \mid s)$ is a $\beta$-smooth function of $\theta$ for all $s\in\S$, $a\in\A$. At each iteration $t$, update%
\[\theta^{t+1} = \theta^t+\eta w^t,\]
with
\begin{equation}
    \label{eq:comp_fun}
    w^t\in\argmin_{\norm{w}_2\leq W}\norm{A^t-w^\top\nabla_\theta \log\pi^t}_{L_2(d^t_\mu\cdot \pi^t)},
\end{equation}
where $W>0$ and $A^t(s,a) = Q^t(s,a)-V^t(s)$ represents the advantage function. While both the algorithm proposed by \citet{agarwal2021theory} and AMPO involve regression problems, the one in \eqref{eq:comp_fun} is restricted to linearly approximate $A^t$ with $\nabla_\theta\log\pi^t$, whereas the one in Line~\ref{ln:actor} of Algorithm \ref{alg} is relaxed to approximate $A^t$ with an arbitrary class of functions $\F^\Theta$. Additionally, \eqref{eq:comp_fun} depends on the distribution $d_\mu^t$, while Line~\ref{ln:actor} of Algorithm \ref{alg} does not and allows off-policy updates involving an arbitrary distribution $v^t$, as $v^t$ is independent of the current policy $\pi^t$. %

\section{Discussion of the second step (Line \ref{ln:proj}) of AMPO -- the Bregman projection}
\label{app:bregman_projection}

As mentioned in Remark \ref{rem:bregman_projection}, we can rewrite the second step (Line \ref{ln:proj}) of AMPO through the following lemma.

\begin{lemma}\label{lem:bregman}
For any policy $\bar{\pi}$, for any function $f^\theta\in\F^\Theta$ and for $\eta>0$, we have, for all $s \in \S$, 
\begin{align*}
    \tilde{\pi}_s 
    \in \argmin_{p \in \Delta(\mathcal{A})} \mathcal{D}_h(p, \nabla h^*(\eta f_s^\theta))
    \; \Longleftrightarrow \; \tilde{\pi}_s
    \in \argmin_{p \in \Delta(\mathcal{A})} \langle-\eta f_s^\theta + \nabla h(\bar{\pi}_s), p\rangle + \mathcal{D}_h(p, \bar{\pi}_s).
\end{align*}
\end{lemma}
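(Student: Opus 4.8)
The plan is to unpack both optimization problems and show they have the same objective up to terms that are constant in $p$, so that their argmin sets coincide. First I would recall the definition of the Bregman divergence, $\D_h(p, y) = h(p) - h(y) - \langle \nabla h(y), p - y\rangle$. Applying this with $y = \nabla h^*(\eta f_s^\theta)$ and using the identity \eqref{eq:id}, namely $\nabla h(\nabla h^*(x^*)) = x^*$, gives $\nabla h(\nabla h^*(\eta f_s^\theta)) = \eta f_s^\theta$. Hence
\[
\D_h(p, \nabla h^*(\eta f_s^\theta)) = h(p) - h(\nabla h^*(\eta f_s^\theta)) - \langle \eta f_s^\theta, p - \nabla h^*(\eta f_s^\theta)\rangle.
\]
The terms $-h(\nabla h^*(\eta f_s^\theta))$ and $+\langle \eta f_s^\theta, \nabla h^*(\eta f_s^\theta)\rangle$ do not depend on $p$, so dropping them shows
\[
\argmin_{p \in \Delta(\A)} \D_h(p, \nabla h^*(\eta f_s^\theta)) = \argmin_{p \in \Delta(\A)} \big[ h(p) - \langle \eta f_s^\theta, p\rangle \big].
\]

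Next I would expand the right-hand optimization problem similarly. Using the definition of $\D_h$ again with second argument $\bar\pi_s$,
\[
\langle -\eta f_s^\theta + \nabla h(\bar\pi_s), p\rangle + \D_h(p, \bar\pi_s) = \langle -\eta f_s^\theta + \nabla h(\bar\pi_s), p\rangle + h(p) - h(\bar\pi_s) - \langle \nabla h(\bar\pi_s), p - \bar\pi_s\rangle.
\]
The $\langle \nabla h(\bar\pi_s), p\rangle$ term cancels against the corresponding term inside $\D_h(p, \bar\pi_s)$, and the remaining pieces $-h(\bar\pi_s) + \langle \nabla h(\bar\pi_s), \bar\pi_s\rangle$ are independent of $p$. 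Thus this objective also reduces to $h(p) - \langle \eta f_s^\theta, p\rangle$ up to an additive constant, so its argmin set over $\Delta(\A)$ is exactly the same set as above. Chaining the two equalities of argmin sets yields the claimed equivalence for every $s \in \S$.

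I do not expect any serious obstacle here; the only point requiring a little care is the use of the identity \eqref{eq:id}, which needs $\nabla h(\Y) = \R^{|\A|}$ (part of the mirror map definition) so that $\nabla h^*(\eta f_s^\theta)$ lies in $\Y$ and $\nabla h \circ \nabla h^* = \mathrm{id}$ holds. One should also note that $\Delta(\A) \subseteq \Y$ (assumed in \Cref{def:pol}) so the minimization over $p \in \Delta(\A)$ is well-posed and $h(p)$, $\D_h(p, \bar\pi_s)$ are defined there. With these observations in place the proof is a two-line computation on each side. As a remark, this lemma combined with the three-point descent lemma of \citet{chen1993convergence} immediately gives an alternative route to \Cref{lemma:3pdl}, since the right-hand formulation exhibits $\tilde\pi_s$ as the minimizer of a sum of a linear term and a Bregman divergence over the convex set $\Delta(\A)$.
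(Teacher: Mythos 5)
Your proposal is correct and follows essentially the same argument as the paper: both expand the Bregman divergences, use the identity \eqref{eq:id} to reduce each objective to $h(p) - \langle \eta f_s^\theta, p\rangle$ up to terms constant in $p$, and conclude that the argmin sets coincide. The only cosmetic difference is that you reduce both sides to the common objective, whereas the paper chains the equivalences in one direction.
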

Equations~\eqref{eq:bregman2} and~\eqref{eq:bregman3} are obtained by choosing $\tilde{\pi}_s = \pi_s^{t+1}$ and $\bar{\pi}_s = \pi_s^t$ for all $s \in \S$, $\eta = \eta_t$, $\theta = \theta^{t+1}$, and by changing the sign of the expression on the right in order to obtain an $\argmax$.

\begin{proof}
Starting from the definition of $\tilde{\pi}$, we have
\begin{align}
    \tilde{\pi}_s &\in \argmin_{p \in \Delta(\mathcal{A})} \mathcal{D}_h(p, \nabla h^*(\eta f_s^\theta)) \nonumber\\
    &\in \argmin_{p \in \Delta(\mathcal{A})} h(p) - h(\nabla h^*(\eta f_s^\theta)) - \langle\nabla h(\nabla h^*(\eta f_s^\theta)), p - \nabla h^*(\eta f_s^\theta)\rangle \nonumber \\ 
    &\in \argmin_{p \in \Delta(\mathcal{A})} h(p) - \langle\eta f_s^\theta, p\rangle \nonumber\\ 
    &\in \argmin_{p \in \Delta(\mathcal{A})} \langle-\eta f_s^\theta + \nabla h(\bar{\pi}_s), p\rangle + h(p) - h(\bar{\pi}_s) - \langle\nabla h(\bar{\pi}_s), p - \bar{\pi}_s \rangle \nonumber\\ 
    &\in \argmin_{p \in \Delta(\mathcal{A})} \langle-\eta f_s^\theta + \nabla h(\bar{\pi}_s), p\rangle + \mathcal{D}_h(p, \bar{\pi}_s), \label{eq:alt_proof}
\end{align}
where the second and the last lines are obtained using the definition of the Bregman divergence, and the third line is obtained using~\eqref{eq:id} ($\nabla h(\nabla h^*(x^*)) = x^*$ for all $x^* \in \mathbb{R}^{|\mathcal{A}|}$). 
\end{proof}

Lemmas \ref{lem:bregman} and \ref{lem:equiv} share a similar result, as they both rewrite the Bregman projection into the MD updates. However, the MD updates in Lemma \ref{lem:equiv} are exact, while the MD updates in AMPO involve approximation (Line \ref{ln:actor}).

Next, we provide an alternative proof for \Cref{lemma:3pdl} to show that it is the direct consequence of Lemma \ref{lem:bregman}. The proof will involve the application of the three-point descent lemma \cite[Lemma 3.2]{chen1993convergence}. Here we adopt its slight variation by following Lemma 6 in \citet{xiao2022convergence}.

\begin{lemma}[Three-point decent lemma, Lemma 6 in~\citet{xiao2022convergence}] \label{lem:3pd}
Suppose that $\mathcal{C} \subset \R^m$ is a closed convex set, $f : \mathcal{C} \rightarrow \R$ is a proper, closed~\footnote{A convex function $f$ is proper if $\mathrm{dom \,} f$ is nonempty and for all $x \in \mathrm{dom \,} f$, $f(x) > -\infty$. A convex function is closed, if it is lower semi-continuous.} convex function, $\mathcal{D}_h(\cdot, \cdot)$ is the Bregman divergence generated by a mirror map $h$. 
Denote $\mathrm{rint \, dom \,} h$ as the relative interior of $\mathrm{dom \,} h$.
For any $x \in \mathrm{rint \, dom \,} h$, let
\begin{align*}
x^+ \in \arg\min_{u \, \in \, \mathrm{dom \,} h \, \cap \, \mathcal{C}}\{f(u) + \mathcal{D}_h(u,x)\}.
\end{align*}
Then $x^+ \in \mathrm{rint \, dom \,} h \cap \mathcal{C}$ and for any $u \in \mathrm{dom \,} h \cap \mathcal{C}$,
\begin{align*}
f(x^+) + \mathcal{D}_h(x^+,x) \leq f(u) + \mathcal{D}_h(u,x) - \mathcal{D}_h(u,x^+).
\end{align*}
\end{lemma}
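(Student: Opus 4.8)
The plan is to prove this as a standard consequence of the first-order optimality condition for the convex problem defining $x^+$, combined with the three-point identity (Lemma~\ref{lem:3pti}). Write $\psi(u) := f(u) + \mathcal{D}_h(u,x)$ for the objective being minimized. Since $h$ is strictly convex, the map $u \mapsto \mathcal{D}_h(u,x)$ is strictly convex, so $\psi$ is a proper closed convex function and $x^+$ is its unique minimizer over $\mathrm{dom}\,h \cap \mathcal{C}$. The first task is to verify $x^+ \in \mathrm{rint}\,\mathrm{dom}\,h \cap \mathcal{C}$: this is where essential smoothness of the mirror map enters. If $x^+$ lay on the relative boundary of $\mathrm{dom}\,h$, then $\norm{\nabla h(x^+)}$ would diverge, while $f$ stays finite and subdifferentiable on the relative interior of its domain and $\mathcal{D}_h(\cdot,x)$ contributes the gradient $\nabla h(\cdot) - \nabla h(x)$; the barrier behavior of $\mathcal{D}_h(\cdot,x)$ therefore forces the minimizer to stay in the relative interior, where $\nabla h(x^+)$ is well defined.

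Next I would write the first-order optimality condition. Because $f$ is only assumed proper closed convex (not differentiable), I use subgradients: the fact that $x^+$ minimizes $\psi$ over the convex set $\mathrm{dom}\,h \cap \mathcal{C}$ yields a subgradient $g \in \partial f(x^+)$ and the variational inequality
\[
\langle g + \nabla h(x^+) - \nabla h(x),\, u - x^+ \rangle \ge 0
\qquad \text{for all } u \in \mathrm{dom}\,h \cap \mathcal{C},
\]
where I have used $\nabla_u \mathcal{D}_h(u,x) = \nabla h(u) - \nabla h(x)$. This step relies on the interiority established above, so that $\nabla h(x^+)$ exists and the normal-cone condition is correctly stated.

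Finally I would combine two facts. Convexity of $f$ gives $f(u) \ge f(x^+) + \langle g, u - x^+ \rangle$, and rearranging the optimality inequality gives $\langle g, u - x^+ \rangle \ge -\langle \nabla h(x^+) - \nabla h(x), u - x^+ \rangle$. Applying the three-point identity (Lemma~\ref{lem:3pti}) with $c = u$, $a = x^+$, $b = x$ yields
\[
\langle \nabla h(x^+) - \nabla h(x),\, u - x^+ \rangle = \mathcal{D}_h(u,x) - \mathcal{D}_h(u,x^+) - \mathcal{D}_h(x^+,x).
\]
Substituting this into the convexity bound for $f(u)$ and rearranging delivers exactly $f(x^+) + \mathcal{D}_h(x^+,x) \le f(u) + \mathcal{D}_h(u,x) - \mathcal{D}_h(u,x^+)$, which is the claim.

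The main obstacle is the interiority assertion $x^+ \in \mathrm{rint}\,\mathrm{dom}\,h$: every downstream step — the existence of $\nabla h(x^+)$, the well-posed optimality condition, and the clean application of the three-point identity — hinges on the minimizer not escaping to the boundary of $\mathrm{dom}\,h$, and this is precisely what essential smoothness of $h$ guarantees. The remaining algebra, namely the three-point identity and the convexity inequality, is routine.
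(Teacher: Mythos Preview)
Your proof is correct and follows the standard route: first-order optimality for the convex subproblem, the subgradient inequality for $f$, and the three-point identity (Lemma~\ref{lem:3pti}) to convert the inner product $\langle \nabla h(x^+) - \nabla h(x), u - x^+\rangle$ into the required combination of Bregman divergences. The interiority argument via essential smoothness of $h$ is exactly the right mechanism for guaranteeing that $\nabla h(x^+)$ exists and that the optimality condition and three-point identity can be invoked cleanly.

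The paper itself does not supply a proof of this lemma; it simply cites \citet[Lemma 11]{yuan2023linear} (and ultimately \citet[Lemma 6]{xiao2022convergence}). Your argument is precisely the standard one found in those references, so there is nothing further to compare.
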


We refer to \citet[Lemma 11]{yuan2023linear} for a proof of Lemma \ref{lem:3pd}.

\Cref{lemma:3pdl} is obtained by simply applying the three-point descent lemma, Lemma \ref{lem:3pd}, to~\eqref{eq:alt_proof} with $x^+ = \tilde{\pi}_s$, $f(u) = \langle-\eta f_s^\theta + \nabla h(\bar{\pi}_s), u\rangle$, $u = \pi$ and $x = \bar{\pi}_s$ and rearranging terms.

In contrast, it may not be possible to apply Lemma \ref{lem:3pd} to \eqref{eq:singleup}, as $\Pi(\Theta)$ is often non-convex.

\section{Discussion on Assumption \ref{hyp:act} -- the approximation error}
\label{app:rel_hyp}

The compatible function approximation approach \cite{agarwal2021theory,RN148,RN280,chen2022finite,alfano2022linear,yuan2023linear} has been introduced to deal with large state and action spaces, in order to reduce the dimension of the problem and make the computation feasible. As mentioned in the \emph{proof idea} in Page 8, this framework consists in upper-bounding the sub-optimality gap with an optimization error plus an approximation error. Consequently, it is important that both error terms converge to 0 in order to achieve convergence to a global optimum. \looseness = -1

Assumptions similar to Assumption \ref{hyp:act} are common in the compatible function approximation literature. Assumption \ref{hyp:act} encodes a form of realizability assumption for the parameterization class $\F^\Theta$, that is, we assume that for all $t\leq T$ there exists a function $f^\theta\in\F^\Theta$ such that 
\[\norm{f^\theta- Q^t-\eta_t^{-1}\nabla h(\pi^t)}_{L_2(v^t)}^2\leq \varepsilon_\mathrm{approx}.\]
When $\F^\Theta$ is a class of sufficiently large shallow neural networks, this realizability assumption holds as it has been shown that shallow neural networks are universal approximators \cite{ji2019neural}. It is, however, possible to relax Assumption \ref{hyp:act}. In particular, the condition
\begin{equation}
    \label{eq:rel_hyp}
    \frac{1}{T}\sum_{t<T}\sqrt{\Ex{\E\norm{f^{t+1}- Q^t-\eta_t^{-1}\nabla h(\pi^t)}_{L_2(v^t)}^2}}\leq \sqrt{\varepsilon_\mathrm{approx}}
\end{equation}
can replace Assumption \ref{hyp:act} and is sufficient for the sublinear convergence rate in \Cref{thm1} to hold. Equation \eqref{eq:rel_hyp} shows that the realizability assumption does not need to hold for all $t< T$, but only needs to hold on average over $T$ iterations. Similarly, the condition
\begin{equation}
    \label{eq:rel_hyp2}
    \sum_{t\leq T} \left(1-\frac{1}{\nu_\mu}\right)^{T-t}\frac{1}{\nu_\mu}\sqrt{\Ex{\E\norm{f^{t+1}- Q^t-\eta_t^{-1}\nabla h(\pi^t)}_{L_2(v^t)}^2}}\leq \sqrt{\varepsilon_\mathrm{approx}}
\end{equation}
can replace Assumption \ref{hyp:act} and is sufficient for the linear convergence rate in \Cref{thm1} to hold. Additionally, requiring, for all $t<T$,
\begin{equation}
    \label{eq:rel_hyp3}
    \Ex{\E\norm{f^{t+1}- Q^t-\eta_t^{-1}\nabla h(\pi^t)}_{L_2(v^t)}^2}\leq \frac{\nu_\mu^2}{T^2}\left(1-\frac{1}{ \nu_\mu}\right)^{-2(T-t)}\varepsilon_\mathrm{approx}
\end{equation}
is sufficient for Equation \eqref{eq:rel_hyp2} to hold. Equation \eqref{eq:rel_hyp3} shows that the error floor in the linear convergence rate is less influenced by approximation errors made in early iterations, which are discounted by the term $\left(1-\frac{1}{ \nu_\mu}\right)$. On the other hand, the realizability assumption becomes relevant once the algorithm approaches convergence, i.e., when $t\simeq T$ and $Q^t\simeq Q^\star$, as the discount term $\left(1-\frac{1}{ \nu_\mu}\right)$ is applied fewer times.

Finally, although Assumption \ref{hyp:act} holds for the softmax tabular policies and for the neural network parameterization, it remains an open question whether Assumption \ref{hyp:act} is necessary to achieve the global optimum convergence, especially when the representation power of $\mathcal{F}^\Theta$ cannot guarantee a small approximation error.

\section{Discussion on Assumption \ref{hyp:coeff} -- the concentrability coefficients}
\label{sec:rho}

In our convergence analysis, Assumptions \ref{hyp:coeff} and \ref{hyp:mism} involve the concentrability coefficient $C_v$ and the distribution mismatch coefficient $\nu_\mu$, which are potentially large. We give extensive discussions on them in this and the next section, respectively.

As discussed in \citet[Appendix H]{yuan2023linear}, the issue of having (potentially large) concentrability coefficient (Assumptions \ref{hyp:coeff}) is unavoidable in all the fast linear convergence analysis of approximate PMD due to the approximation error $\varepsilon_\mathrm{approx}$ of the $Q$-function \cite{RN150,zhan2021policy,RN270,cayci2022finite,xiao2022convergence,ChenMaguluri22aistats,alfano2022linear,yuan2023linear}. Indeed, in the fast linear convergence analysis of PMD, the concentrability coefficient is always along with the approximation error $\varepsilon_\mathrm{approx}$ under the form of $C_v\varepsilon_\mathrm{approx}$, which is the case in Theorem \ref{thm1}.
To not get the concentrability coefficient involved yet maintain the linear convergence of PMD, one needs to consider the exact PMD in the tabular setting \cite[see][Theorem 10]{xiao2022convergence}. Consequently, the PMD update is deterministic and the full policy space $\Delta(\A)^\S$ is considered. In this setting, at each time $t$, it exists $\theta^{t+1}$ such that, for any state-action distribution $v^t$,
\[\norm{f^{t+1}- Q^t-\eta_t^{-1}\nabla h(\pi^t)}_{L_2(v^t)}^2 = 0 = \varepsilon_\mathrm{approx},\]
and $C_v$ is ignored in the convergence analysis thanks to the vanishing of $\varepsilon_\mathrm{approx}$. We note that the PMD analysis in the seminal paper by \citet{agarwal2021theory} does not use such a coefficient, but a condition number instead. The condition number is controllable to be relatively small, so that the error term in their PMD analysis is smaller than ours. However, their PMD analysis has only a sublinear convergence rate, while ours enjoys a fast linear convergence rate. It remains an open question whether one can both avoid using the concentrability coefficient and maintain the linear convergence of PMD. \looseness = -1 %

Now we compare our concentrability coefficient $C_v$ with others used in the fast linear convergence analysis of approximate PMD \cite{RN150,zhan2021policy,RN270,RN280,xiao2022convergence,ChenMaguluri22aistats,alfano2022linear,yuan2023linear}. To the best of our knowledge, the previously best-known concentrability coefficient $C_v$ was the one used by \citet[Appendix H]{yuan2023linear}. As they discuss, their concentrability coefficient involved the weakest assumptions on errors among \citet{RN270}, \citet{xiao2022convergence} and \citet{ChenMaguluri22aistats} by using the $L_2$-norm instead of the $\ell_\infty$-norm over the approximation error $\varepsilon_\mathrm{approx}$. Additionally, it did not impose any restrictions on the MDP dynamics compared to \citet{RN280}, as the concentrability coefficient of \citet{yuan2023linear} was independent from the iterates.

Indeed, \citet{yuan2023linear} choose $v^t$ such that, for all $(s,a) \in \S \times \A$,
\[v^t(s,a) = (1-\gamma)\,\E_{(s_0,a_0) \sim \nu}\left[\sum_{t'=0}^\infty \gamma^{t'} P(s_{t'} = s, a_{t'} = a \mid \pi^t, s_0, a_0)\right],\]
where $\nu$ is an initial state-action distribution chosen by the user. In this setting, we have
\[v^t(s,a) \geq (1-\gamma)\nu(s,a).\]
From the above lower bound of $v^t$, we obtain that
\begin{align*}
\E_{(s,a)\sim v^t}\bigg[\left(\frac{d_\mu^\pi(s)\pi(a \mid s)}{v^t(s,a)}\right)^2\bigg] 
&= \int_{(s,a) \in \S \times \A} \frac{d_\mu^\pi(s)^2\pi(a \mid s)^2}{v^t(s,a)} \\ 
&\leq \int_{(s,a) \in \S \times \A} \frac{1}{v^t(s,a)} \leq \frac{1}{(1-\gamma)\min_{(s,a) \in \S \times \A}\nu(s,a)},
\end{align*}
where the finite upper bound is independent to $t$.

As mentioned right after Assumption \ref{hyp:coeff}, the assumption on our concentrability coefficient $C_v$ is weaker than the one in \citet[Assumption 9]{yuan2023linear}, as we have the full control over $v^t$ while \citet{yuan2023linear} only has the full control over the initial state-action distribution $\nu$. In particular, our concentrability coefficient $C_v$ recovers the previous best-known one in \citet{yuan2023linear} as a special case. Consequently, our concentrability coefficient $C_v$ becomes the ``best'' with the full control over $v^t$ when other concentrability coefficients are infinite or require strong assumptions \cite{scherrer2014approximate}.

In general, for the ratio $\E_{(s,a)\sim v^t}\bigg[\left(\frac{d_\mu^\pi(s)\pi(a \mid s)}{v^t(s,a)}\right)^2\bigg]$ to have a finite upper bound $C_v$, it is important that $v^t$ covers well the state and action spaces so that the upper bound is independent to $t$.
However, the upper bound $\frac{1}{(1-\gamma)\min_{(s,a) \in \S \times \A}\nu(s,a)}$ in \citet{yuan2023linear} is very pessimistic.
Indeed, when $\pi^t$ and $\pi^{t+1}$ converge to $\pi^\star$, one reasonable choice of $v^t$ is to choose $v^t \in \{d_\mu^\star\cdot\pi^\star, d_\mu^{t+1}\cdot\pi^{t+1}, d_\mu^\star\cdot\pi^t, d_\mu^{t+1}\cdot\pi^t\}$ such that $C_v$ is close to $1$. 

We also refer to \citet[Appendix H]{yuan2023linear} for more discussions on the concentrability coefficient.

\section{Discussion on Assumption \ref{hyp:mism} -- the distribution mismatch coefficients}
\label{sec:mism}

In this section, we give further insights on the distribution mismatch coefficient $\nu_\mu$ in Assumption \ref{hyp:mism}. 
As mentioned right after \ref{hyp:mism}, we have that
\[\sup_{s\in\S}\frac{d_\mu^\star(s)}{d_\mu^t(s)}\leq\frac{1}{1-\gamma}\sup_{s\in\S}\frac{d^\star_\mu(s)}{\mu(s)} := \nu_\mu',\]
which is a sufficient upper bound for $\nu_\mu$. As discussed in \citet[Appendix H]{yuan2023linear}, 
\[1/(1-\gamma) \leq \nu_\mu' \leq 1/((1-\gamma)\min_s\mu(s)).\]
The upper bound $1/((1-\gamma)\min_s\mu(s))$ of $\nu_\mu'$ is very pessimistic and the lower bound $\nu_\mu' = 1/(1-\gamma)$ is often achieved by choosing $\mu = d_\mu^\star$.

Furthermore, if $\mu$ does not have full support on the state space, i.e.,\ the upper bound $1/((1-\gamma)\min_s\mu(s))$ might be infinite, one can always convert the convergence guarantees for some state distribution $\mu' \in \Delta(\S)$ with full support such that
\begin{align*}
V^\star(\mu) - \E[V^T(\mu)] &= \E\left[\int_{s\in\S}\frac{\mu(s)}{\mu'(s)}\mu'(s)\left(V^\star(s) - V^T(s)\right)\right] \\ 
&\leq \sup_{s\in\S}\frac{\mu(s)}{\mu'(s)} \left(V^\star(\mu') - \E[V^T(\mu')]\right).
\end{align*}
Then by the linear convergence result of \Cref{thm1}, we only transfer the original convergence guarantee to $V^\star(\mu') - \E[V^T(\mu')]$ up to a scaling factor $\sup\limits_{s\in\S}\frac{\mu(s)}{\mu'(s)}$ with an arbitrary distribution $\mu'$ such that $\nu_\mu'$ is finite. 

Finally, if $d^t_\mu$ converges to $d^\star_\mu$ which is the case of AMPO through the proof of our Theorem \ref{thm1}, then $\sup_{s\in\S}\frac{d_\mu^\star(s)}{d_\mu^t(s)}$ converges to 1. This might imply superlinear convergence results as discussed in \citet[Section 4.3]{xiao2022convergence}. In this case, the notion of the distribution mismatch coefficients $\nu_\mu$ no longer exists for the superlinear convergence analysis. %

We also refer to \citet[Appendix H]{yuan2023linear} for more discussions on the distribution mismatch coefficient.

\section{Sample complexity for neural network parameterization}
\label{app:nn}
We prove here \Cref{thm:sample_nn} through a result by \citet[Theorem 1 and Example 3.1]{allen2019learning}. We first give a simplified version of this result and then we show how to use it to prove \Cref{thm:sample_nn}.

Consider learning some unknown distribution $\D$ of data points $z = (x, y) \in \R^d \times \Y$, where $x$ is the input point and $y$ is the label. Without loss of generality, assume $\norm{x}_2 = 1$ and $x_d = 1/2$. Consider a loss function $L : \R^k \times \Y\rightarrow\R$ such that for every $y \in \Y$, the function $L(\cdot, y)$ is non-negative, convex, $1$-Lipschitz continuous and $L(0, y) \in [0, 1]$. This includes
both the cross-entropy loss and the $L_2$-regression loss (for bounded $\Y$).

Let $g:\R\rightarrow\R$ be a smooth activation function such that $g(z) = e^z,~ \sin(z)$, $\mathtt{sigmoid}(z)$, $\tanh(z)$ or is a low degree polynomial. \looseness = -1

Define $F^\star:\R^d\rightarrow\R^k$ such that $OPT = \E_\D[L(F^\star(x),y)]$ is the smallest population error made by a neural network of the form $F^\star=A^\star g(W^\star x)$,
where $A^\star\in\R^{k\times p}$ and $W^\star\in\R^{p\times d}$. Assume for simplicity that the rows of $W^*$ have $\ell_2$-norm $1$ and each element of $A^*$ is less or equal than $1$. 

Define a ReLU neural network $F(x, W_0) = A_0\sigma(W_0x+b_0)$, where $A_0\in\R^{k\times m}$, $W_0\in\R^{m\times d}$, the entries of $W_0$ and $b_0$ are i.i.d.\ random Gaussians from $\mathcal{N}(0,1/m)$ and the entries of $A$ are i.i.d.\ random Gaussians from $\mathcal{N}(0,\varepsilon_A)$, for $\varepsilon_A\in(0,1]$. We train the weights $W$ of this neural network through stochastic gradient descent over a dataset with $N$ i.i.d.\ samples from $\D$, i.e.,\ we update $W_{t+1}= W_t-\eta g_t$, where $\E [g_t] = \nabla \E_\D [L(F(x, W_0+W_t), y)]$.
\begin{theorem}[Theorem 1 of \citet{allen2019learning}]
    \label{thm:allen}
    Let $\varepsilon\in\left(0, O(1/pk)\right)$, choose $\varepsilon_A = \varepsilon/\widetilde{\Theta}(1)$ for the initialization and learning rate $\eta = \widetilde{\Theta}\left(\frac{1}{\varepsilon k m}\right)$. SGD finds a set of parameters such that
    \[\frac{1}{J} \sum_{n=0}^{J-1} \E_{(x,y) \sim \D} \left[L\left(F(x; W^{(0)} + W_t), y\right)\right] \leq \text{OPT} + \varepsilon\]
    with probability $1 - e^{-c \log^2 m}$ over the random initialization, for a sufficiently large constant $c$, with
    \[\text{size } m = \frac{\poly(k,p)}{\poly(\varepsilon)}\text{ and sample complexity } \min\{N, J\} = \frac{\poly(k,p,\log m)}{\varepsilon^2}.\]
\end{theorem}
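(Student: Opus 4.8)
The plan is to reproduce the three-pillar strategy of \citet{allen2019learning}: an \emph{expressiveness} (existence) result showing the target function is matched by a small weight perturbation near initialization; a \emph{coupling} result showing the trained ReLU network stays close to its linearization in a small ball around initialization; and an \emph{online-to-batch} argument combining an SGD regret bound with a generalization bound. The central object is the pseudo-network $\widetilde{F}(x; W) := A_0 D_0 (W_0 + W)x$, where $D_0 = \mathrm{diag}(\I\{(W_0 x + b_0)_i > 0\})$ is the diagonal of activation indicators frozen at initialization (and depending on $x$). By construction $\widetilde{F}$ is linear in the trainable perturbation $W$ and agrees to first order with the true network $F(x; W_0 + W)$, which is what makes the otherwise non-convex problem tractable.

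First I would establish the key existence lemma: with high probability over the random initialization there is a perturbation $W^\dagger$ with small Frobenius norm $\norm{W^\dagger}_F = \widetilde{\cO}(\poly(p,k)/\sqrt{m})$ such that $\E_{(x,y)\sim\D}[L(\widetilde{F}(x; W^\dagger), y)] \leq \mathrm{OPT} + \varepsilon/2$. This is the heart of the argument and the place where one must go beyond the neural tangent kernel regime: a target of the form $A^\star g(W^\star x)$ with smooth $g$ is synthesized out of the frozen random ReLU features by expanding $g$ in its polynomial (Hermite) series and matching each term using the Gaussian randomness of $(W_0, b_0)$, with concentration yielding a single realization of $W^\dagger$ that approximates $F^\star$ to within $\varepsilon$. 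The norm bound on $W^\dagger$ must be tracked tightly, since it feeds directly into both the optimization and the generalization estimates.

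Second I would prove a semi-smoothness and gradient-coupling bound: for $\norm{W}_F$ below a radius $\tau$, the fraction of neurons whose activation indicator flips relative to $D_0$ is $\widetilde{\cO}(\tau)$, whence both $|F(x; W_0 + W) - \widetilde{F}(x; W)|$ and the corresponding gradient discrepancy are controlled by a polynomial in $\tau$ times the output scale $\sqrt{\varepsilon_A}$. This makes each per-sample loss $W \mapsto L(F(x; W_0 + W), y)$ approximately convex on the ball, with curvature error governed by $\tau$ and $\varepsilon_A$. Third, I would run SGD as online convex optimization: choosing $\varepsilon_A = \varepsilon/\widetilde{\Theta}(1)$ and $\eta = \widetilde{\Theta}(1/(\varepsilon k m))$, the standard regret bound against the comparator $W^\dagger$ gives
\[
\tfrac{1}{J}\sum_{t} \E_{\D}[L(F(x; W_0 + W_t), y)] \;\leq\; \E_{\D}[L(\widetilde{F}(x; W^\dagger), y)] + \tfrac{\norm{W^\dagger}_F^2}{2\eta J} + \eta\,G^2 + E_{\mathrm{couple}},
\]
where $G$ is the gradient bound and $E_{\mathrm{couple}}$ the coupling error from the previous step; taking $m = \poly(k,p)/\poly(\varepsilon)$ balances all terms below $\mathrm{OPT}+\varepsilon$. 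Finally, the empirical-to-population gap is closed by a Rademacher-complexity bound for networks confined to the radius-$\tau$ ball, which scales as $\widetilde{\cO}(\norm{W^\dagger}_F \sqrt{m}/\sqrt{N})$ and produces the stated sample complexity $\min\{N, J\} = \poly(k, p, \log m)/\varepsilon^2$; the $1 - e^{-c\log^2 m}$ probability is inherited from the concentration events of the first two steps.

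The main obstacle will be the existence lemma. Constructing $W^\dagger$ that synthesizes a genuinely nonlinear target $A^\star g(W^\star x)$ from frozen random ReLU features, while keeping $\norm{W^\dagger}_F$ small enough for the coupling radius and the generalization bound to hold, is delicate: the approximation accuracy, the perturbation norm, and the failure probability are coupled and must be controlled simultaneously as functions of $m$, $p$, $k$, and $\varepsilon$. The remaining steps are comparatively routine once the semi-smoothness estimate and the norm-controlled existence of $W^\dagger$ are in hand.
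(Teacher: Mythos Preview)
The paper does not prove this statement at all: \Cref{thm:allen} is explicitly labeled ``Theorem 1 of \citet{allen2019learning}'' and is invoked as a black box to derive \Cref{thm:sample_nn}. There is therefore no proof in the paper to compare your proposal against; the authors simply cite the result and use its conclusion.

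That said, your sketch is a faithful outline of the strategy in \citet{allen2019learning}: the existence-of-$W^\dagger$ lemma (synthesizing the smooth target out of random ReLU features via a Hermite/polynomial expansion), the sign-pattern coupling between $F$ and the pseudo-network $\widetilde{F}$ on a small Frobenius ball, the online-convex-optimization regret bound against the comparator $W^\dagger$, and the Rademacher generalization step. You are right that the existence lemma is the hard part and the place where the argument goes beyond the NTK regime. If the intent of the exercise was to reproduce the paper's proof, however, the correct answer is simply that the paper provides none and defers to \citet{allen2019learning}.
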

\Cref{thm:allen} shows that it is possible to achieve the population error OPT by training a two-layer ReLU network with SGD, and quantifies the number of samples needed to do so. %

We make the following assumption to address the population error in our setting.

\begin{assumption}
    \label{hyp:opt}
    Let $g:\mathbb{R}\rightarrow\mathbb{R}$ be a smooth activation function such that $g(z) = e^z,~ \sin(z)$, $\mathtt{sigmoid}(z)$, $\tanh(z)$ or is a low degree polynomial. For all time-steps $t$, we assume that there exists a target network $F^{\star,t}:\mathbb{R}^d\rightarrow\mathbb{R}^k$, with
    \[F^{\star,t} = (f^{\star,t}_1, \dots, f^{\star,t}_k)\quad \text{and} \quad f^{\star,t}_r(x)= \sum_{i=1}^p a^{\star,t}_{r,i}g(\langle w^{\star,t}_{1,i},x\rangle)\langle w^{\star,t}_{2,i},x\rangle\]
    where $w^{\star,t}_{1,i} \in \mathbb{R}^d$, $w^{\star,t}_{2,i} \in \mathbb{R}^d$, and $a^{\star,t}_{r,i} \in \mathbb{R}$, such that 
    \[\mathbb{E}\big[\left\lVert F^{\star,t}- Q^t-\eta_t^{-1}\nabla h(\pi^t) \right\rVert_{L_2(v^t)}^2\big] \leq OPT.\]
    We assume for simplicity $\|w^{\star,t}_{1,i}\|_2 = \|w^{\star,t}_{2,i}\|_2 = 1$ and $|a^{\star,t}_{r,i}| \leq 1$.
\end{assumption}

Assumptions similar to \Cref{hyp:opt} have already been made in the literature, such as the bias assumption in the compatible function approximation framework studied by~\cite{agarwal2021theory}. The term $OPT$ represents the minimum error incurred by a target network parameterized as $F^{\star,t}$ when solving the regression problem in Line~\ref{ln:actor} of Algorithm~\ref{alg}.

We are now ready to prove \Cref{thm:sample_nn}, which uses Algorithm~\ref{alg:sampler} to obtain an unbiased estimate of the current Q-function. We assume to be in the same setting as \Cref{thm:allen}
\begin{proof}[Proof of \Cref{thm:sample_nn}]
    We aim to find a policy $\pi^T$ such that
    \begin{equation}
        \label{eq:optim}
        V^\star(\mu)-\E\left[V^T(\mu)\right]\leq \varepsilon.
    \end{equation}
    Suppose the total number of iterations, that is policy optimization steps, in AMPO is $T$. We need the bound in Assumption \ref{hyp:act} to hold for all $T$ with probability $1 - e^{-c \log^2 m}$, which means that at each iteration the bound should hold with probability $1 - T^{-1}e^{-c \log^2 m}$. Through Algorithm \ref{alg:sampler}, the expected number of samples needed to obtain an unbiased estimate of the current Q-function is $(1-\gamma)^{-1}$. Therefore, using \Cref{thm:allen}, at each iteration of AMPO we need at most 
    \[\frac{\poly(k,p,\log m, \log T)}{\varepsilon^2_\mathrm{approx}(1-\gamma)}\]
    samples for SGD to find parameters that satisfy Assumption \ref{hyp:act} with probability $1 - T^{-1}e^{-c \log^2 m}$. To obtain \eqref{eq:optim}, we need
    \begin{equation}
        \label{eq:samp_condition}
        \frac{1}{1-\gamma}\bigg(1-\frac{1}{\nu_\mu}\bigg)^T\bigg(1+\frac{\D^\star_0}{\eta_0( \nu_\mu-1)}\bigg)\leq\frac{\varepsilon}{2}\quad\text{and} \quad \frac{2(1+\nu_\mu)\sqrt{C_v \varepsilon_\mathrm{approx}}}{1-\gamma}\leq\frac{\varepsilon}{2}.
    \end{equation}
    Solving for $T$ and $\varepsilon_\text{approx}$ and multiplying them together, we obtain the sample complexity of AMPO, that is 
    \[\widetilde{\mathcal{O}}\left(\frac{\poly(k,p,\log m)C_v^2 \nu_\mu^5}{\varepsilon^4(1-\gamma)^6}\right).\]

    Due to the statement of \Cref{thm:allen}, we cannot guarantee the approximation error incurred by the learner network to be smaller than $OPT$. Consequently, we have that
    \[\varepsilon\geq\frac{4(1+\nu_\mu)\sqrt{C_v OPT}}{1-\gamma}.\] 
    A similar bound can be applied to any proof that contains the bias assumption introduced by [1].
\end{proof}

\begin{algorithm}[t]
    \caption{Sampler for an unbiased estimate $\widehat{Q^t}(s, a)$ of  $Q^t(s, a)$}
    \label{alg:sampler}
    \KwIn{Initial state-action couple $(s_0,a_0)$, policy $\pi^t$, discount factor $\gamma \in [0, 1)$}
    Initialize $\widehat{Q^t}(s_0, a_0) = r(s_0, a_0)$, the time step $n = 0$.

    \While{True}{
  
        \With{
  
        Sample $s_{n+1} \sim P(\cdot \mid s_n, a_n)$
  
        Sample $a_{n+1} \sim \pi^t(\cdot|s_{n+1})$
  
        $\widehat{Q^t}(s_0, a_0) \leftarrow \widehat{Q^t}(s_0, a_0) + r(s_{n+1}, a_{n+1})$
  
        $n \leftarrow n+1$
        }
  
        \Otherwise{
        {\bf break}     \Comment \texttt{Accept $\widehat{Q}_{s_h, a_h}(\theta)$}
        }
    }
  
    \KwOut{$\widehat{Q^t}(s_0, a_0)$}
  \end{algorithm}

We can obtain an improvement over \Cref{thm:sample_nn} by using the relaxed assumptions in Appendix \ref{app:rel_hyp}, in particular using the condition in \eqref{eq:rel_hyp3}.

\begin{corollary}
    In the setting of \Cref{thm1}, replace Assumption \ref{hyp:act} with the condition 
    \begin{equation}
        \label{eq:rel_hyp4}
        \Ex{\E\norm{f^{t+1}- Q^t-\eta_t^{-1}\nabla h(\pi^t)}_{L_2(v^t)}^2}\leq \frac{\nu_\mu^2}{T^2}\left(1-\frac{1}{ \nu_\mu}\right)^{-2(T-t)}\varepsilon_\mathrm{approx},
    \end{equation}
    for all $t<T$. Let the parameterization class $\F^\Theta$ consist of sufficiently wide shallow ReLU neural networks. Using an exponentially increasing step-size and solving the minimization problem in Line~\ref{ln:actor} with SGD as in \eqref{sgd}, the number of samples required by AMPO to find an $\varepsilon$-optimal policy with high probability is $\widetilde{\Theta}(C_v^2 \nu_\mu^4/\varepsilon^4(1-\gamma)^6)$.
\end{corollary}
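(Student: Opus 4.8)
The plan is to mirror the proof of Corollary~\ref{thm:sample_nn}, but to assign an \emph{iteration-dependent} target accuracy to the regression step in Line~\ref{ln:actor}, exploiting that, as shown in Appendix~\ref{app:rel_hyp}, the hypothesis~\eqref{eq:rel_hyp4} implies~\eqref{eq:rel_hyp2}, which already suffices for the linear-convergence conclusion of Theorem~\ref{thm1}. Concretely, I would fix a geometrically increasing step-size satisfying~\eqref{eq:step}, invoke the linear rate of Theorem~\ref{thm1}, and split the target $\varepsilon$ in two: choose the number of policy updates $T=\Theta\!\big(\nu_\mu\log\tfrac{1}{\varepsilon(1-\gamma)}\big)=\widetilde{\Theta}(\nu_\mu)$ so that the optimization term $\tfrac{1}{1-\gamma}\big(1-\tfrac1{\nu_\mu}\big)^T\big(1+\tfrac{\D^\star_0}{\eta_0(\nu_\mu-1)}\big)\le\tfrac\varepsilon2$, and choose $\varepsilon_\mathrm{approx}=\Theta\!\big(\tfrac{\varepsilon^2(1-\gamma)^2}{\nu_\mu^2 C_v}\big)$ so that the error-floor term $\tfrac{2(1+\nu_\mu)\sqrt{C_v\varepsilon_\mathrm{approx}}}{1-\gamma}\le\tfrac\varepsilon2$; together these give $V^\star(\mu)-\E[V^T(\mu)]\le\varepsilon$ as soon as~\eqref{eq:rel_hyp4} holds for every $t<T$.

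To realize~\eqref{eq:rel_hyp4}, at each iteration $t<T$ I would set $v^t=d^t_\mu$, draw unbiased estimates $\widehat Q^t(s,a)$ through Algorithm~\ref{alg:sampler} (each costing $(1-\gamma)^{-1}$ environment interactions in expectation), and run the SGD update~\eqref{sgd} on the objective of Line~\ref{ln:actor}. Under Assumption~\ref{hyp:opt}, Theorem~\ref{thm:allen} then guarantees that SGD drives the averaged population loss to $OPT+\delta_t$ using $\widetilde{\cO}\!\big(\poly(k,p)\,\delta_t^{-2}\big)$ samples, i.e.\ $\widetilde{\cO}\!\big(\poly(k,p)\,\delta_t^{-2}(1-\gamma)^{-1}\big)$ environment interactions, where I set the per-iteration squared-$L_2$ tolerance to $\delta_t:=\tfrac{\nu_\mu^2}{T^2}\big(1-\tfrac1{\nu_\mu}\big)^{-2(T-t)}\varepsilon_\mathrm{approx}$, exactly the right-hand side of~\eqref{eq:rel_hyp4}; a union bound over $t\in\{0,\dots,T-1\}$ absorbs a $\log T$ factor into the polynomial and makes the whole run succeed with high probability. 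Two points need care here: for small $t$ the tolerance $\delta_t$ exceeds both the admissible range of Theorem~\ref{thm:allen} and the a priori bound on the regression loss, so those iterations are satisfied with $O(1)$ samples; and $OPT$ cannot be removed from the guarantee, which forces $\varepsilon$ to stay above an error floor of order $\tfrac{(1+\nu_\mu)\sqrt{C_v\,OPT}}{1-\gamma}$, as stated in the corollary.

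Finally I would sum the per-iteration costs: with $\rho=(1-\tfrac1{\nu_\mu})^4$, $\sum_{t<T}\delta_t^{-2}=\tfrac{T^4}{\nu_\mu^4\varepsilon_\mathrm{approx}^2}\sum_{j=1}^{T}\rho^j$, and the geometric sum is controlled by $\sum_{j\ge1}\rho^j=\tfrac{\rho}{1-\rho}=\Theta(\nu_\mu)$ since $1-(1-\tfrac1{\nu_\mu})^4=\Theta(\tfrac1{\nu_\mu})$. Substituting $T=\widetilde\Theta(\nu_\mu)$ and $\varepsilon_\mathrm{approx}=\Theta\!\big(\tfrac{\varepsilon^2(1-\gamma)^2}{\nu_\mu^2 C_v}\big)$ and multiplying by the $\poly(k,p,\log m,\log T)/(1-\gamma)$ per-sample overhead gives the claimed $\widetilde{\Theta}\!\big(C_v^2\nu_\mu^4/\varepsilon^4(1-\gamma)^6\big)$ sample complexity, the saving over Corollary~\ref{thm:sample_nn} coming from the fact that the relaxed hypothesis lets the cheap early iterations carry geometrically less of the sampling burden than the uniform-accuracy requirement of Assumption~\ref{hyp:act}. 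I expect the main obstacle to be precisely this bookkeeping — verifying that the geometrically discounted tolerances $\{\delta_t\}$ are simultaneously (i) loose enough to keep the early iterations at $O(1)$ samples, (ii) aggregated by~\eqref{eq:rel_hyp2} into a genuine $\varepsilon_\mathrm{approx}$-level guarantee for Theorem~\ref{thm1}, and (iii) tight enough near $t=T$ that $\sum_{t<T}\delta_t^{-2}$ still collapses to the stated order after the substitution — while the remaining ingredients (Theorem~\ref{thm:allen}, Algorithm~\ref{alg:sampler}, the union bound, the $\varepsilon/2$ split) are routine adaptations of the proof of Corollary~\ref{thm:sample_nn}.
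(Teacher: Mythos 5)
Your plan follows the same architecture as the paper's proof: apply Theorem~\ref{thm:allen} iteration by iteration with the iteration-dependent tolerance equal to the right-hand side of~\eqref{eq:rel_hyp4}, take a union bound over the $T$ iterations, sum the per-iteration sample costs as a geometric series, and substitute $T=\widetilde\Theta(\nu_\mu)$ and $\varepsilon_\mathrm{approx}=\Theta\big(\varepsilon^2(1-\gamma)^2/(\nu_\mu^2C_v)\big)$ from~\eqref{eq:samp_condition}; the $\varepsilon/2$ split, Algorithm~\ref{alg:sampler}, and the $OPT$-induced error floor are all as in Corollary~\ref{thm:sample_nn}. The genuine gap is quantitative and sits exactly at the step you flagged as bookkeeping: with the charging you stipulate, a per-iteration cost proportional to $\delta_t^{-2}$ with $\delta_t=\frac{\nu_\mu^2}{T^2}\big(1-\frac{1}{\nu_\mu}\big)^{-2(T-t)}\varepsilon_\mathrm{approx}$, your own sum is $\sum_{t<T}\delta_t^{-2}=\frac{T^4}{\nu_\mu^4\varepsilon_\mathrm{approx}^2}\cdot\Theta(\nu_\mu)=\widetilde\Theta\big(\nu_\mu\,\varepsilon_\mathrm{approx}^{-2}\big)$ after inserting $T=\widetilde\Theta(\nu_\mu)$, which after substituting $\varepsilon_\mathrm{approx}$ and the $(1-\gamma)^{-1}$ sampler overhead gives $\widetilde\Theta\big(C_v^2\nu_\mu^{5}/\varepsilon^4(1-\gamma)^5\big)$ --- the same $\nu_\mu$ exponent as Corollary~\ref{thm:sample_nn}. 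Under $\delta_t^{-2}$ charging, the geometric discount merely replaces the factor $T=\widetilde\Theta(\nu_\mu)$ of the uniform-accuracy proof by a geometric sum of the same order $\Theta(\nu_\mu)$, so the relaxation buys only logarithmic factors; the final sentence of your argument asserts a cancellation (down to $\nu_\mu^4$) that your accounting does not produce.

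The paper's proof reaches the stated bound by a different charging: it allots iteration $t$ at most $\frac{T^2}{\nu_\mu^2}\big(1-\frac{1}{\nu_\mu}\big)^{2(T-t)}\frac{\poly(k,p,\log m,\log T)}{\varepsilon_\mathrm{approx}^2(1-\gamma)}$ samples, i.e., the relaxation prefactor $\frac{T^2}{\nu_\mu^2}\big(1-\frac{1}{\nu_\mu}\big)^{2(T-t)}$ enters \emph{linearly} rather than squared; summing the geometric series then leaves a bound of order $\frac{T^2}{\nu_\mu^2}\cdot\frac{\poly}{\varepsilon_\mathrm{approx}^2(1-\gamma)}$, and substituting~\eqref{eq:samp_condition} yields the claimed $\widetilde\Theta\big(C_v^2\nu_\mu^{4}/\varepsilon^4(1-\gamma)^6\big)$. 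To complete your proposal you would either need to justify this linear-in-$\delta_t^{-1}$ per-iteration cost from Theorem~\ref{thm:allen} (which is not how you, or the proof of Corollary~\ref{thm:sample_nn}, invoke it --- there the sample count scales as the inverse square of the squared-$L_2$ tolerance), or settle for the weaker $\nu_\mu^5$ conclusion; as written, the derivation does not establish the corollary's $\nu_\mu^4$ rate.
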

\begin{proof}
    The proof follow that of \Cref{thm:sample_nn}. Using \Cref{thm:allen}, at each iteration $t$ of AMPO, we need at most 
    \[\frac{T^2}{\nu_\mu^2}\left(1-\frac{1}{ \nu_\mu}\right)^{2(T-t)}\frac{\poly(k,p,\log m, \log T)}{\varepsilon^2_\mathrm{approx}(1-\gamma)}\]
    samples for SGD to find parameters that satisfy condition \eqref{eq:rel_hyp4} with probability $1 - T^{-1}e^{-c \log^2 m}$. Summing over $T$ total iterations of AMPO we obtain that the total number of samples needed is
    \begin{align*}
        \sum_{t\leq T} &\frac{T^2}{\nu_\mu^2}\left(1-\frac{1}{ \nu_\mu}\right)^{2(T-t)}\frac{\poly(k,p,\log m, \log T)}{\varepsilon^2_\mathrm{approx}(1-\gamma)}\\
        &=\frac{T^2}{\nu_\mu^2}\left(1-\frac{1}{ \nu_\mu}\right)^{2T}\frac{\poly(k,p,\log m, \log T)}{\varepsilon^2_\mathrm{approx}(1-\gamma)} \sum_{t\leq T} \left(1-\frac{1}{ \nu_\mu}\right)^{-2t}\\
        &=\frac{T^2}{\nu_\mu^2}\left(1-\frac{1}{ \nu_\mu}\right)^{2T}\frac{\poly(k,p,\log m, \log T)}{\varepsilon^2_\mathrm{approx}(1-\gamma)} \frac{\left(\left(1-\frac{1}{ \nu_\mu}\right)^{-2(T+1)}-1\right)}{\left(\left(\frac{1}{ 1- \nu_\mu}\right)^{-2}-1\right)}\\
        &\leq \cO\left(\frac{T^2}{\nu_\mu^2}\frac{\poly(k,p,\log m, \log T)}{\varepsilon^2_\mathrm{approx}(1-\gamma)}\right)
    \end{align*}
    Replacing $T$ and $\varepsilon_\text{approx}$ with the solutions of \eqref{eq:samp_condition} gives the result.
\end{proof}

At this stage, it is important to note that choosing a method different from the one proposed by \citet{allen-zhu2019convergence} to solve Line \ref{ln:actor} in Algorithm \ref{alg} of our paper with neural networks can lead to alternative, and possibly better, sample complexity results for AMPO. For example, we can obtain a sample complexity result for AMPO that does not involve a target network using results from \cite{ji2019neural} and \cite{cayci2022finite}, although this requires introducing more notation and background results compared to \Cref{thm:sample_nn} (since in \cite{cayci2022finite} they employ a temporal-difference-based algorithm, that is Algorithm 3 in their work, to obtain a neural network estimate $\widehat{Q}^t$ of $Q^t$, while in \cite{ji2019neural} they provide a method based on Fourier transforms to approximate a target function through shallow ReLU networks). We outline below the steps in order to do so (and additional details including the precise statements of the results we use and how we use them are provided thereafter for the sake of completeness).

{\bf Step 1)} We first split the approximation error in Assumption \ref{hyp:act} into a critic error $\mathbb{E}[\sqrt{\|\widehat{Q}^t- Q^t\|_{L_2(v^t)}^2}] \leq \varepsilon_\text{critic}$ and an actor error $\mathbb{E}[\sqrt{\|f^{t+1}- \widehat{Q}^t-\eta_t^{-1}\nabla h(\pi^t)\|_{L_2(v^t)}^2}] \leq \varepsilon_\text{actor}$. In this case, the linear convergence rate in our Theorem 4.3 becomes
\[V^\star(\mu)-\mathbb{E}\left[V^T(\mu)\right]\leq\frac{1}{1-\gamma}\bigg(1-\frac{1}{\nu_\mu}\bigg)^T\bigg(1+\frac{\mathcal{D}^\star_0}{\eta_0( \nu_\mu-1)}\bigg)+\frac{2(1+\nu_\mu)\sqrt{C_v} (\varepsilon_\text{critic}+\varepsilon_\text{actor})}{1-\gamma}.\]
[We can obtain this alternative statement by modifying the passages in Appendix \ref{app:bounding_errors}. In particular, writing $f^{t+1}- Q^t-\eta_t^{-1}\nabla h(\pi^t) = (f^{t+1}- \widehat{Q}^t-\eta_t^{-1}\nabla h(\pi^t)) + (Q^t-\widehat{Q}^t)$ and bounding the two terms with the same procedure in Appendix \ref{app:bounding_errors} leads to this alternative expression for the error.]

We will next deal with the critic error and actor error separately.

{\bf Step 2)} Critic error. Under a realizability assumption that we provide below along with the statement of the theorem (Assumption 2 in \cite{cayci2022finite}), Theorem 1 from \cite{cayci2022finite} gives that the sample complexity required to obtain $\mathbb{E}[\sqrt{\|\widehat{Q}^t- Q^t\|_{L_2(d^t_\mu\cdot\pi^t)}^2}] \leq \varepsilon$ is $\widetilde{O}(\varepsilon^{-4}(1-\gamma)^{-2})$, while the required network width is $\widetilde{O}(\varepsilon^{-2})$.

{\bf Step 3)} Actor error. Using Theorem E.1 from \cite{ji2019neural}, we obtain that $\mathbb{E}[\sqrt{\|f^{t+1}- \widehat{Q}^t-\eta_t^{-1}\nabla h(\pi^t)\|_{L_2(v^t)}^2}]$ can be made arbitrarily small by tuning the width of $f^{t+1}$, without using further samples. 

{\bf Step 4)} Replacing Equation \eqref{eq:samp_condition} with the sample complexity of the critic, we obtain the following corollary on the sample complexity of AMPO, which does not depend on the error made by a target network.

\begin{corollary}
    \label{cor:samp_alternative}
    In the setting of \Cref{thm1}, let the parameterization class $\F^\Theta$ consist of sufficiently wide shallow ReLU neural networks. Using an exponentially increasing step-size and using the techniques above to update $f^\theta$, the number of samples required by AMPO to find an $\varepsilon$-optimal policy with high probability is $\widetilde{\cO}(C_v^2 \nu_\mu^5/\varepsilon^4(1-\gamma)^7)$.
\end{corollary}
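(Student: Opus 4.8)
\textbf{Proof proposal for Corollary~\ref{cor:samp_alternative}.}
The plan is to follow the four-step outline given just above the corollary statement, filling in the accounting that ties together Theorem~\ref{thm1}, the critic bound from~\cite{cayci2022finite}, and the actor bound from~\cite{ji2019neural}. First I would re-run the error-bounding argument of Appendix~\ref{app:bounding_errors} but with the decomposition $f^{t+1}-Q^t-\eta_t^{-1}\nabla h(\pi^t) = (f^{t+1}-\widehat{Q}^t-\eta_t^{-1}\nabla h(\pi^t)) + (Q^t-\widehat{Q}^t)$. Applying the Cauchy--Schwarz step of \eqref{eq:error} to each summand separately, together with Assumption~\ref{hyp:coeff}, yields the modified linear-convergence bound
\[
V^\star(\mu)-\E\left[V^T(\mu)\right]\leq\frac{1}{1-\gamma}\bigg(1-\frac{1}{\nu_\mu}\bigg)^T\bigg(1+\frac{\D^\star_0}{\eta_0(\nu_\mu-1)}\bigg)+\frac{2(1+\nu_\mu)\sqrt{C_v}\,(\varepsilon_\mathrm{critic}+\varepsilon_\mathrm{actor})}{1-\gamma},
\]
where $\varepsilon_\mathrm{critic}$ and $\varepsilon_\mathrm{actor}$ are the $L_2(v^t)$-norm bounds on $\widehat{Q}^t-Q^t$ and on $f^{t+1}-\widehat{Q}^t-\eta_t^{-1}\nabla h(\pi^t)$, respectively, holding uniformly over $t<T$ (with the expectation taken over the algorithm's randomness).

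Next I would handle the two error sources. For the actor error, invoking the universal-approximation result of~\cite[Theorem~E.1]{ji2019neural} for shallow ReLU networks: since $\widehat{Q}^t+\eta_t^{-1}\nabla h(\pi^t)$ is a fixed target function at iteration $t$, widening the network $f^\theta$ drives $\varepsilon_\mathrm{actor}$ below any prescribed threshold without consuming further samples, so this contributes only to the required width, not the sample count. For the critic error, I would set $v^t = d^t_\mu\cdot\pi^t$ and apply~\cite[Theorem~1]{cayci2022finite} under their realizability assumption: to obtain $\E[\sqrt{\|\widehat{Q}^t-Q^t\|_{L_2(d^t_\mu\cdot\pi^t)}^2}]\leq \varepsilon_\mathrm{critic}$ one needs $\widetilde{\cO}(\varepsilon_\mathrm{critic}^{-4}(1-\gamma)^{-2})$ samples per iteration and width $\widetilde{\cO}(\varepsilon_\mathrm{critic}^{-2})$. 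As in the proof of Corollary~\ref{thm:sample_nn}, I would boost the per-iteration success probability to $1-T^{-1}e^{-c\log^2 m}$ by a union bound, which only inflates the sample count by $\polylog$ factors.

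Finally I would do the bookkeeping: set the first term above equal to $\varepsilon/2$ to solve for $T = \widetilde{\Theta}(\nu_\mu\log(1/\varepsilon(1-\gamma)))$, and set $\frac{2(1+\nu_\mu)\sqrt{C_v}\,(\varepsilon_\mathrm{critic}+\varepsilon_\mathrm{actor})}{1-\gamma}\leq \varepsilon/2$, which requires $\varepsilon_\mathrm{critic}+\varepsilon_\mathrm{actor} = \widetilde{\Theta}\big(\varepsilon(1-\gamma)/(\nu_\mu\sqrt{C_v})\big)$; since $\varepsilon_\mathrm{actor}$ is free of samples, this pins down $\varepsilon_\mathrm{critic} = \widetilde{\Theta}\big(\varepsilon(1-\gamma)/(\nu_\mu\sqrt{C_v})\big)$. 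Multiplying the per-iteration critic sample cost $\widetilde{\cO}(\varepsilon_\mathrm{critic}^{-4}(1-\gamma)^{-2})$ by $T$ gives total sample complexity $\widetilde{\cO}\big(C_v^2\nu_\mu^5/\varepsilon^4(1-\gamma)^7\big)$, where the extra factor of $(1-\gamma)^{-1}$ relative to Corollary~\ref{thm:sample_nn} comes from the $(1-\gamma)^{-2}$ in~\cite[Theorem~1]{cayci2022finite} (versus $(1-\gamma)^{-1}$ for the naive Q-sampler of Algorithm~\ref{alg:sampler}). The main obstacle I anticipate is not any single estimate but the careful matching of hypotheses: verifying that the realizability assumption needed for~\cite[Theorem~1]{cayci2022finite} is compatible with the realizability condition implicitly required for the actor step via~\cite{ji2019neural}, and making sure the $L_2$-norm over $v^t = d^t_\mu\cdot\pi^t$ used for the critic aligns with the $L_2(v^t)$-norm appearing in Assumption~\ref{hyp:act} and in the concentrability coefficient $C_v$ of Assumption~\ref{hyp:coeff}.
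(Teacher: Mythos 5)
Your proposal is correct and follows essentially the same route as the paper, which proves this corollary exactly by the four-step outline you cite: splitting the error of Assumption \ref{hyp:act} into critic and actor parts, bounding the critic per-iteration cost via \citet[Theorem 1]{cayci2022finite}, making the actor error negligible (width only, no samples) via \citet[Theorem E.1]{ji2019neural}, and then redoing the bookkeeping of Corollary \ref{thm:sample_nn} with the critic's $\widetilde{\cO}(\varepsilon_\mathrm{critic}^{-4}(1-\gamma)^{-2})$ in place of \eqref{eq:samp_condition}. Your accounting of the final rate, including attributing the extra $(1-\gamma)^{-1}$ relative to Corollary \ref{thm:sample_nn} to the critic's $(1-\gamma)^{-2}$ versus the sampler's $(1-\gamma)^{-1}$, matches the paper's (sketch-level) derivation.
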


To the best of our knowledge, this result improves upon the previous best result on the sample complexity of a PG method with neural network parameterization \citep{cayci2022finite}, i.e., $\widetilde{\mathcal{O}}(C_v^2/\varepsilon^6(1-\gamma)^9)$.

We now provide the statements of the aforementioned results we used.

{\bf Recalling Theorem 1 in \cite{cayci2022finite} and its assumptions. }
Consider the following space of mappings:
\[
\mathcal{H}_{{\bar{\nu}}} = \{v:\mathbb{R}^d\rightarrow\mathbb{R}^d: \sup_{w\in\mathbb{R}^d}\|v(w)\|_2 \leq {\bar{\nu}}\},
\]
and the function class:
\[
\mathcal{F}_{{\bar{\nu}}} = \Big\{g(\cdot) = \mathbb{E}_{w_0\sim\mathcal{N}(0,I_d)}[\langle v(w_0), \cdot \rangle \mathbb{I}\{\langle w_0, \cdot \rangle > 0\}]: v\in\mathcal{H}_{{\bar{\nu}}}\Big\}.
\]

Consider the following realizability assumption for the Q-function.

\begin{assumption}[Assumption 2 in \cite{cayci2022finite}]
    For any $t\geq 0$, we assume that $Q^t \in \mathcal{F}_{{\bar{\nu}}}$ for some ${\bar{\nu}} > 0$.
\end{assumption}

\begin{theorem}[Theorem 1 in \cite{cayci2022finite}]
    Under Assumption 2 in \cite{cayci2022finite}, for any error probability $\delta \in (0, 1)$, let 
    \[\ell(m',\delta) = 4\sqrt{\log(2m'+1)}+4\sqrt{\log(T/\delta)},\]
    and $R > {\bar{\nu}}$. Then, for any target error $\varepsilon > 0$, number of iterations $T' \in \mathbb{N}$, network width 
    \[m' > \frac{16\Big({\bar{\nu}} + \big(R+\ell(m',\delta)\big)\big({\bar{\nu}}+R\big)\Big)^2}{(1-\gamma)^2\varepsilon^2},\]
    and step-size
    \[\alpha_C = \frac{\varepsilon^2(1-\gamma)}{(1+2R)^2},\]
    Algorithm 3 in \cite{cayci2022finite} yields the following bound:
    \[\mathbb{E}\Big[\sqrt{\|\widehat{Q}^t- Q^t\|_{L_2(d^t_\mu\cdot\pi^t)}^2}\mathbb{I}_{A_2}\Big] \leq \frac{(1+2R){\bar{\nu}}}{\varepsilon(1-\gamma)\sqrt{T'}} + 3\varepsilon,\]
    where $A_2$ holds with probability at least $1-\delta$ over the random initializations of the critic network $\widehat{Q}^t$.
\end{theorem}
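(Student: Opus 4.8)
The plan is to prove this as a finite-time analysis of neural temporal-difference (TD) learning in the lazy-training (neural tangent kernel) regime, in which the two-layer ReLU critic $\widehat{Q}^t$ stays close to its linearization around the random initialization throughout the $T'$ inner updates of Algorithm~3. First I would set up the projected semi-gradient TD recursion for the inner weights $W_n$, constrained to the ball $\mathcal{B}(R)$ of radius $R$, and introduce the local feature map $\phi_0(x) = \nabla_W \widehat{Q}(x; W_0)$ induced at initialization. The event $A_2$ would be defined as the intersection of the high-probability initialization events on which (i) the random features $\phi_0$ are well-conditioned and (ii) the fraction of neurons whose ReLU activation pattern flips anywhere inside $\mathcal{B}(R)$ is controlled; a Gaussian anti-concentration and concentration argument shows $A_2$ holds with probability at least $1-\delta$. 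The quantity $\ell(m',\delta)$ is exactly the uniform deviation bound that enters through a union bound over the iterations (the $\sqrt{\log(T/\delta)}$ term, via a martingale concentration inequality) and over the $m'$ neurons (the $\sqrt{\log(2m'+1)}$ term).

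On $A_2$, the next step is to control the \emph{linearization error}: for every $W$ with $\|W - W_0\| \le R$ I would bound both $\|\nabla_W \widehat{Q}(\cdot; W) - \phi_0\|$ and $|\widehat{Q}(\cdot; W) - \langle \phi_0, W - W_0\rangle|$ by a quantity vanishing as the width grows, which is precisely why the width threshold scales like $(R+\ell(m',\delta))^2/((1-\gamma)^2\varepsilon^2)$. This reduces the genuinely nonlinear TD iteration to a \emph{linear} TD iteration in the fixed feature space spanned by $\phi_0$, up to an additive $O(\varepsilon)$ bias. The crucial structural fact I would then invoke is that the population projected Bellman operator is a $\gamma$-contraction in the $L_2(d^t_\mu\cdot\pi^t)$ norm; this simultaneously guarantees a unique fixed point $W^\star \in \mathcal{B}(R)$ of the linearized dynamics and produces the $(1-\gamma)$ factor in the denominator of the final rate. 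The realizability assumption enters here: since $Q^t \in \mathcal{F}_{\bar{\nu}}$ and $R > \bar{\nu}$, the infinite-width representation of $Q^t$ is approximable by a finite-width linear combination of the features $\phi_0$ with coefficient norm at most $\bar{\nu}$, incurring a random-feature error that is again $O(\varepsilon)$ above the width threshold.

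With $W^\star$ identified, I would run the standard stochastic-approximation potential argument on $\|W_n - W^\star\|^2$. Using non-expansiveness of the projection onto $\mathcal{B}(R)$ to handle the constraint and a bound on the semi-gradient norm by a universal constant times $1+2R$, I would establish the one-step drift inequality $\mathbb{E}\|W_{n+1}-W^\star\|^2 \le \mathbb{E}\|W_n - W^\star\|^2 - 2\alpha_C(1-\gamma)\,\mathbb{E}\|\widehat{Q}_n - Q^t\|^2_{L_2(d^t_\mu\cdot\pi^t)} + \alpha_C^2 (1+2R)^2$, where the negative drift coefficient $(1-\gamma)$ is inherited from the Bellman contraction. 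Telescoping over $n=0,\dots,T'-1$, dividing by $2\alpha_C(1-\gamma)T'$, and substituting the prescribed step-size $\alpha_C = \varepsilon^2(1-\gamma)/(1+2R)^2$ bounds the averaged squared error; Jensen's inequality (moving the expectation through the square root) then yields the stated $\mathbb{E}[\sqrt{\|\widehat{Q}^t - Q^t\|^2_{L_2}}\,\mathbb{I}_{A_2}]$ form, with the optimization term collapsing to $(1+2R)\bar{\nu}/(\varepsilon(1-\gamma)\sqrt{T'})$ and the three accumulated $O(\varepsilon)$ contributions (linearization bias, random-feature approximation, and step-size-induced variance) combining into the $3\varepsilon$ term.

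The main obstacle is the \emph{semi-gradient} nature of TD: the direction $\delta_n \nabla_W \widehat{Q}(x_n; W_n)$ is not the gradient of any fixed loss, so the convex-optimization machinery used for the actor step is unavailable, and the drift inequality above is valid only because of the Bellman contraction rather than descent. The delicate coupling at the heart of the proof is showing that the $O(\varepsilon)$ linearization bias does not overwhelm the $(1-\gamma)$ negative drift when passing from the idealized linear dynamics to the true nonlinear iterates; it is precisely this requirement that forces the simultaneous scaling of the width $m'$, the step-size $\alpha_C$, and the error target $\varepsilon$, and that makes $\ell(m',\delta)$ appear inside the width threshold.
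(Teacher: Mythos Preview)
The paper does not prove this statement at all: it is quoted verbatim from \cite{cayci2022finite} (under the heading ``Recalling Theorem 1 in \cite{cayci2022finite} and its assumptions'') and is used as a black-box input to the sample-complexity derivation in Appendix~\ref{app:nn}. There is therefore no ``paper's own proof'' to compare your proposal against.

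That said, your sketch is a faithful outline of how neural TD results of this type are established in the lazy-training regime, and in particular it tracks the structure of the argument in \cite{cayci2022finite}: define a high-probability initialization event on which the ReLU activation patterns are stable inside the projection ball, linearize around $W_0$ to obtain random features $\phi_0$, use the realizability assumption $Q^t\in\mathcal{F}_{\bar\nu}$ with $R>\bar\nu$ to guarantee a feasible target $W^\star$, run the one-step drift/potential argument on $\|W_n-W^\star\|^2$ exploiting the $(1-\gamma)$-contraction of the projected Bellman operator, telescope, and absorb the linearization, random-feature, and variance errors into the $3\varepsilon$ term. The appearance of $\ell(m',\delta)$ inside the width threshold and of $(1+2R)$ in the rate is explained correctly. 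Since the present paper treats the theorem as an external citation, your reconstruction is more than what the paper itself provides; if you wish to verify the constants and the precise decomposition of the $3\varepsilon$ term you should consult \cite{cayci2022finite} directly.
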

As indicated in \cite{cayci2022finite}, a consequence of this result is that in order to achieve a target error less than $\varepsilon > 0$, a network width of $m' = \widetilde{O}\Big(\frac{{\bar{\nu}}^4}{\varepsilon^2}\Big)$ and iteration complexity $O\Big(\frac{(1+2{\bar{\nu}})^2{\bar{\nu}}^2}{(1-\gamma)^2\varepsilon^4}\Big)$ suffice.

The statement of Theorem 1 in \cite{cayci2022finite} can be readily applied to obtain the sample complexity of the critic.

{\bf Recalling Theorem E.1 in \cite{ji2019neural} and its assumptions} Let $g:\mathbb{R}^n\rightarrow \mathbb{R}$ be given and define the modulus of continuity $\omega_g$ as
\[
\omega_g(\delta):=\sup_{x,x'\in\mathbb{R}^n}\{g(x)-g(x'):\max(\|x\|_2,\|x'\|_2)\leq 1+\delta,\|x-x'\|_2\leq\delta \}.
\]
If $g$ is continuous, then $\omega_g$ is not only finite for all inputs, but moreover $\lim_{{\delta \to 0}} \omega_g (\delta) \to 0$.

Denote $\|p\|_{L_1} = \int|p(w)|dw$. Define a sample from a signed density $p:\mathbb{R}^{n+1}\rightarrow\mathbb{R}$ with $\|p\|_{L_1} < \infty$ as $(w,b,s)$, where $(w,b)\in\mathbb{R}$ is sampled from the probability density $|p|/\|p\|_{L_1}$ and $s = sign(p(w,b))$

\begin{theorem}[Theorem E.1 in \cite{ji2019neural}]
    Let $g:\mathbb{R}^n\rightarrow \mathbb{R}$, $\delta > 0$ and $\omega_g(\delta)$ be as above and define for $x\in\mathbb{R}^n$
    \[M := \sup_{\|x\|\leq 1+\delta} |g(x)|, \qquad g_{|\delta}(x)= f(x)\mathbb{I}[\|x\|\leq 1+\delta], \qquad\alpha := \frac{\delta}{\sqrt{\delta}+\sqrt{2\log(2M/\omega_g(\delta))}}.\]
    Let $G_\alpha$ be a gaussian distribution on $\mathbb{R}^n$ with mean $0$ and variance $\alpha^2 \mathcal{I}$. Define the Gaussian convolution $l = g_{|\delta} * G_\alpha$ with Fourier transform $\widehat{l}$ satisfying radial decomposition $\widehat{l}(w)= |\widehat{l}(w)|\exp(2\pi i \theta_h(w))$. Let $P$ be a probability distribution supported on $\|x\|\leq 1$. Additionally define

    \begin{align*}
        c &:= g(0)g(0) \int|\widehat{l}(w)|\big[\cos(2\pi(\theta_l(w)-\|w\|_2))-2\pi\|w\|_2\sin(2\pi(\theta_l(w)-\|w\|_2))\big]dw\\
        a &= \int w|\widehat{l}(w)|dw\\
        r &= \sqrt{n}+2\sqrt{\log\frac{24\pi^2(\sqrt{d}+7)^2\|g_{|\delta}\|_{L_1}}{\omega_g(\delta)}}\\
        p &:= 4\pi^2|\widehat{l}(w)|\cos(2\pi(\|w\|_2-b))\mathbb{I}[|b|\leq\|w\|\leq r],   
    \end{align*}

and for convenience create fake (weight, bias, sign) triples
\[(w,b,s)_{m+1}:=(0,c,m\, sign(c)), \quad (w,b,s)_{m+2}:=(a,0,m), \quad (w,b,s)_{m+3}:=(-a,0,-m).\]
Then
    \begin{align*}
    &|c|\leq M +2\sqrt{n}\|g_{|\delta}\|_{L_1}(2\pi\alpha^2)^{-d/2},\\
    &\|p\|_{L_1}\leq2\|g_{|\delta}\|_{L_1}\sqrt{\frac{(2\pi)^3 n}{(2\pi\alpha^2)^{n+1}}},
    \end{align*}
    and with probability at least $1-3\lambda$ over a draw of $((s_j, w_j, b_j))_{j=1}^m$ from $p$
    \[\sqrt{\Big\|g-\frac{1}{m}\sum_{j=1}^{m+3} s_j\sigma(\langle w_j, x\rangle+ b_j)\Big\|_{L(P)}} \leq 3\omega_{g}(\delta)+\frac{r \|p\|_{L_1}}{\sqrt{m}}\left[1+\sqrt{2\log(1/\lambda)}\right].\]
\end{theorem}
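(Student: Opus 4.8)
The plan is to approximate $g$ on the support of $P$ in three conceptual stages: smooth $g$ by Gaussian convolution, represent the smoothed function as an infinite-width ReLU network (an integral of activations against a signed density), and then subsample that density by Monte Carlo to obtain the finite-width random network $\frac{1}{m}\sum_{j} s_j \sigma(\langle w_j, x\rangle + b_j)$. First I would control the smoothing bias. Since $l = g_{|\delta} * G_\alpha$ and $G_\alpha$ has standard deviation $\alpha$, for $\|x\| \le 1$ the value $l(x)$ is an average of $g_{|\delta}(x-z)$ over $z$, which differs from $g(x)$ by at most $\omega_g(\|z\|)$ whenever $\|z\| \le \delta$, plus the contribution of the Gaussian tail $\{z : \|z\| > \delta\}$. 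The specific choice $\alpha = \delta/(\sqrt{\delta} + \sqrt{2\log(2M/\omega_g(\delta))})$ is exactly what makes this tail mass contribute at most $O(\omega_g(\delta))$, so that $\sup_{\|x\|\le 1}|g(x) - l(x)|$ is of order $\omega_g(\delta)$, accounting for the leading $3\omega_g(\delta)$ term in the bound.

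Next I would produce the integral representation of $l$. Writing $l$ through its inverse Fourier transform and using the radial decomposition $\widehat{l}(w) = |\widehat{l}(w)| e^{2\pi i \theta_l(w)}$, the real-valued $l(x)$ becomes an integral of cosine ridges $\cos(2\pi(\langle w, x\rangle - \theta_l(w)))$ against $|\widehat{l}(w)|$. The crucial lemma is that each such cosine ridge can be rewritten as an integral of ReLU units $\sigma(\langle w, x\rangle + b)$ over the bias $b$, with weight proportional to $\cos(2\pi(\|w\| - b))$; this follows from the identity $\sigma'' = \delta_0$, i.e.\ integrating a smooth one-dimensional profile twice against ReLUs reconstructs it up to an affine term. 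Carrying this out over the truncated region $\{(w,b) : |b| \le \|w\| \le r\}$ produces precisely the density $p(w,b) = 4\pi^2 |\widehat{l}(w)| \cos(2\pi(\|w\| - b)) \mathbb{I}[|b| \le \|w\| \le r]$, while the affine remainders from the double integration and from discarding $\{w : \|w\| > r\}$ assemble into the constant term $c$ and the two opposite linear terms $\pm a$, which are absorbed into the three fake triples $(0,c,m\,\mathrm{sign}(c))$ and $(\pm a, 0, \pm m)$. The radius $r = \sqrt{n} + 2\sqrt{\log(\cdots)}$ is taken large enough that, because $\widehat{l}(w) = \widehat{g_{|\delta}}(w)\, e^{-2\pi^2\alpha^2\|w\|^2}$ decays like a Gaussian, the truncated tail contributes a negligible amount uniformly on $\{x : \|x\| \le 1\}$.

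Then I would verify the two magnitude bounds and close with concentration. Using $|\widehat{g_{|\delta}}(w)| \le \|g_{|\delta}\|_{L_1}$ together with the Gaussian factor $e^{-2\pi^2\alpha^2\|w\|^2}$, the defining integral for $c$ is dominated by a Gaussian integral that yields the $(2\pi\alpha^2)^{-d/2}$ factor, giving $|c| \le M + 2\sqrt{n}\|g_{|\delta}\|_{L_1}(2\pi\alpha^2)^{-d/2}$; the same estimate applied to $\int |p|$ gives $\|p\|_{L_1} \le 2\|g_{|\delta}\|_{L_1}\sqrt{(2\pi)^3 n/(2\pi\alpha^2)^{n+1}}$. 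At this point $l(x)$ equals the expectation $\mathbb{E}_{(w,b,s)}[\,\|p\|_{L_1}\, s\, \sigma(\langle w,x\rangle + b)\,]$ over the normalized signed density $|p|/\|p\|_{L_1}$, plus the finite affine correction carried by the fake triples. Drawing $m$ i.i.d.\ triples makes $\frac{1}{m}\sum_j s_j \sigma(\langle w_j, x\rangle + b_j)$ an unbiased estimator of $l$, and since each summand is bounded by $O(r)$ on $\{x : \|x\| \le 1\}$, a uniform vector-valued concentration argument (Rademacher symmetrization, or McDiarmid applied to the $L(P)$-norm functional) bounds the deviation by $\frac{r\|p\|_{L_1}}{\sqrt{m}}(1 + \sqrt{2\log(1/\lambda)})$ with probability at least $1 - 3\lambda$. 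Combining the smoothing bias with this deviation yields the stated inequality.

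The hard part will be the ReLU integral representation step: converting the Fourier/cosine expansion of $l$ into an integral purely over ReLU activations while rigorously tracking every boundary term produced by the double integration by parts (which become the constant and linear fake triples) and justifying that truncating the weight norm at $r$ and the bias to $|b| \le \|w\|$ loses only a negligible, uniformly controlled amount. By contrast, the Gaussian-smoothing bias is routine modulus-of-continuity bookkeeping, and the final subsampling is a standard Monte Carlo concentration estimate.
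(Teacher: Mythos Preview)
The paper does not prove this statement at all: it is quoted verbatim as Theorem~E.1 of \cite{ji2019neural} and then simply applied (by taking $x=(s,a)$, $g=\widehat{Q}^t+\eta_t^{-1}\nabla h(\pi^t)$, and $f^{t+1}$ the sampled network) to bound the actor error. There is therefore no ``paper's own proof'' to compare your proposal against.

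For what it is worth, your three-stage plan---Gaussian smoothing to replace $g$ by $l=g_{|\delta}*G_\alpha$ with a modulus-of-continuity bias, Fourier inversion followed by the $\sigma''=\delta_0$ trick to write $l$ as an integral of ReLU ridge functions against the signed density $p$ (with the affine leftovers absorbed into the three fake triples), and finally Monte Carlo subsampling with a bounded-difference concentration inequality---is exactly the architecture of the proof in \cite{ji2019neural}, and your identification of the integral representation and its boundary bookkeeping as the delicate step is accurate. But none of that appears in the present paper; here the theorem is a black box.
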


We can then characterize the error of the actor by choosing $x = (s,a)$, $g = \widehat{Q}^t + \eta_t^{-1}\nabla h(\pi^t)$, and $f^{t+1} = \frac{1}{m}\sum_{j=1}^{m+3} s_j\sigma(\langle w_j, x\rangle+ b_j)$. We can then make the actor error arbitrarily small by tuning the network width $m$ and $\delta$ (note that, since both $\widehat{Q}^t$ and $f^t$ are continuous neural networks, $g$ is a continuous function).

\end{document}